\let\origcitation\citation
  \def\citation#1{\g@addto@macro\mycites{#1^^J}\origcitation{#1}}}
\write\citeout{\mycites}\immediate\closeout\citeout}
\DeclareMathOperator*{\minimize}{minimize}
\DeclareMathOperator*{\maximize}{maximize}
\DeclareMathOperator*{\argmax}{argmax}
\DeclareMathOperator*{\argmin}{argmin}
\newcommand{\balpha}{\bm{\alpha}}
\newcommand{\bbR}{\mathbb{R}}
\newcommand{\uno}{\mathbf{1}}
\newcommand{\by}{\mathbf{y}}
\newcommand{\bx}{\mathbf{x}}
\newcommand{\bz}{\mathbf{z}}
\newcommand{\bd}{\mathbf{d}}
\newcommand{\be}{\mathbf{e}}
\newcommand{\bu}{\mathbf{u}}
\newcommand{\bK}{\bm{K}}
\newcommand{\bZ}{\bm{Z}}
\newcommand{\bI}{\bm{I}}
\newcommand{\cI}{\mathcal{I}}
\newcommand{\cZ}{\ensuremath{\mathcal{Z}}}
\newcommand{\cO}{\ensuremath{\mathcal{O}}}
\newcommand{\tset}{\ensuremath{\{(\bx_i,y_i): \bx_i \in \mathcal{X}, y_i \in \{+1,-1\}, i \in \mathcal{I}\}}}
\newcommand{\st}{\ensuremath{\mbox{subject to}}}
\newtheorem{proposition}{Proposition}
\newtheorem{lemma}{Lemma}
\theoremstyle{definition}
\newtheorem{definition}{Definition}
\newtheorem{remark}{Remark}
\begin{document}

\title{A Novel Frank-Wolfe Algorithm. Analysis and Applications to Large-Scale SVM Training}

\author[1]{H\'ector Allende}
\author[2]{Emanuele Frandi}
\author[1]{Ricardo  \~Nanculef}
\author[3]{Claudio Sartori}

\affil[1]{\small{Department of Informatics, Universidad T\'ecnica Federico Santa Mar\'ia, Chile \texttt{\{hallende,jnancu\}@inf.utfsm.cl}}}
\affil[2]{\small{Department of Science and High Technology, University of Insubria, Italy \texttt{emanuele.frandi@uninsubria.it}}}
\affil[3]{\small{Department of Computer Science and Engineering, University of Bologna, Italy, \texttt{claudio.sartori@unibo.it}}}
\date{}

\maketitle

\begin{abstract}
Recently, there has been a renewed interest in the machine learning community for variants of a sparse greedy approximation procedure for concave optimization known as {the Frank-Wolfe (FW) method}. 
In particular, this procedure has been successfully applied to train large-scale instances of non-linear Support Vector Machines (SVMs). Specializing FW to SVM training has allowed to obtain efficient algorithms but also important theoretical results, including convergence analysis of training algorithms and new characterizations of model sparsity. 

In this paper, we present and analyze a novel variant of the FW method based on a new way to perform away steps, a classic strategy used to accelerate the convergence of the basic FW procedure. 
Our formulation and analysis is focused on a general concave maximization problem on the simplex. However, the specialization of our algorithm to quadratic forms is strongly related to some classic methods in computational geometry, namely the Gilbert and MDM algorithms. 

On the theoretical side, we demonstrate that the method matches the guarantees in terms of convergence rate and number of iterations obtained by using classic away steps. In particular, the method enjoys a linear rate of convergence, a result that has been recently proved for MDM on quadratic forms.  

On the practical side, we provide experiments on several classification datasets, and evaluate the results using statistical tests. Experiments show that our method is faster than the FW method with classic away steps, and works well even in the cases in which classic away steps slow down the algorithm. Furthermore, these improvements are obtained without sacrificing the predictive accuracy of the obtained SVM model.
\end{abstract}


\section{Introduction}
In this paper we present a novel variant of the Frank-Wolfe (hereafter FW) method \cite{wolfe1954,Jaggi2013ICMLa}, designed to deal with large-scale instances of the following problem:
\begin{equation}\label{eq:generic-concave-on-the-simplex}
\begin{aligned}
\maximize_{\balpha} & \; g(\balpha) \;\;\; \mbox{subject to} &  \balpha \in S := \left\{\balpha \in \mathbb{R}^m : \sum\nolimits_i \balpha_i = 1\, ,\, \balpha_i \geq 0\right\} \,.
\end{aligned}
\end{equation}

This problem encompasses several models used in machine learning \cite{clarkson08coresets,GartnerJ09}, including hard-margin Support Vector Machines (SVMs) \cite{Keerthi2000} and $L_2$-loss SVMs ($L_2$-SVMs) for binary classification, regression and novelty detection \cite{coreSVMs05tsang,coreSVMs-generalized06tsang}. 

\paragraph{FW Methods and Focus of this Paper} It has been noted by researchers in different fields that approximate solutions to problem (\ref{eq:generic-concave-on-the-simplex}) can be obtained using quite simple iterative procedures. In \cite{yildirim08}, for instance, Yildirim presents two iterative algorithms for the task of approximating the Minimum Enclosing Ball (MEB) of a set of points. In \cite{Damla06linearconvergence}, Ahipasaoglu \textit{et al}. propose similar methods to solve Minimum Volume Enclosing Ellipsoid problems. In \cite{Zhang03Greedy}, Zhang studies similar techniques for convex approximation and estimation of mixture models. All these methods are nowadays identified as variants of a general approximation procedure for maximizing a differentiable concave function on the simplex, which traces back to Frank and Wolfe \cite{wolfe1954, wolfe1970,GuelatMarcotte} and has been recently analyzed by Clarkson \cite{clarkson08coresets} and Jaggi \cite{Jaggi2013ICMLa} under a modern perspective. 

In a nutshell, each iteration of the FW method moves the solution towards the direction along which the linearized objective function increases most rapidly but is still feasible. 
The procedure is related to the idea of \emph{coreset}, coined in the context of computational geometry and denoting a subset of data $C_{\varepsilon}$ which suffices to obtain an approximation to the solution on the whole dataset up to a given precision $\varepsilon$. Clarkson's framework unifies diverse results regarding the existence of small coresets for different instances of problem (\ref{eq:generic-concave-on-the-simplex}). These ideas were used in \cite{GartnerJ09} to characterize the sparsity of SVMs and the convergence properties of training algorithms for geometric formulations of the problem. 

The algorithm studied in this paper is obtained by incorporating a new type of \emph{away step} into the basic FW method. Loosely speaking, instead of moving the solution towards a direction along which the linearized objective function increases, an away step moves the solution away from a direction along which the linearized objective function decreases. This strategy was suggested by Wolfe in \cite{wolfe1970} to improve the convergence rate of the FW method, leading to a variant of the original algorithm called Modified Frank-Wolfe method (hereafter MFW). It has been demonstrated that MFW is linearly convergent under some general assumptions on the properties of problem  (\ref{eq:generic-concave-on-the-simplex}).
However, we have found in \cite{IJPRAI11} that classic away steps do not improve significantly the running times of the FW method on machine learning problems. A similar conclusion was obtained by Ouyang and Gray in \cite{StochasticFW10}. In contrast, our approach experimentally improves on other FW methods and shows theoretical guarantees (e.g. convergence rate) at least as good as those of MFW.  

\paragraph{Applications to SVM Learning} Training non-linear SVMs on large datasets is challenging \cite{smo-second-order05fan}. Effective Interior Point Methods can be devised under some special circumstances, such as kernels which admit low-rank factorizations \cite{low-rank-representations02fine, Gondzio}. However, these methods are not suitable for large-scale problems in a general scenario, mainly due to memory constraints: a general interior point method needs \cO($m^2$) memory and \cO($m^3$) time for matrix inversions, and both are prohibitive even for medium-scale problems. Among the traditional methods devised to cope with this problem, Active Set methods \cite{active-set-implementation06scheinberg,joachims99making-large-scale-svms-practical,advances99} and Sequential Minimal Optimization (SMO) \cite{platt99smo-seminal, smo-second-order05fan} are well-known alternatives among practitioners. Indeed, these are the algorithms of choice in the widely known libraries SVMLight \cite{SVMLIGHT} and LIBSVM \cite{SVMLIB}, respectively. For the linear kernel case, 
Stochastic Gradient Descent (SGD) \cite{bottou-mlss-2004,Bottou07SGD}, specialized sub-gradient methods like Pegasos \cite{Shalev-Shwartz11Pegasos} and Stochastic Dual Coordinate Ascent (SDCA) \cite{hsieh2008dual,shalev2012stochastic} have lately gained popularity in the community as approximate but efficient alternatives to the classic solutions on large-scale problems \cite{yuan2012recent}.   

In the non-linear case, effective methods to deal with large datasets have been recently devised by focusing on formulations which fit problem (\ref{eq:generic-concave-on-the-simplex}) and then applying FW methods. The first work to specialize a variant of the FW method to SVM training is probably due to Tsang \textit{et al}. \cite{coreSVMs05tsang}. Given a labelled set of examples $\tset$, where $\mathcal{X}$ denotes the input space and $\mathcal{I}=\{1,\dots,m\}$ an index set, they adopt the so-called $L_2$-SVM formulation, where the model is built by solving the following optimization problem 
\begin{equation}\label{eq:L2DUAL}
\begin{aligned}
\maximize_{\balpha} & \;\; g(\balpha) = - \balpha^{T}\bK \balpha\ \;\;\;\; \mbox{subject to} & \; \sum\nolimits_i \balpha_i = 1\, , \, \balpha_i \geq 0 \; ,
\end{aligned}
\end{equation}
where $\bK_{i,j}=y_iy_j k(\bx_i,\bx_j) + y_iy_j +\delta_{i,j}/C$, $k: \mathcal{X} \times \mathcal{X} \rightarrow \mathbb{R}$ is the kernel function used in the SVM model and $C$ is the regularization parameter \cite{coreSVMs05tsang,CIARP,IJPRAI11}. Problem (\ref{eq:L2DUAL}) clearly fits problem (\ref{eq:generic-concave-on-the-simplex}). This formulation is adopted mainly because of efficiency: by using the functional of Eqn. (\ref{eq:L2DUAL}), it is possible to exploit the framework introduced in \cite{coreSVMs05tsang}, and further developed in \cite{clarkson08coresets}, to solve the learning problem more easily\footnote{Strictly speaking, \cite{coreSVMs05tsang} is a special case of a FW method which does not address the general form of problem (\ref{eq:L2DUAL}), as a normalization constraint on the quadratic form is required (see Sections \ref{sec_sparsity_FW} and \ref{adaptations_section}).}. Note also that in problem (\ref{eq:L2DUAL}) $\bK$ is positive definite
\footnote{This is easily seen by writing $\bK$ as the sum of two positive semi-definite matrices and a multiple of the identity, $\bK=\by\by^{T} \odot \tilde{\bK} + \by\by^{T} + \tfrac{1}{C} \bI$, where $\by$ is the column vector whose components are the labels $y_i$, $\tilde{\bK}$ is the Gram matrix $\tilde{\bK}_{i,j}= k(\bx_i,\bx_j)$ and $\odot$ is the Hadamard or componentwise product.} for $0 < C < \infty$ and thus $g(\cdot)$ is strictly concave. 

Borrowing a coreset-based algorithm from computational geometry \cite{BadoiuClarkson08optimal-coresets}, the authors obtain that the total number of iterations needed to identify a coreset, i.e. an approximation to the $L_2$-SVM model up to an arbitrary precision $\epsilon$, is bounded by $\cO(1/\varepsilon)$, independently of the size of the dataset. From the iterative structure the algorithm, it follows easily that the size of the coreset is also bounded by $\cO(1/\varepsilon)$. A similar result regarding linear SVMs trained with SDCA has been recently demonstrated in \cite{shalev2012stochastic}.

The latter properties imply in particular that the size of the set of examples required to represent the (approximate) solution, i.e. the number of support vectors in the model, is also independent from the size of the dataset, an improvement on previous lower bounds for the support set size, such as those in \cite{steinwart2003sparseness}, where the bound grows linearly in the size of the dataset. The obtained training algorithm also exhibits linear running times in the number of examples. These are remarkable results in the context of non-linear SVM models, where the support set needs to be explicitly stored in memory to implement predictions and determines the cost of a classification decision in terms of testing time. In addition, a combination of this procedure with certain sampling techniques allows to obtain sub-linear time approximation algorithms \cite{coreSVMs05tsang,CIARP}\footnote{To be rigorous, the probability of identifying a good point in a given iteration depends on the size of the sampling. A sub-linear procedure guaranteeing a constant success probability is studied in \cite{ClarksonHW12}, though it seems that results on the non-linear case are provided only for some kernels.}. In practice, the method was found to be competitive with most well-known SVM software using non-linear kernels \cite{coreSVMs05tsang,coreSVMs-generalized06tsang}. 

Several other papers have recently stressed the efficiency of FW and coreset-based methods in machine learning. In \cite{CIARP} and \cite{Quaderni} the authors investigate the direct application of the FW method to large-scale non-linear SVM training, demonstrating that running times of \cite{coreSVMs05tsang} can be significantly improved as long a minor loss in accuracy is acceptable. Variations of the algorithm based on geometrical reformulations of the learning problem \cite{Kumar2011, GartnerJ09}, stochastic variants of the method \cite{StochasticFW10}, and applications to SVM training on data streams \cite{Wang2010OnlineCore, Rai09Streaming-SVM-usingMEB} and structural SVMs \cite{Jaggi2013ICMLb} have also been proposed.

\paragraph{Contributions} We present a FW method endowed with a new type of optimization step devised to overcome the difficulties observed with the classic MFW approach, while preserving the intuition and benefits behind the introduction of away steps. On the theoretical side, we formulate and analyze the algorithm for the general case of problem (\ref{eq:generic-concave-on-the-simplex}), demonstrating that the method matches the guarantees in terms of convergence rate and number of iterations obtained by using classic away steps. In particular, we show that the method converges linearly to the optimal value of the objective function, and achieves a predetermined accuracy $\varepsilon$ (primal-dual gap) in at most $\cO(1/\varepsilon)$ iterations. Focusing on quadratic objectives, it turns out that the method is strongly related to the Gilbert and Mitchell-Demnyanov-Malozemov (MDM) algorithms, two classic methods in computational geometry \cite{gilbert1966,mitchell1974finding}. Such methods are well-known in machine learning and their properties, in particular their rate of convergence, have been the focus of recent research \cite{Lopez2008,lopez2012convergence}.  

On the practical side, we specialize the algorithm to SVM training and perform detailed experiments on several classification problems. We conclude that our algorithm improves the running times of existing FW approaches without any statistically significant difference in terms of prediction accuracy. In particular, we show that the method is faster than the FW and MFW methods, while MFW is not statistically faster than FW. In addition,
we show that the method is faster than or equal to the FW method when MFW is significantly slower, i.e. when classic away steps fail. In addition, the method is competitive with MFW when FW is significantly slower, i.e., if classic away steps work, our algorithm works as well. Thus, the method represents a robust alternative to implement away steps, enjoying strong theoretical guarantees and providing significant improvements in practice.

\paragraph{Organization} The paper is organized as follows. In Section 2 we give an overview of FW methods and introduce the basic concepts required for their analysis. In Section 3 we present the new method, including a minor variant, and provide some details about its specialization to SVMs. The analysis of convergence is provided in Section 4. In Section 5, we discuss the relation of the proposed method to some classic geometric approaches for a quadratic objective. Experiments on SVM problems are presented in Section 6. Finally, Section 7 closes the paper with some concluding remarks. In addition, some technical results required for the proofs of Section 4 are reported in the Appendix.

\paragraph{Notation} An optimal solution for problem (\ref{eq:generic-concave-on-the-simplex}) is denoted $\balpha^\ast$. A sequence of approximations $\balpha_0,\balpha_1, \ldots, \balpha_k$ to a solution of problem (\ref{eq:generic-concave-on-the-simplex}) is abbreviated $\{\balpha_k\}_k$. The set of indices ${1,2,\ldots,m}$ is denoted $[m]$. The \emph{face} $S_{\cI}$ of the unit simplex $S$ corresponding to a set of indices $\cI \subset [m]$ is the subset of points $\balpha \in S$ such that $\balpha_j =0 \; \forall j \notin \cI$. The term \emph{active face} indicates the face corresponding to the non-zero indices, $\cI_k$, of the current solution $\balpha_{k}$. The term \emph{optimal face}, denoted by $S^{\star}$, indicates the face corresponding to an optimal solution $\balpha^\ast$. The vector $\be_i$ denotes the $i$-th vector of the canonical basis.

\section{Frank-Wolfe Methods}\label{bakcground_section}

The FW method computes a sequence of approximations $\{\balpha_k\}_k$ to a solution of problem (\ref{eq:generic-concave-on-the-simplex}) by iterating until convergence the following steps. First, a linear approximation of $g(\cdot)$ at the current iterate $\balpha_k$ is performed in order to find an ascent direction $\mathbf{d}_k^{\mbox{\tiny FW}}= (\mathbf{u}_k - \bm{\alpha}_k)$, with
\begin{equation}\label{eq:FW-linear-app}
\mathbf{u}_k \in \argmax_{\mathbf{u} \in S} \psi_k(\mathbf{u}) := g(\bm{\alpha}_k) + (\mathbf{u} - \bm{\alpha}_k)^T \nabla g(\bm{\alpha}_k) \, . 
\end{equation}
Since $\mathbf{u}_k$ lies in $S$, it is easy to see that the linear approximation step reduces to $\mathbf{u}_k=\be_{i^{\ast}}$ where $i^{\ast}$ is the largest coordinate of the gradient, i.e.   $i^{\ast} \in \argmax_{i} \nabla g(\bm{\alpha}_k)_i$. The iterate $\balpha_k$ is then moved towards $\be_{i^{\ast}}$, seeking for the best feasible improvement of the objective function. The procedure is summarized in Algorithm \ref{alg:base-fw}. In the rest of this paper we refer to $\be_{i^{\ast}} \in S$ as the \emph{ascent vertex} used by the method.   

\begin{algorithm}
\caption{FW method for problem (\ref{eq:generic-concave-on-the-simplex}).\label{alg:base-fw}}
Compute an initial estimate $\balpha_0$.\\%
Set $\cI_{0}=\{i : \alpha_{0,i} \neq 0\}$.\\
\For{$k=0,1,\ldots$}{%
Search for $i^{\ast} \in \argmax_{i} \nabla g(\balpha_k)_i$\, and define $\mathbf{d}_k^{FW} = \be_{i^{\ast}} - \bm{\alpha}_k$.\\
Perform a line-search to find $\lambda_k \in  \argmax_{\lambda \in [0,1]}  g(\bm{\alpha}_k + \lambda \mathbf{d}_k^{FW})\label{linesearch}$.\\
Update the iterate by $\bm{\alpha}_{k+1}=  \bm{\alpha}_k + \lambda_k \mathbf{d}_k^{FW} = (1-\lambda_k)\bm{\alpha}_k + \lambda_k \be_{i^{\ast}}$.\\
Set $\cI_{k+1}=\cI_{k} \cup \{i^{\ast}\}$.\\
}
\end{algorithm} 

As discussed below, the procedure can be stopped when $g(\balpha_k)$ is ``close enough'' to the optimum.

\subsection{Optimality Measures and Stopping Condition}

It can be shown that the FW method is globally convergent under rather weak assumptions on the properties of the objective function \cite{GuelatMarcotte,wolfe1954}, which are guaranteed to hold for the SVM problem (\ref{eq:L2DUAL}) \cite{yildirim08,IJPRAI11}. In addition, it can be shown that the iterates of this procedure satisfy 
\begin{equation}\label{eq:iterates-quality-primal}
\Delta^p(\balpha_k) := g(\balpha^{\ast}) - g(\balpha_k) \leq \frac{4C_g}{k+3} \ ,
\end{equation} 
where $C_g$ is a constant related to the second derivative of $g$ \cite{clarkson08coresets}. This convergence rate is slow compared to other methods.  However, the simplicity of the procedure implies that the amount of computation per iteration is usually very small. This kind of tradeoff can be favorable for large-scale applications, as testified for example by the widespread adoption of the SMO method in the context of SVMs \cite{platt99smo-seminal, smo-second-order05fan}. 

When $g(\balpha)$ is continuously differentiable, the Wolfe dual of problem (\ref{eq:generic-concave-on-the-simplex}) is
\begin{equation}\label{eq:dual-of-generic}
\minimize_{\balpha} w(\balpha) \,\, , \,\, \mbox{with} \,\,\,\, w(\balpha)= g(\balpha) +  \max_{i} \nabla g(\balpha)_i - \balpha^{T}\nabla g(\balpha) \ .
\end{equation}
As shown in \cite{clarkson08coresets}, the strong duality condition 
\begin{equation}\label{eq:strong-duality}
g(\balpha) \leq g(\balpha^{\ast})=w(\balpha^{\ast}) \leq w(\balpha) 
\end{equation}
holds for any feasible $\balpha$. Thus, another reasonable measure of optimality for the Frank-Wolfe iterates is the so-called \emph{primal-dual gap} 
\begin{equation}\label{eq:primal-dual-gap}
\Delta^d({\balpha}) := w(\balpha) - g(\balpha) = \max_{i} \nabla g(\balpha)_i - \balpha^{T}\nabla g(\balpha) \ . 
\end{equation}

Up to a multiplicative constant ($4C_g$), the primal-dual gap in Eqn. (\ref{eq:primal-dual-gap}) and the primal measure of approximation in Eqn. (\ref{eq:iterates-quality-primal}) are the metrics employed in \cite{clarkson08coresets} to analyze the convergence of Algorithm \ref{alg:base-fw}. The advantage of $\Delta^d(\balpha_k)$ with respect to $\Delta^p(\balpha_k)$ is that the former does not depend on the optimal value of the objective function. Therefore, $\Delta^d(\balpha_k)$ can be explicitly monitored during the execution of the algorithm and can be adopted to implement a stopping condition for Algorithm \ref{alg:base-fw}. In this paper, we adopt this measure to stop the FW method and any of its variants. That is, the algorithms are terminated when
\begin{equation}\label{eq:STOPPING-CONDITION}
\Delta^d({\balpha}_k) = \max_{i} \nabla g(\balpha_k)_i - \balpha_k^{T}\nabla g(\balpha_k) \leq \varepsilon \ ,
\end{equation}
where $\varepsilon > 0$ is a given tolerance parameter. Note that the strong duality condition implies $\Delta^p(\balpha_k) \leq \Delta^d(\balpha_k)$. Therefore, if the algorithm stops at iteration $k$ we also have $\Delta^p(\balpha_k) \leq \varepsilon$.  

Note also that Eqn. (\ref{eq:iterates-quality-primal}) implies that the FW method finds a solution fullfiling $\Delta^p(\balpha_k) \leq \varepsilon$ in at most $K \sim\cO(1/\varepsilon)$ iterations. Clarkson has recently shown that we also have $\Delta^d({\balpha}) \leq \varepsilon$ after at most $\tilde{K} \sim\cO(1/\varepsilon)$ iterates \cite{clarkson08coresets}. Thus, the solution found by the FW method using the stopping condition (\ref{eq:STOPPING-CONDITION}) is guaranteed to be ``close'' to the optimum both primally and dually after $\cO(1/\varepsilon)$ iterations. 

In the analysis presented in this paper, we make use of the following notion of approximation quality introduced in \cite{Damla06linearconvergence}.
\begin{definition} A feasible solution $\balpha$ to problem (\ref{eq:generic-concave-on-the-simplex}) is said a \emph{$\Delta$-approximate solution} if
\begin{align}\label{eq:delta-approximate-condition}
&\Delta^d({\balpha}) \leq \Delta\\
\mbox{and} \, \, \, \, \, \,  &\Delta^s_i({\balpha}) := \nabla g(\balpha)_i - \balpha^{T}\nabla g(\balpha) \geq - \Delta, \;\; \forall i: \alpha_i > 0 \, \ . 
\end{align}
\end{definition} 
The first condition guarantees that a $\Delta$-approximate solution is ``close'' to the optimum both primally and dually. In addition, the second condition ensures that $-\Delta \leq \Delta^s_i({\balpha}) \leq \Delta$ for the active face, that is, the primal-dual gap computed on each active coordinate $i: \balpha_i > 0$ is not far from the largest gap computed among all the coordinates of the gradient. This implies also that the solution $\balpha_{k}$ is ``almost'' optimal in the face of the simplex defined by the non-zero indices.

\subsection{Sparsity of the FW solutions and Coresets}\label{sec_sparsity_FW}

On of the main points of interest for the FW method is the sparsity of the solutions it finds. It should be observed that, in contrast to other methods such as projected or reduced gradient methods, Algorithm \ref{alg:base-fw} modifies only one coordinate of the previous iterate at each step. If the starting solution has $K_0$ non-zero coordinates, iterate $\balpha_k$ has at most $K_0 + k$ non-zero entries. Therefore, our previous remarks about the convergence of the FW method show that there exist solutions with space-complexity $K_0 + \cO(1/\varepsilon)$ that are good approximations for problem (\ref{eq:generic-concave-on-the-simplex}), even if $m$ (the dimensionality of the feasible space and the number of data points in SVM problems) is much larger.

The above properties are essential for in the context of training non-linear SVMs. In this case, each non-zero coordinate
in $\balpha_k$ represents a support training example (a document, image or protein profile) that
needs to be explicitly stored in memory during the execution of the algorithm. In addition, the test complexity of non-linear SVMs is proportional to the number of non-zero
coordinates in $\balpha_k$, which determines the cost of each iteration in training time, and the cost of a classification decision in testing time.

Existence of sparse approximate solutions for problem (\ref{eq:generic-concave-on-the-simplex}) can be linked to the idea of $\varepsilon$-\emph{coreset}, first described for the MEB and other geometrical problems \cite{yildirim08}. For $\varepsilon > 0$, an $\varepsilon$-coreset $P' \subset P$ has the property that if the smallest ball containing $P'$ is expanded by a factor of $1 + \varepsilon$, then the resulting ball contains $P$. That is, if the problem is solved on $P'$, the solution is ``close'' to the solution on $P$. The existence of $\varepsilon$-coresets of size $\cO(1/\varepsilon)$ for the MEB problem was first demonstrated by B\u{a}doiu and Clarkson in \cite{BadoiuClarkson03smaller-coresets,BadoiuClarkson08optimal-coresets}. Note that in large-scale applications $1/\varepsilon$ can be much smaller than the cardinality of $P$.

In \cite{clarkson08coresets}, Clarkson provides a definition of coreset that applies in the general setting of problem (\ref{eq:generic-concave-on-the-simplex}). Basically, a $\varepsilon$-coreset for problem (\ref{eq:generic-concave-on-the-simplex}) is a subset of indices spanning a face of $S$ on which we can compute a good approximate solution. The existence of small $\varepsilon$-coresets implies the existence of sparse solutions which are optimal in their respective active faces. The practical consequence of this result would be the possibility of solving large instances of (\ref{eq:generic-concave-on-the-simplex}) working with a small set of variables of the original problem. 

\begin{definition} \label{def:coreset} An $\varepsilon$-coreset for problem (\ref{eq:generic-concave-on-the-simplex}) is a set of indices $\cI \subset [m]$ such that the solution $\balpha_{\cI}^\ast$ to the reduced problem 
\begin{equation}\label{eq:reduced-problem}
\begin{aligned}
\maximize_{\balpha} & \;\; g(\balpha) \;\;\;\; \mbox{subject to} & \; \balpha \in S_{\cI} := \{\balpha \in S: \balpha_i = 0, \forall i \notin \cI\} \;\; .
\end{aligned}
\end{equation}
satisfies $\Delta^d(\balpha_{\cI}^\ast) \leq \varepsilon$. 
\end{definition} 

As discussed in \cite{clarkson08coresets}, the FW method is not guaranteed to find a $\varepsilon$-coreset after $\cO(1/\varepsilon)$ iterations for problem (\ref{eq:generic-concave-on-the-simplex}). It has been demonstrated that FW is able to find such a coreset in some special cases, e.g., in polytope distance problems \cite{GartnerJ09}. However, in general, $\cO(1/\varepsilon^2)$ iterations may be required. Instead, the computationally intensive modification presented in Algorithm \ref{alg:bc-fw}, generally known as the \textit{fully corrective} variant of FW, does the job. 
\begin{algorithm}
\caption{\label{alg:BCFW-steps} Fully-corrective FW method for problem (\ref{eq:generic-concave-on-the-simplex}).\label{alg:bc-fw}}
Compute an initial estimate $\balpha_0$.\\%
Set $\cI_{0}=\{i : \alpha_{0,i} \neq 0\}$.\\
\For{$k=0,1,\ldots$}{%
Search for $i^{\ast} \in \argmax_{i} \nabla g(\balpha_k)_i$\,.\\ 
Set $\cI_{k+1}=\cI_{k} \cup \{i^{\ast}\}$.\\
Solve the reduced problem (\ref{eq:reduced-problem}) with $\cI=\cI_k$. 
}
\end{algorithm}

Note that Algorithm \ref{alg:BCFW-steps} needs to solve an optimization problem of increasing size at each iteration. This can be considered a generalized version of the well-known B\u{a}doiu-Clarkson (BC) method to compute MEBs in computational geometry and, up to our knowledge, corresponds to the first variant of the FW method applied to SVM problems \cite{coreSVMs05tsang}. 

\subsection{Boosting the Convergence using Away-steps}

It is well-known that the FW method often exhibits a tendency to stagnate near the solution $\alpha^\ast$, resulting in a slow convergence rate \cite{GuelatMarcotte}. As discussed in \cite{yildirim08,IJPRAI11}, this problem can be explained geometrically. Near the solution, the gradient at $\balpha_k$ has a tendency to become nearly orthogonal to the face of the simplex spanned by $\cI_k$ (the non-zero coordinates of $\balpha_k$). Therefore, very little improvement can be achieved by moving $\balpha_k$ towards the ascent vertex $\bu_k$. However, since the solution is not optimal, it is reasonable to think that the solution can be improved working \emph{on the face} spanned by $\cI_k$. Actually, Algorithm \ref{alg:BCFW-steps} works on $\cI_k$ until approximate optimality before exploring the next ascent direction. 

It can be shown that the convergence of the FW method can be boosted by introducing a new type of optimization step. In short the idea is that, instead of moving \emph{towards} the point $\mathbf{u}_k$ maximizing the local linear approximation $\psi_k(\cdot)$ of $g(\cdot)$, we can move \emph{away} from the point of the current face $\mathbf{v}_k$ minimizing $\psi_k(\cdot)$. 
At each iteration, a choice between these two options is made by determining which of the directions (moving towards $\mathbf{u}_k$ or moving away from $\mathbf{v}_k$) is more promising.

Since the point $\mathbf{v}_k$ must lie in the current active face, it is easy to see that the linear approximation step reduces to $\mathbf{v}_k=\be_{j^{\ast}}$, where $j^{\ast}$ is the smallest active coordinate of the gradient, i.e.,  $j^{\ast} \in \argmin_{j \in \cI_k} \nabla g(\bm{\alpha}_k)_j$. The whole procedure, known as the \textit{Modified Frank-Wolfe} (MFW) method, is summarized in Algorithm \ref{alg:mod-fw}. In the rest of this paper, we refer to $\be_{j^{\ast}} \in S$ and $\mathbf{d}_k^{\mbox{\tiny A}} =  (\bm{\alpha}_k - \be_{j^{\ast}})$ as the \emph{descent vertex} and the \emph{away direction} used by the method respectively.  

\begin{algorithm}
\caption{\label{alg:MFW-steps} MFW method for problem (\ref{eq:generic-concave-on-the-simplex}).\label{alg:mod-fw}}
Compute an initial estimate $\balpha_0$.\\%
Set $\cI_{0}=\{i : \alpha_{0,i} \neq 0\}$.\\
\For{$k=0,1,\ldots$}{%
Search for $i^{\ast} \in \argmax_{i} \nabla g(\balpha_k)_i$ and define $\mathbf{d}_k^{FW} = \be_{i^{\ast}} - \bm{\alpha}_k$\,.\\ 
Search for $j^{\ast} \in \argmin_{j \in \cI_{k}} \nabla g(\balpha_k)_j$ and define $\mathbf{d}_k^{A} =  \bm{\alpha}_k - \be_{j^{\ast}}$\, .\\
\eIf{$\nabla g(\bm{\alpha}_k)^T \mathbf{d}_k^{FW} \geq \nabla g(\bm{\alpha}_k)^T \mathbf{d}_k^{A}$}{
Perform a line-search to find $\lambda_{\mbox{\tiny fw}} \in \argmax_{\lambda \in [0,1]}  g(\bm{\alpha}_k + \lambda \mathbf{d}_k^{FW})$. \\
Perform the FW step $\balpha_{k+1}= \balpha_k + \lambda_{\mbox{\tiny fw}} (\be_{i^{\ast}} - \balpha_k)$. \\
Update $\cI_k$ by $\cI_{k+1}= \cI_{k} \cup \{i^*\}$.\\
}{
Perform a line-search to find $\lambda_{\mbox{\tiny away}} \in \argmax_{\lambda \in [0,1]}  g(\bm{\alpha}_k + \lambda \mathbf{d}_k^{A})$. \\
Clip the line-search parameter, $\lambda_{\mbox{\tiny away}\star} = \max(\lambda_{\mbox{\tiny away}},\alpha_{k,j^{\ast}}/(1-\alpha_{k,j^{\ast}}))$\\
Perform the AWAY step $\balpha_{k+1}= \balpha_k + \lambda_{\mbox{\tiny away}\star} (\balpha_k - \be_{j^{\ast}})$.\\
Set $\cI_{k+1}= \cI_{k} \cup \{i^*\}$.\\
If $\lambda_{\mbox{\tiny away}\star} = \alpha_{k,j^{\ast}}$, $\cI_{k+1}= \cI_{k+1} \setminus \{j^*\}$.\\
}
}
\end{algorithm}

In contrast to the FW method, for which only a sub-linear rate of convergence can be expected in general \cite{GuelatMarcotte,yildirim08}, it has been shown that MFW asymptotically exhibits linear convergence to the solution of problem (\ref{eq:generic-concave-on-the-simplex}) under some assumptions on the form of the objective function and the feasible set \cite{GuelatMarcotte,yildirim08,Damla06linearconvergence}. In addition, the MFW algorithm has the potential to compute sparser solutions in practice, since in contrast to the FW method it allows reducing the coordinates of $\balpha_k$ at each step.

\subsection{Adaptations to SVMs}\label{adaptations_section}

In the context of SVM learning, the work of Tsang \emph{et al}. in \cite{coreSVMs05tsang} was arguably the first to point out the properties of the algorithms than can be obtained by applying FW methods to formulations fitting problem (\ref{eq:generic-concave-on-the-simplex}). Their work relies on the equivalence between the SVM problem (\ref{eq:L2DUAL}) and a MEB problem, which holds under a normalization assumption on the kernel function employed in the model \cite{coreSVMs05tsang,coreSVMs-generalized06tsang}. Exploiting this equivalence, and adapting the B\u{a}doiu-Clarkson algorithm for computing a MEB to the problem of training non-linear SVMs, an algorithm called Core Vector Machine (CVM) is obtained, which enjoys remarkable theoretical properties and competitive performance in practice \cite{coreSVMs05tsang}. 

First, the number of support vectors of the model obtained by the CVM is $K_0 + \cO(1/\varepsilon)$ where $K_0$ is a constant and $\varepsilon$ is the tolerance parameter of the method. Therefore, the space complexity of the model is independent of the size and dimensionality of the training set. Second, the number of iterations of the algorithm before termination is also $\cO(1/\varepsilon)$, independent of the size and dimensionality of the training set. To determine the overall time complexity of this method, we note that Algorithm \ref{alg:BCFW-steps} requires a search for the point $i^{\ast}$ representing the best ascent direction in the current approximation of the objective function, an operation that is also performed by the FW and MFW methods. Searching among all of the $m$ training points requires a number of kernel evaluations of order $\mathcal{O}(q_k^2+mq_k) = \mathcal{O}(mq_k)$, where $q_k$ is the cardinality of $\mathcal{I}_k$. Since the cardinality of $\mathcal{I}_k$ is bounded as $\cO(1/\varepsilon)$ (the worst-case number of iterations), we obtain that the CVM has an overall time complexity (measured as the total number of kernel evaluations) of $\cO(1/\varepsilon)*\cO(m/\varepsilon) = \cO(m/\varepsilon^2)$, linear in the number of examples, improving on the super-linear time complexity reported empirically for popular methods like SMO to train SVMs  \cite{platt99smo-seminal, smo-second-order05fan}.   

If $m$ is very large, however, the complexity per iteration can still become prohibitive in practice. A sampling technique, called \emph{probabilistic speedup}, was proposed in \cite{SS2} to overcome this obstacle. This technique was also used to implement the CVM in \cite{coreSVMs05tsang,LibCVM09} leading to SVM training algorithms with an overall time complexity which is independent of the number of training examples. In practice, the index $i^{\ast}$ is computed just on a random subset $\varphi(S^{\prime}) \subset \varphi(S)$ of coordinates, with $|S^{\prime}| \ll |S| = \mbox{constant}$. The overall complexity per iteration is thereby reduced to order $\mathcal{O}(q_k^2 + q_k) = \mathcal{O}(q_k^2)$, a major improvement on the previous estimate, since we generally have $q_k \ll m$. Refer to \cite{Smola01Learning} or \cite{coreSVMs05tsang} for details about this speed-up technique.

More recently, several authors have explored the adaptation of the original FW methods to the task of training SVMs. The advantage of Algorithms \ref{alg:base-fw} and \ref{alg:mod-fw} over Algorithm \ref{alg:BCFW-steps} is that they rely only on analytical steps. As a result, each training iteration becomes significantly cheaper than a CVM iteration and does not depend on any external numerical solver. In practice, the training algorithm might probably require more iterations in order to obtain a solution within the predefined tolerance criterion $\varepsilon$, but the work per iteration is significantly smaller. Such a trade-off has been shown to be worthwhile when dealing with large-scale applications \cite{platt99smo-seminal, smo-second-order05fan,CIARP}. 

In \cite{CIARP, IJPRAI11} the authors show that adopting Algorithms \ref{alg:base-fw} and \ref{alg:mod-fw} the running times of \cite{coreSVMs05tsang} can be significantly improved as long a minor loss in accuracy is acceptable. From the analysis presented in \cite{clarkson08coresets}, it is possible to conclude that this approach enjoys similar theoretical guarantees, namely, linear time in the number of examples and a number of iterations which is independent of the number of examples. The sampling technique to speed-up the computation of $i\ast$ introduced above can be used with these methods as well, in order to obtain overall time complexities which are independent of the number of training patterns. 

In a closely related work \cite{Kumar2011}, Kumar and Yildirim present a specialization of the MFW method to SVM problems, adopting the geometrical formulation studied in \cite{Bennet00DualityGeometry}. This approach reformulates the SVM problem as a minimum polytope distance problem. The obtained method and its properties are also strongly related to the work of Gartner and Jaggi \cite{GartnerJ09}, in which the authors in which the authors show (theoretically) that the FW method as well as the coreset framework introduced in \cite{clarkson08coresets} can be applied to all the currently most used hard and soft margin SVM variants, with arbitrary kernels, to obtain approximate training algorithms needing a number of iterations independent of the number of attributes and training examples. In \cite{StochasticFW10}, Ouyang and Gray propose a stochastic variant of FW methods for online learning of $L_2$-SVMs, obtaining comparable and sometimes better accuracies than state-of-the-art batch and online algorithms for training SVMs. A similar technique has recently been proposed in \cite{Hazan2012ProjectionFree} to allow smooth and general online convex optimization with sub-linear regret bounds \cite{Shalev-Shwartz12OnlineConvexOpt}. Variants of the method proposed in \cite{coreSVMs05tsang} have been introduced in \cite{Wang2010OnlineCore} and \cite{Rai09Streaming-SVM-usingMEB} for training SVMs on data streams. In \cite{Jaggi2013ICMLb} the authors adapted the FW method to train SVMs with structured outputs like graphs and other combinatorial objects \cite{Tsochantaridis2005StructuredSVMs,Bakir2007StructuredData}, obtaining an algorithm which outperforms competing structural SVM solvers\footnote{To be precise, the block-coordinate FW in \cite{Jaggi2013ICMLb}, when applied on the binary SVM as a special case of the structured SVM, becomes a variant of dual coordinate ascent \cite{hsieh2008dual}.}.

\section{The SWAP Method}

We have described in the previous sections how the basic FW method can be modified in order to avoid stagnation near a solution, in this way obtaining an algorithm with a guaranteed rate of convergence. Our previous remarks about the MFW method suggest that this algorithm should terminate faster and find sparser solutions. In practice however, the MFW method is not always as fast as one could expect from the theory.  For instance, the experimental results reported in \cite{yildirim08} and \cite{Damla06linearconvergence} for the MEB and Minimum Volume Enclosing Ellipsoid problems respectively, show that very tight improvements, if any, are obtained using the enhanced method (MFW) with respect to the basic approach. As concerns the problem of training SVMs, results in \cite{IJPRAI11} confirm using statistical tests that MFW is not systematically better than FW. Indeed it may sometimes be slower. Similarly, the authors of \cite{StochasticFW10} argue that the use of away steps does not provide a clear advantage with respect to the standard FW method. 

A possible interpretation of these results can be given by looking at the way in which MFW implements the away steps to keep feasibility, i.e., to ensure the constraint $\sum_i \balpha_i = 1$ is satisfied. The basic idea in the MFW approach is to include the alternative of getting away from a descent vertex of the current face $\be_{j^{\ast}}$, decreasing the $j^{\ast}$-th weight in $\balpha_k$, instead of going toward an ascent vertex $\be_{i^{\ast}}$, which would increase the $i^{\ast}$-th weight in $\balpha_k$. The choice is mutually exclusive. If the algorithm decides to work around $j^{\ast}$, it may lose the opportunity to explore a promising direction of the feasible space, and vice-versa.

On the other hand, if an away step is performed, the weights of the active vertices $i \in \cI_k$ are uniformly scaled by $(1+\lambda)$ to keep feasibility. This scheme not only does considerably perturb the current approximation, since all the weights are modified, but, more importantly, can increase the weights of vertices which do not belong to the optimal face $S^{\star}$. Away steps in the MFW method are thus prone to increase the need of further away steps to eliminate such ``spurious points'' ($i \in \cI_k$, but $i \notin S^{\star}$).

Here, we introduce a new type of away step devised to circumvent these problems while preserving the advantages of MFW. We discuss two variants of the method, obtained by using first and second order approximations of the objective function at each iteration, respectively.

\subsection{Main Construction}
Our method is obtained as follows. As in the previous FW methods, we find, at each iteration, an \emph{ascent vertex} $\be_{i^{\ast}}$, as
\begin{equation}\label{eq:maximum_ascent}
\begin{aligned}
i^{\ast} \in \argmax_i \nabla g(\balpha_k)_i \ ,
\end{aligned}
\end{equation}
and a \emph{descent vertex} $\be_{j^{\ast}}$ on the face spanned by the current solution $\balpha_k$, as
\begin{equation}\label{eq:maximum_ascent}
\begin{aligned}
j^{\ast} \in \argmax_{j \in \cI_k} - \nabla g(\balpha_k)_j  = \argmin_{j \in \cI_k} \nabla g(\balpha_k)_j \ .
\end{aligned}
\end{equation}

However, instead of considering the update $\balpha_{k+1} = \balpha_{k} + \lambda \left(\balpha_{k} - \be_{j^{\ast}}\right)$ for the away step, we
propose a step of the form
\begin{equation}\label{eq:away-step-SWAP}
\begin{aligned}
\balpha_{k+1} = \balpha_{k} + \lambda \left(\be_{i^{\ast}}-\be_{j^{\ast}}\right)%
\ ,
\end{aligned}
\end{equation}
where $\lambda$ is determined by a line-search. That is, instead of exploring the away direction $\mathbf{d}_k^{\mbox{\tiny MFW}} =  (\bm{\alpha}_k - \be_{j^{\ast}})$, our algorithm explores the direction $\mathbf{d}_k^{\mbox{\tiny SWAP}} =  (\be_{i^{\ast}} - \be_{j^{\ast}})$. A sketch is included in Figure (\ref{sketch}). This scheme for implementing away steps provides the following conceptual advantages.

\begin{enumerate}
\item This away step perturbs the current solution $\balpha_k$ only locally, in the sense that the weight of any vertex other than $\be_{i^{\ast}}$ and $\be_{j^{\ast}}$ is preserved.

\item This away step does not increase the weight of vertices $\be_{j}$ of the active face corresponding to descent vertices. These points may correspond to spurious points that need to be removed from the active face to reach the optimal face of the problem.

\item This away step moves the current solution in the away direction and simultaneously in the direction of a \emph{toward step}. That is, it moves away from the descent vertex $\be_{j^{\ast}}$, but also gets closer to the ascent vertex $\be_{i^{\ast}}$ in the same iteration. The step (\ref{eq:away-step-SWAP}) can actually be written as the superposition of two separate steps,

\begin{equation}\label{eq:away-step-SWAP-super}
\begin{aligned}
\balpha_{k+1} & = \frac{1}{2}\left(\balpha_{k} + \lambda \left(\be_{i^{\ast}}-\balpha_{k}\right)\right) \ \ \mbox{toward step}\\ 
& \ + \frac{1}{2}\left(\balpha_{k} + \lambda \left(\balpha_{k} - \be_{j^{\ast}}\right)\right) \ \ \mbox{away step} \ ,
\end{aligned}
\end{equation}
where the first term of the right-hand side $\balpha_{k} + \lambda \left(\be_{i^{\ast}}-\balpha_{k}\right)$ represents the standard toward step in the FW method and the second term, $\balpha_{k} + \lambda \left(\balpha_{k} - \be_{j^{\ast}}\right)$, the away step considered in the MFW approach. Note that the term $\lambda \balpha_{k}$ disappears in the sum, so that only the components corresponding to $i^{\ast}$ and $j^{\ast}$ are updated, leaving the rest of the current solution unchanged.
\end{enumerate}

The new type of away step is called a \emph{SWAP step} and substitutes the MFW away steps in Algorithm \ref{alg:MFW-steps}. The procedure is summarized in Algorithm \ref{alg:SWAP-generic}. Note that we deliberately include some steps which do not represent computational tasks but definitions which simplify the convergence analysis of the next section. 

\begin{small}
\begin{algorithm}[!ht]
Set $k=0$.\\
Compute an initial estimate $\balpha_0$.\\%
Set $\cI_{0}=\{i : \alpha_{0,i} \neq 0\}$.\\
\For{$k=0,1,\ldots$}{%
Search for $i^{\ast} \in \argmax_{i} \nabla g(\balpha_k)_i$ (\emph{ascent direction}).\\
Search for $j^{\ast} \in \argmin_{j: \alpha_{k,j} \neq 0} \nabla g(\balpha_k)_j \label{step:SWAP_descent}$ (\emph{descent direction}).\\
Perform a line-search to find
$\lambda_{\mbox{\tiny swap}} \in \argmax_{\lambda \in [0,1]} g\left(\balpha_k + \lambda (\be_{i^{\ast}} - \be_{j^{\ast}})\right). $\\
Perform a line-search to find
$ \lambda_{\mbox{\tiny fw}} \in \argmax_{\lambda \in [0,1]} g\left(\balpha_k + \lambda (\be_{i\ast} - \balpha_k)\right).$\\
Compute $\delta_{\mbox{\tiny swap}} =  g\left(\balpha_k + \lambda_{\mbox{\tiny swap}} (\be_{i\ast} - \be_{j\ast})\right) - g(\balpha_k)$ (\emph{improvement of a SWAP step}).\\
Compute $\delta_{\mbox{\tiny fw}} =  g\left(\balpha_k + \lambda_{\mbox{\tiny fw}} (\be_{i\ast} - \balpha_k)\right) - g(\balpha_k)$ (\emph{improvement of a toward step}).\\
Compute $\label{step:swap_compute_deltak} \delta_{k} = \max\left(\delta_{\mbox{\tiny swap}},\delta_{\mbox{\tiny fw}}\right)$ (\emph{the best improvement}).\\
\eIf{$\delta_{k} = \delta_{\mbox{\tiny \emph{swap}}}$}{%
Clip the line-search parameter, $\lambda_{\mbox{\tiny swap}\star} = \max(\lambda_{\mbox{\tiny swap}},\alpha_{k,j\ast})$\\
If $\lambda_{\mbox{\tiny swap}\star} = \alpha_{k,j\ast}$ mark the iteration as a SWAP-drop step.\\
If $\lambda_{\mbox{\tiny swap}\star} = \lambda_{\mbox{\tiny swap}}$ mark the iteration as a SWAP-add step.\\
Perform the SWAP step $\balpha_{k+1}= \balpha_k + \lambda_{\mbox{\tiny swap}\star} (\be_{i\ast} - \be_{j\ast})$.\\
Set $\cI_{k+1}= \cI_{k} \cup \{i^*\}$.\\
If a SWAP-drop step was performed, $\cI_{k+1}= \cI_{k+1} \setminus \{j^*\}$.\\
}{
Mark the iteration as a FW step.\\
Perform the FW step $\balpha_{k+1}= \balpha_k + \lambda_{\mbox{\tiny fw}} (\be_{i\ast} - \balpha_k)$. \\
Set $\cI_{k+1}= \cI_{k} \cup \{i^*\}$.\\
}
}%
\caption{\label{alg:SWAP-generic} \small{The SWAP algorithm for problem (\ref{eq:generic-concave-on-the-simplex}).}}
\end{algorithm}
\end{small}

\tdplotsetmaincoords{25}{198}
\begin{figure}[h!!!!]
\begin{center}
\begin{tikzpicture}[tdplot_main_coords,scale=1.195]

\node[] (a) at (6,0,0) {};
\node[] (b) at (0,6,0) {};
\node[] (c) at (0,0,6) {};
\node[] (o) at (0,0,0) {};

 \filldraw[
        draw=red!0,%
        fill=gray!50,%
    ]          (6,0,0)
            -- (0,6,0)
            -- (0,0,6)
            -- cycle;

\foreach \to/\from in {a/o,b/o,c/o}
\draw [dashed] (\to)--(\from);
\draw [] (a)--(b);
\draw [] (c)--(b);
\draw [] (a)--(c);

\node at (b) [circle,fill=black,scale=0.4] {};
\node at (c) [circle,fill=black,scale=0.4] {};
\node at (o) [circle,fill=black,scale=0.4] {};

\fill [blue!80, path fading=radialfade] (a) circle (2);
\fill [violet, path fading=radialfade] (b) circle (2);
\fill [red, path fading=radialfade] (c) circle (2);

    \draw[dashed,->] (0,0,0) -- (6.5,0,0) node[anchor=north east]{$x$};
    \draw[->] (0,0,0) -- (0,6.5,0) node[anchor=north west]{$y$};
    \draw[->] (0,0,0) -- (0,0,6.5) node[anchor=south]{$z$};

\node[label= {[label distance=-0.75cm]0:{$\bm{\alpha}_{k}$}}] (current) at ($0.75*(3.2,2.4,2.4)$) [circle,fill=black,scale=0.4] {};
\draw [thick, teal] (current)--(c) node [near end, above, color=black] {$\bm{d}^{\mbox{\tiny FW}}$} ;
\draw [thick, teal] (current)--($(current) + 0.67*(current) - 0.67*(a)$) node [above,pos=1.2, color=black] {$\bm{d}^{\mbox{\tiny MFW}}$};
\draw [thick, teal] (current)--($(current) + 0.4*(c) - 0.4*(a)$) node [pos=1.2, color=black] {$\bm{d}^{\mbox{\tiny SWAP}}$};

\node[label= below:{$\bm{\alpha}_{k+1}^{\mbox{\tiny MFW}}$} ] (mfw) at ($0.75*(1.76,3.12,3.12)$) [circle,fill=black,scale=0.2] {};
\node[label= below:{$\bm{\alpha}_{k+1}^{\mbox{\tiny SWAP}}$}] (swap) at ($0.75*(0.8,2.4,4.8)$) [circle,fill=black,scale=0.2] {};
\node[label=above:{$\bm{\alpha}_{k+1}^{\mbox{\tiny FW}}$}] (fw) at ($0.75*(2.24,1.68,4.08)$) [circle,fill=black,scale=0.2] {};

\node[label=below:{$\bm{e}_{1}$}] at (a) [circle,fill=black,scale=0.4] {};
\node[label=right:{$\bm{e}_{2}$}] at (b) [circle,fill=black,scale=0.4] {};
\node[label={[label distance=0.05cm]0:{$\bm{e}_{3}$}}] at (c) [circle,fill=black,scale=0.4] {};

\draw [thick, dashed,->,>=triangle 45] (current)--(mfw);
\draw [thick,dashed,->,>=triangle 45] (current)--(swap);
\draw [thick,dashed,->,>=triangle 45] (current)--(fw);

\end{tikzpicture}
 \caption{\label{sketch} A sketch of the search directions used by FW, MFW and SWAP methods. In this representation, $\bm{e}_{i^{\ast}} =  \bm{e}_3$ (ascent vertex wrt the current iterate) and $\bm{e}_{j^{\ast}} =  \bm{e}_1$ (descent vertex wrt the current iterate). The search directions explored by the algorithms (solid lines along the updates) are $\bm{d}^{\mbox{\tiny FW}} =  (\bm{e}_3 - \bm{\alpha}_k)$,  $\bm{d}^{\mbox{\tiny MFW}} = (\bm{\alpha}_k - \bm{e}_1)$ and $\bm{d}^{\mbox{\tiny SWAP}} = (\bm{e}_3 - \bm{e}_1)$ respectively. Note that MFW and SWAP are more effective than FW in reducing the weight of the descent vertex $\bm{e}_1$. However, if $\bm{e}_2$ is also a descent vertex, the MFW update has the side effect of increasing the weight of another descent vertex. This is avoided by the SWAP update, which only increases the weight corresponding to the best ascent vertex.}
\end{center}  
\end{figure}
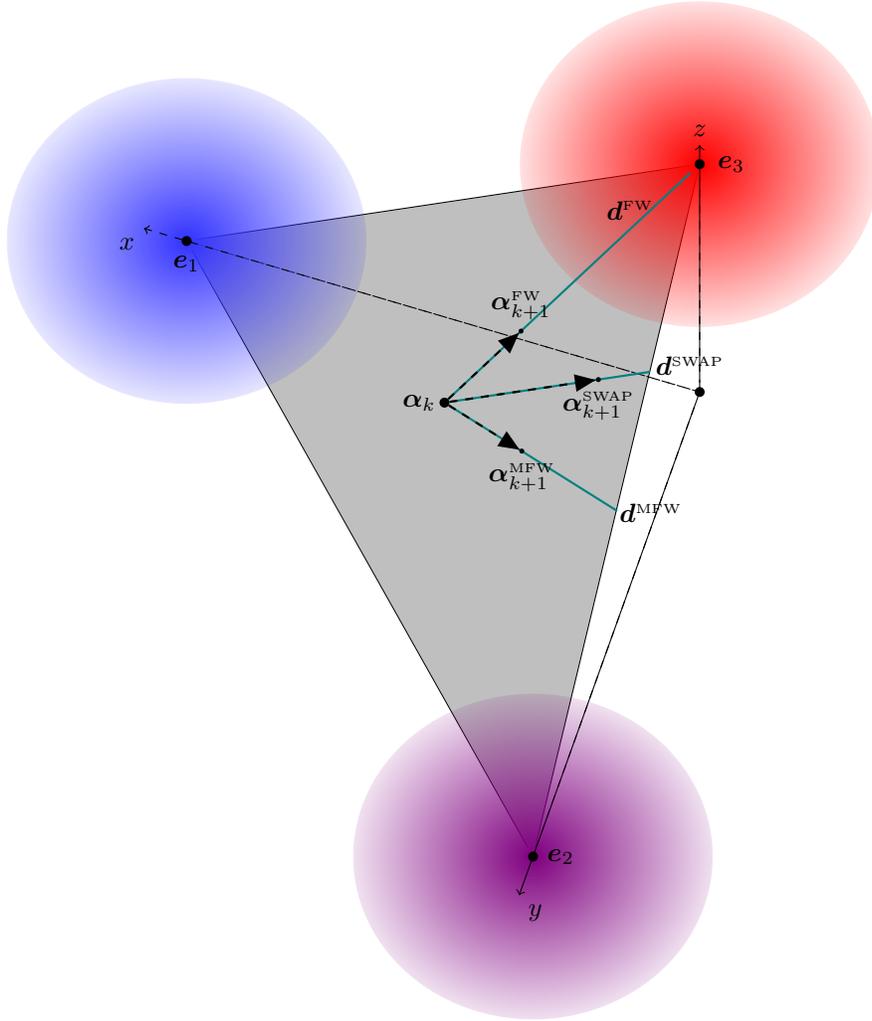  

So as to choose the type of step to perform, the MFW criterion cannot be employed in our method. The MFW method employs a first order approximation of $g(\cdot)$ at the current iterate to predict the value of the objective function at the next iterate. That is, if $\bd$ denotes the search direction, 
\begin{equation}\label{eq:linear-app-to-choose}
\psi_k(\bm{\alpha}_k + \lambda \bd) = g(\bm{\alpha}_k) +  \lambda \bd^T \nabla g(\bm{\alpha}_k) \, . 
\end{equation} 
is computed. The step which gives the largest value of $\psi_k$ is selected. However, a SWAP step \emph{always} gives a larger value of $\psi_k$ than the value obtained using a toward step. Indeed, the value of $\psi_k$ using a SWAP step is
\begin{align}\label{eq:linear-app-to-SWAP}
\psi_k(\bm{\alpha}_k + \lambda \bd_{swap}) &= g(\bm{\alpha}_k) +  \lambda \left(\be_{i^{\ast}}-\be_{j^{\ast}}\right)^T \nabla g(\bm{\alpha}_k)\notag\\ 
&= g(\bm{\alpha}_k) +  \lambda \nabla g(\bm{\alpha}_k)_{i^{\ast}} - \lambda \nabla g(\bm{\alpha}_k)_{j^{\ast}} \, . 
\end{align} 
The value of $\psi_k$ using a toward step is
\begin{align}\label{eq:linear-app-to-SWAP}
\psi_k(\bm{\alpha}_k + \lambda \bd_{fw}) &= g(\bm{\alpha}_k) +  \lambda \left(\be_{i^{\ast}}-\bm{\alpha}_k\right)^T \nabla g(\bm{\alpha}_k)\notag\\ 
&= g(\bm{\alpha}_k) +  \lambda \nabla g(\bm{\alpha}_k)_{i^{\ast}} - \lambda \bm{\alpha}_k^{T}\nabla g(\bm{\alpha}_k) \, . 
\end{align} 
Since $\bm{\alpha}_k^{T}\nabla g(\bm{\alpha}_k)$ is always larger than $\nabla g(\bm{\alpha}_k)_{j^{\ast}}$, a SWAP step would always be preferred using  first-order information to predict the objective function value. 

To address this problem, we observe that the MFW method computes an exact line-search for the search direction selected using $\psi_k$. We thus formulate our method computing the line-search before deciding the type of step to perform. This design requires to perform two line-searches instead of one. However, the estimation of the objective function value at the next iterate is more accurate. 

As we will discuss in the section regarding the adaptation of the procedure to the SVM problem, this computation is particularly simple for the objective function in problem (\ref{eq:L2DUAL}). All the computations are analytical. Furthermore, the exact computation of $\delta_{\mbox{\tiny fw}}$ and $\delta_{\mbox{\tiny swap}}$ involve terms already computed in the line-searches and therefore does not represent an additional overhead for the algorithm.    

\subsection{A Variant using Second Order Information}
\begin{small}
\begin{algorithm}[ht]
Proceed as in Algorithm \ref{alg:SWAP-generic}, but modify step \ref{step:SWAP_descent} as follows:
\begin{equation}\label{eq:bestLAMBDA_second_order-MSW-IMP}
j^{\ast} \in \argmax_{j \in I_k} \frac{\left(\nabla g(\balpha_k)_{i^{\ast}} - \nabla g(\balpha_k)_{j}\right)^{2}}{- 2 \left( \nabla^{2}_{i^{\ast},i^{\ast}} - 2 \nabla^{2}_{i^{\ast},j}  + \nabla^{2}_{j,j} \right)} \ .
\end{equation}
\caption{\label{alg:SWAP-2o} \small{The SWAP-2o algorithm for problem (\ref{eq:generic-concave-on-the-simplex}).}}
\end{algorithm}
\end{small}
All the FW methods introduced previously make use of first-order approximations of the objective function in order to determine the direction toward which the current iterate should be moved. Here, we consider the possibility of using second-order information. If we assume that the objective function is twice differentiable, the second-order Taylor approximation of $g(\cdot)$ in a neighborhood of $\balpha_k$ is 
\begin{equation}\label{eq:taylor_second_order-generic}
\begin{aligned}
g\left(\balpha_k + \lambda\bd\right) &\approx g(\balpha_k) + \lambda \nabla g(\balpha_k)^{T}\bd + \frac{1}{2} \lambda^{2} \bd^{T} \nabla^{2}g(\balpha_k) \bd\ ,
\end{aligned}
\end{equation}
where the Hessian matrix $\nabla^{2} g(\balpha_k)$ is negative semi-definite. Finding the best ascent direction would thus require the computation of the quadratic form $\bd^{T} \nabla^{2}g(\balpha_k) \bd$. Since the matrix $\nabla^{2}g(\balpha_k)$ may be highly dense, which is usually the case in SVM applications, employing a first order relaxation as in Frank-Wolfe methods makes sense in order to obtain lighter iterations. However, we note that the search direction for a SWAP step $\bd_{\mbox{\tiny swap}}=\be_{i^{\ast}} - \be_{j^{\ast}}$ yields a particularly simple expression
\begin{equation}\label{eq:taylor_second_order-MSW}
\begin{aligned}
& g(\balpha_k + \lambda \bd_{\mbox{\tiny swap}})  \\ \approx
\, &  g(\balpha_k) + \lambda \nabla g(\balpha_k)^{T}(\be_{i^{\ast}} -
\be_{j^{\ast}}) + \frac{1}{2} \lambda^{2} (\be_{i^{\ast}}
- \be_{j^{\ast}})^{T} \nabla^{2}g(\balpha_k)
(\be_{i^{\ast}} - \be_{j^{\ast}})\\
= \, & g(\balpha_k) + \lambda \left(\nabla g(\balpha_k)_{i^{\ast}} - \nabla g(\balpha_k)_{j^{\ast}}\right) +
\frac{1}{2} \lambda^{2} \left( \nabla^{2}_{i^{\ast},i^{\ast}} - 2 \nabla^{2}_{i^{\ast},j^{\ast}}  + \nabla^{2}_{j^{\ast},j^{\ast}}  \right) \ ,
\end{aligned}
\end{equation}
where $\nabla^{2}_{i,j} = \nabla^{2}g(\balpha_k)_{i,j}$. 

In order to determine the best pair $\be_{i^{\ast}}, \be_{j^{\ast}}$ we thus need to evaluate three entries of the Hessian matrix. However this is still a computationally hard task for each iteration, since we would need to consider $m |\cI_k|$ pairs of points in order to take a step. We thus adopt the strategy used in the second-order version of SMO proposed in \cite{smo-second-order05fan}. We fix the ascent index $i^{\ast}$ as in the first-order SWAP, and search for the index $j^{\ast}$ in the active set which maximizes the improvement of the second order approximation (\ref{eq:taylor_second_order-MSW}). We call the obtained procedure \emph{second-order SWAP} and we denote it as \emph{SWAP-2o} in the next Sections.

It is worth to note that this approximation is exact for quadratic objective functions, which is the case for the SVM problem (\ref{eq:L2DUAL}). 
Note also that in this case the line-search along the ascent direction $\bd_k$ defined by $i^{\ast}$ and $j^{\ast}$ has a closed-form solution. Indeed,
\begin{equation}\label{eq:bestLAMBDA_second_order-MSW}
\begin{aligned}
\lambda_{\ast} &= \frac{ \left(\nabla g(\balpha_k)_{i^{\ast}} - \nabla g(\balpha_k)_{j^{\ast}}\right)}{-\left( \nabla^{2}_{i^{\ast},i^{\ast}} - 2 \nabla^{2}_{i^{\ast},j^{\ast}}  + \nabla^{2}_{j^{\ast},j^{\ast}} \right) } \ .
\end{aligned}
\end{equation}
From the negative semi-definiteness of $\nabla^{2}g(\balpha_k)$ it follows that $\lambda_{\ast}$ is non-negative. Substituting this step-size in (\ref{eq:taylor_second_order-MSW}), the improvement in the objective function becomes
\begin{equation}\label{eq:bestLAMBDA_second_order-MSW-IMP}
\begin{aligned}
g(\balpha_k + \lambda_{\ast} \bd_{\mbox{\tiny swap}}) - g(\balpha_k) &= \frac{\left(\nabla g(\balpha_k)_{i^{\ast}} - \nabla g(\balpha_k)_{j^{\ast}}\right)^{2}}{-2 \left( \nabla^{2}_{i^{\ast},i^{\ast}} - 2 \nabla^{2}_{i^{\ast},j^{\ast}}  + \nabla^{2}_{j^{\ast},j^{\ast}} \right)} \ ,
\end{aligned}
\end{equation}
which again, from the negative semi-definiteness of $\nabla^{2}g(\balpha_k)$, is non-negative. Naturally, we need to restrict the value of  $\lambda_{\ast}$ to the interval $[0,1]$ in order to obtain a feasible solution for the next step. We thus modify Algorithm \ref{alg:SWAP-generic} as specified in Algorithm \ref{alg:SWAP-2o}. 

\subsection{Notes on the Adaptation to SVM Training}

Here we provide analytical expressions for all the computations required by Algorithm \ref{alg:SWAP-generic} and Algorithm \ref{alg:SWAP-2o} applied to the  
the SVM problem (\ref{eq:L2DUAL}). Similar expressions follow for any quadratic objective function.

For problem (\ref{eq:L2DUAL}), the gradient and Hessian at given iterate $\balpha_k$ take particularly simple expressions:
\begin{equation}\label{eq:gradientSVM}
\nabla g(\balpha_k) = -2 \bK\balpha_k \, , \,\,\,\,\,\,\,  \nabla^2 g(\balpha_k) = -2 \bK \, . 
\end{equation}

Notice that $\balpha_k^{T}\nabla g(\balpha_k) = 2 g(\balpha_k)$. Therefore, the line-searches in Algorithm \ref{alg:SWAP-generic} or Algorithm \ref{alg:SWAP-2o} can be performed analytically as follows. For FW steps,
\begin{equation}\label{eq:lineSEARCH-FW}
\lambda_{\mbox{\tiny fw}} = \frac{\nabla g(\balpha_k)_{i^{\ast}} - 2 g(\balpha_k)}{2 \left( \bK_{i^{\ast},i^{\ast}} + \nabla g(\balpha_k)_{i^{\ast}} - g(\balpha_k) \right)} \, . 
\end{equation}
Note that the quantity $\nabla g(\balpha_k)_{i^{\ast}}$ has been already computed to find the ascent vertex $i^{\ast}$. For SWAP steps,
\begin{equation}\label{eq:lineSEARCH-SWAP}
\lambda_{\mbox{\tiny swap}} = \frac{\nabla g(\balpha_k)_{i^{\ast}} - \nabla g(\balpha_k)_{j^{\ast}}}{2 \left( \bK_{i^{\ast},i^{\ast}} - 2 \bK_{i^{\ast},j^{\ast}}  + \bK_{j^{\ast},j^{\ast}}  \right)} \, . 
\end{equation}
Again, the quantity $\nabla g(\balpha_k)_{j^{\ast}}$ has been already computed to choose the descent vertex $j^{\ast}$. 

The improvement in the objective function can also be calculated analytically. For FW steps,
\begin{equation}\label{eq:improvement-FW}
\delta_{\mbox{\tiny fw}} =  \frac{\left(\nabla g(\balpha_k)_{i^{\ast}} - 2 g(\balpha_k) \right)^2 }{4 \left( \bK_{i^{\ast},i^{\ast}} + \nabla g(\balpha_k)_{i^{\ast}} -g(\balpha_k) \right)} \, . 
\end{equation}
All the terms involved here have already been computed to perform the line-search. Similarly, for SWAP steps,
\begin{equation}\label{eq:improvement-SWAP}
\delta_{\mbox{\tiny swap}} =  \frac{\left(\nabla g(\balpha_k)_{i^{\ast}} - \nabla g(\balpha_k)_{j^{\ast}} \right)^2 }{4 \left( \bK_{i^{\ast},i^{\ast}} - 2 \bK_{i^{\ast},j^{\ast}}  + \bK_{j^{\ast},j^{\ast}}  \right)}  \, . 
\end{equation}

With the exception of the term $\bK_{i^{\ast},j^{\ast}}$, all the computations have already been performed to compute $\delta_{\mbox{\tiny fw}}$ and to choose the descent vertex $j^{\ast}$. We conclude that, compared with MFW procedure, the SWAP method adapted for problem (\ref{eq:L2DUAL}) involves the computation of just one additional term, which is an entry of the kernel matrix $\bK$ defining the SVM problem. 

The objective function value $g(\balpha_k)$ can be computed recursively from the relationship\footnote{A similar recursive equation can be derived to handle the case of SWAP-drop steps.} $g(\balpha_{k+1}) = g(\balpha_k) + \delta_k$. Finally, we observe that the stopping criterion of Eqn. (\ref{eq:STOPPING-CONDITION}) takes the form

\begin{equation}\label{eq:STOPPING-CONDITION-SVM}
\Delta^d({\balpha}_k) = \nabla g(\balpha_k)_{i\ast} - 2 g(\balpha_k) \leq \varepsilon \ ,
\end{equation}
which involves the same already computed terms.

\section{Convergence Analysis of the SWAP Method}

In this section we study the convergence of the SWAP method on problem (\ref{eq:generic-concave-on-the-simplex}), of which the $L_2$-SVM problem (\ref{eq:L2DUAL}) is a particular instance.

We start by demonstrating the global convergence of the SWAP method. Then, we analyze its rate of convergence towards the optimum. For this purpose we will adapt the analysis presented by Ahipasaoglu \textit{et al}. in \cite{Damla06linearconvergence}. Using this framework and using a set of observations concerning the improvement in the objective function after an iteration of the SWAP method, we will be able to prove that the algorithm converges
linearly to the optimal value of the objective function. From a theoretical point of view, these results show that the SWAP enjoys the same
mathematical properties of the MFW method. Finally, we provide bounds on the number of iterations required to fulfill the stopping condition of Eqn. (\ref{eq:STOPPING-CONDITION}). We demonstrate that the algorithm stops in at most $\cO(1/\varepsilon)$ iterations independently of the number of variables $m$, which coincides with the number of training examples in the SVM problem (\ref{eq:L2DUAL}).

Here we only provide proofs for the first-order SWAP method, described in Algorithm \ref{alg:SWAP-generic}. However all the convergence results follow easily for the second-order variant as well. The statements and proofs of some of the technical results used in this section can be found in the Appendix.

We develop our analysis under the following assumptions:
\begin{itemize}
\item[B1.]\label{B1} $g$ is twice continuously differentiable;
\item[B2.]\label{B2} There is an optimal solution $\balpha^{\ast}$ of the optimization problem satisfying the strong sufficient condition of Robinson in \cite{Robinson1982Perturbed}.
\end{itemize}
This is the same set of hypotheses imposed by Yildirim in \cite{yildirim08} and Ahipasaoglu \textit{et al}. in \cite{Damla06linearconvergence} to study the convergence of Frank-Wolfe methods for the MEB problem and the Minimum Volume Enclosing Ellipsoid problem, respectively.

\begin{remark}\label{remark:Robinson}
Robinson's condition is a general version of the classical second order sufficient condition for a solution $\balpha^{\ast}$ to be an isolated local extremum, i.e. locally unique \cite{fiacco}. Referring to the case of a constrained maximization problem for a concave objective $g(\balpha)$, this result requires two conditions to be fulfilled:
\begin{itemize}
\item $\balpha^{\ast}$ is a KKT point \cite{fiacco}.
\item The Hessian of the Lagrangian function at $\balpha^{\ast}$ behaves as a negative definite matrix (positive definite for minimization problems) along the directions belonging to the \textit{critical cone} of the KKT point \cite{nocedal}. Specialized to a quadratic problem on the simplex, i.e. a problem with the form of (\ref{eq:L2DUAL}), this condition assumes the form:
$$
\by^T \bK \by > 0
$$
for all $\by \neq 0$ such that $\balpha^*_i y_i = 0 \; \forall i \mbox{ s.t. } \bu^*_i > 0$, $\balpha^*_i \by_i \geq 0 \; \forall i \mbox{ s.t. } \balpha^*_i = 0 \mbox{ and } \bu^*_i = 0$, $\sum_i \by_i = 0$ (where $\bu^*$ is the vector of Lagrange multipliers at $\balpha^*$ corresponding to inequality constraints, which is unique since the constraints are linear and linearly independent).  
\end{itemize}

The additional analysis in \cite{Robinson1982Perturbed}, which plays a key role in our convergence analysis, essentially describes the conditions under which the stationary points of a small perturbation of the problem lie in a neighborhood of the solution of the original problem. This is also the key step in the proofs of linear convergence provided in \cite{yildirim08} and \cite{Damla06linearconvergence} for the MFW method.

\end{remark}

In \cite{GuelatMarcotte}, Gu\'elat and Marcotte analyzed the convergence properties of FW and MFW methods under the following alternative hypotheses:
\begin{itemize}
\item[A1.]\label{A1} $\nabla g$ is Lipschitz-continuous on the feasible set;
\item[A2.]\label{A2} $g(\balpha)$ is strongly concave;
\item[A3.]\label{A3} Let $\balpha^{\ast}$ be optimal for (\ref{eq:generic-concave-on-the-simplex}) and $T^{\ast}$ be the smallest face of the feasible set containing $\balpha^{\ast}$. Then
\begin{equation*}
(\balpha - \balpha^{\ast})^T \nabla g(\balpha^{\ast}) = 0
\Leftrightarrow \balpha \in T^{\ast} \;\; \mbox{(strict complementarity)}.
\end{equation*}
\end{itemize}
However, this set of assumptions can be difficult to satisfy in practice. In particular, A$3$ is a quite strong assumption and cannot be guaranteed in general. 

Note that assumption B1 implies A1, as from the mean value theorem it follows that
\begin{small}
\begin{equation}\label{eq:lipschitz_concave}
\begin{aligned}
\|\nabla g(\bx) - \nabla g(\by)\| &\leq L \|\bx - \by \| 
\end{aligned}
\end{equation}
\end{small}
for any $\bx,\by$ in the unit simplex, where $L$ is the largest eigenvalue (in modulus) of the matrix $\nabla^{2}g(\bz)$ for $\bz$ in the unit simplex. In addition, B1 holds most of the times in machine learning problems.

It can also be shown that, if problem (\ref{eq:generic-concave-on-the-simplex}) is strongly concave, the strong sufficient condition of Robinson holds, i.e. A2 implies B2 \cite{Damla06linearconvergence}. In particular, this is satisfied by the Wolfe dual of the $L_2$-SVM problem. This fact has been used by Kumar and Yildirim in \cite{Kumar2011} to demonstrate the linear convergence of the MFW method on SVM problems which match (\ref{eq:generic-concave-on-the-simplex}) with quadratic objectives. The convergence of the FW method specialized to a quadratic program (arising from a linear system) has been also studied in \cite{beck2004}. An implicit assumption to demonstrate linear convergence is that the Gram matrix involved in the quadratic form is positive definite. From Remark (\ref{remark:Robinson}) it is easy to see that, for quadratic objectives and linear constraints, this implies the Robinson condition. Recently, the linear convergence of some variants of the FW method for convex optimization on polytopes has been demonstrated in \cite{Garber2013}. The key ingredient in the proof is A2. 

It is worth noting that the Robinson condition is not only weaker than A2 in the sense that A2 implies B2, but also in the sense that it is a local property of the objective function at the solution, instead than a global condition on $g$.        

\subsection{Global Convergence}

\begin{proposition}\label{prop:global_convergence_SWAP} Suppose hypothesis A1 is satisfied. Starting from any feasible $\alpha_0$, Algorithm \ref{alg:SWAP-generic} produces a series of iterates $\{\balpha_{k}\}_k$ such that $g(\balpha_{k})$ converges to $g(\balpha^{\ast})$, where $\balpha^{\ast}$ is a solution of problem (\ref{eq:generic-concave-on-the-simplex}). If $\balpha^{\ast}$ is unique, $\{\balpha_{k}\}_k$ converges to $\balpha^{\ast}$.
\end{proposition}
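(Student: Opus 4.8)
The plan is to argue by contradiction, grafting onto the classical Frank--Wolfe global convergence argument an extra treatment of the SWAP--drop iterations, where the usual reasoning breaks down. Since every line search is over $[0,1]\ni 0$, each step gives $g(\balpha_{k+1})\ge g(\balpha_k)$, so $\{g(\balpha_k)\}_k$ is non-decreasing and, being bounded above by $g(\balpha^{\ast})$, converges to some $g^{\infty}\le g(\balpha^{\ast})$ with $\sum_{k}\big(g(\balpha_{k+1})-g(\balpha_k)\big)<\infty$. Suppose $g^{\infty}<g(\balpha^{\ast})$, set $\rho:=g(\balpha^{\ast})-g^{\infty}>0$, and recall the bound $\Delta^{p}\le\Delta^{d}$ noted after Eqn.~(\ref{eq:STOPPING-CONDITION}): then $\Delta^{d}(\balpha_k)\ge\Delta^{p}(\balpha_k)=g(\balpha^{\ast})-g(\balpha_k)\ge\rho$ for all $k$, and the goal is to reach a contradiction from this uniform lower bound on the gap.

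I would first invoke the standard Frank--Wolfe improvement estimate. For $\bd_k^{\mbox{\tiny FW}}=\be_{i^{\ast}}-\balpha_k$ one has $\nabla g(\balpha_k)^{T}\bd_k^{\mbox{\tiny FW}}=\Delta^{d}(\balpha_k)\ge\rho$; combining this with the quadratic lower model supplied by A1 ($g(\balpha_k+\lambda\bd)\ge g(\balpha_k)+\lambda\nabla g(\balpha_k)^{T}\bd-\tfrac{L}{2}\lambda^{2}\|\bd\|^{2}$, with $\|\bd\|\le\sqrt2$ on the simplex) and maximizing over $\lambda\in[0,1]$ gives $\delta_{\mbox{\tiny fw}}\ge\min\{\rho/2,\ \rho^{2}/(4L)\}=:c(\rho)>0$. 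Since the method always advances by $\delta_k=\max(\delta_{\mbox{\tiny swap}},\delta_{\mbox{\tiny fw}})\ge c(\rho)$, and on FW and SWAP--add iterations the attained increase of $g$ equals exactly $\delta_k$, summability of $\sum_k(g(\balpha_{k+1})-g(\balpha_k))$ forces there to be only finitely many FW and SWAP--add iterations; so some $K$ exists beyond which every iteration is a SWAP--drop step.

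The main obstacle is ruling out this infinite tail of SWAP--drop steps: the classical MFW counting argument (each minimal away step deletes a support vertex, only toward steps create vertices, hence drops are finite) does not apply, because a SWAP--drop step deletes $\be_{j^{\ast}}$ \emph{and} inserts $\be_{i^{\ast}}$. I would handle it in two stages. First, on such a step the move is $\balpha_{k+1}=\balpha_k+\alpha_{k,j^{\ast}}\bd$ with $\bd=\be_{i^{\ast}}-\be_{j^{\ast}}$, $\|\bd\|^{2}=2$, and $\nabla g(\balpha_k)^{T}\bd=\nabla g(\balpha_k)_{i^{\ast}}-\nabla g(\balpha_k)_{j^{\ast}}\ge\Delta^{d}(\balpha_k)\ge\rho$ (using $\nabla g(\balpha_k)_{j^{\ast}}=\min_{j\in\cI_k}\nabla g(\balpha_k)_j\le\balpha_k^{T}\nabla g(\balpha_k)$); since $\alpha_{k,j^{\ast}}\le\lambda_{\mbox{\tiny swap}}=\argmax_{\lambda\in[0,1]}g(\balpha_k+\lambda\bd)$, concavity makes $\lambda\mapsto g(\balpha_k+\lambda\bd)$ non-decreasing on $[0,\alpha_{k,j^{\ast}}]$, so together with the quadratic lower model
\begin{equation*}
g(\balpha_{k+1})-g(\balpha_k)\ \ge\ \max_{0\le t\le\alpha_{k,j^{\ast}}}\big(t\rho-Lt^{2}\big)\ \ge\ \min\big\{\tfrac{\rho}{2}\,\alpha_{k,j^{\ast}},\ \rho^{2}/(4L)\big\}.
\end{equation*}
Summing over $k\ge K$ against $\sum_k(g(\balpha_{k+1})-g(\balpha_k))<\infty$ shows that only finitely many SWAP--drop steps have $\alpha_{k,j^{\ast}}\ge\rho/(2L)$ and that $\sum_{k\ge K}\alpha_{k,j^{\ast}}<\infty$; hence $\|\balpha_{k+1}-\balpha_k\|=\sqrt2\,\alpha_{k,j^{\ast}}$ is summable, $\{\balpha_k\}$ is Cauchy, and $\balpha_k\to\bar\balpha\in S$ with $g(\bar\balpha)=g^{\infty}$, so $\Delta^{d}(\bar\balpha)\ge\rho>0$. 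Second, I would close with a support count near $\bar\balpha$: setting $A=\argmax_i\nabla g(\bar\balpha)_i$, continuity of $\nabla g$ and $\balpha_k\to\bar\balpha$ give $i^{\ast}\in A$ for large $k$, while $\cI_k\not\subseteq A$ (otherwise $\max_i\nabla g(\balpha_k)_i$ and $\balpha_k^{T}\nabla g(\balpha_k)$ would both tend to $\max_i\nabla g(\bar\balpha)_i$, forcing $\Delta^{d}(\balpha_k)\to 0$), and in fact the descent index $j^{\ast}=\argmin_{j\in\cI_k}\nabla g(\balpha_k)_j$ lies in $\cI_k\setminus A$; thus each late SWAP--drop step replaces one index outside $A$ by one inside $A$, strictly decreasing the nonnegative integer $|\cI_k\setminus A|$ --- a contradiction.

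Therefore $g^{\infty}=g(\balpha^{\ast})$, i.e. $g(\balpha_k)\to g(\balpha^{\ast})$. Finally, if $\balpha^{\ast}$ is unique, compactness of $S$ finishes the proof: every subsequence of $\{\balpha_k\}$ has a sub-subsequence converging to some $\tilde\balpha\in S$, which by continuity satisfies $g(\tilde\balpha)=\lim_k g(\balpha_k)=g(\balpha^{\ast})$ and hence equals $\balpha^{\ast}$; as this holds for every subsequence, $\balpha_k\to\balpha^{\ast}$.
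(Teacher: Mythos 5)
Your proposal is correct, and it is a genuinely more self-contained argument than the one the paper gives. The paper's own proof consists essentially of a single observation --- that $(\be_{i^{\ast}}-\be_{j^{\ast}})^{T}\nabla g(\balpha_k)\geq(\be_{i^{\ast}}-\balpha_k)^{T}\nabla g(\balpha_k)\geq g(\balpha^{\ast})-g(\balpha_k)$, so the key inequality behind Gu\'elat and Marcotte's Theorem~1 holds for SWAP directions as well as for FW directions --- and then defers the remaining details to the arXiv version. You instead run a complete contradiction argument, and the part that goes beyond both the paper's sketch and the classical template is your treatment of a hypothetical infinite tail of SWAP-drop steps, which is precisely where the MFW accounting (drop steps only delete vertices, hence are dominated by toward steps) fails to transfer. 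Your replacement --- summability of the clipped step sizes $\alpha_{k,j^{\ast}}$ obtained from the quadratic lower model together with monotonicity of the line search up to its unclipped optimum, hence a Cauchy sequence with limit $\bar{\balpha}$, followed by the observation that for large $k$ each SWAP-drop step strictly decreases the nonnegative integer $|\cI_k\setminus A|$ with $A=\argmax_i\nabla g(\bar{\balpha})_i$ --- is sound, and it supplies exactly the detail the paper omits; note that the paper's own control of drop steps (Proposition~3, a purely combinatorial count used for the rate analysis) is not what you use and is not needed here. Two minor remarks: your reading of the clipping rule as $\min(\lambda_{\mbox{\tiny swap}},\alpha_{k,j^{\ast}})$ is the only feasible interpretation of the $\max$ written in Algorithm~4, and you implicitly invoke concavity of $g$ (for $\Delta^p\leq\Delta^d$ and for the monotonicity of the one-dimensional line search), which is a standing assumption of problem~(1) rather than part of hypothesis A1; the paper's proof relies on concavity in the same way.
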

\begin{proof} 
The key observation is that
both FW and SWAP search directions $\bd_k$ in Algorithm \ref{alg:SWAP-generic} satisfy
\begin{small}
\begin{equation}\label{eq:condition_proof_global_convergence0}
\begin{aligned}
g(\balpha^{\ast}) - g(\balpha_k) &\leq \bd_k^{T}\nabla g(\balpha_k)
\ ,
\end{aligned}
\end{equation}
\end{small}
where $\bd_k = (\be_{i\ast} - \balpha_k)$ 
for FW steps and $\bd_k = (\be_{i\ast} - \be_{j\ast})$ for SWAP steps. 
In the case of FW steps, the
result was stated in \cite{GuelatMarcotte}. However,
it is not hard to see that
\begin{small}
\begin{equation}\label{eq:condition_proof_global_convergence1}
\begin{aligned}
\left(\be_{i\ast} - \be_{j\ast}\right)^{T}\nabla g(\balpha_k) &\geq
\left(\be_{i\ast} - \balpha_k\right)^{T}\nabla g(\balpha_k) \ ,
\end{aligned}
\end{equation}
\end{small}
and thus Eqn. (\ref{eq:condition_proof_global_convergence0}) also holds for SWAP steps. The rest of the proof (see \cite{SWAP_arxiv}) follows easily by replicating the strategy used to demonstrate Theorem $1$ in \cite{GuelatMarcotte}. 

\end{proof}
Note that this result holds in particular under our set of hypotheses, since, as stated above, B1 implies A1.

\subsection{Analysis of the Rate of Convergence}

We now prove a linear convergence result for the SWAP algorithm.  In the proof, we make use of the following technical Lemma. 

\begin{lemma} \label{lemma:swap_guarantees_strong_conditions} 
Suppose B1 holds. After any iteration marked as SWAP-add or FW in Algorithm \ref{alg:SWAP-generic}, the iterate $\balpha_k$ is a $\Delta$-approximate solution with $\Delta = \max\left(2 \sqrt{L \delta_k} \ , \ 2 \delta_k\right)$.
\end{lemma}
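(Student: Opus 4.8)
The plan is to verify, one at a time, the two inequalities defining a $\Delta$-approximate solution, each time using only the elementary Frank--Wolfe estimate ``small improvement along a direction $\Rightarrow$ small directional derivative along it''. Write $\nabla_i:=\nabla g(\balpha_k)_i$, and recall that under hypothesis B1 the gradient is $L$-Lipschitz (this is assumption A1, inequality~(\ref{eq:lipschitz_concave})), so that $g(\balpha_k+\lambda\bd)\ge g(\balpha_k)+\lambda\bd^{T}\nabla g(\balpha_k)-\tfrac{L}{2}\lambda^{2}\|\bd\|^{2}$; for both of the directions $\bd=\be_{i^{\ast}}-\balpha_k$ and $\bd=\be_{i^{\ast}}-\be_{j^{\ast}}$ one has $\|\bd\|^{2}\le 2$ since all the points involved lie in $S$.

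First I would settle the dual-gap condition. Put $\Delta^{d}:=\Delta^{d}(\balpha_k)=\nabla_{i^{\ast}}-\balpha_k^{T}\nabla g(\balpha_k)=(\be_{i^{\ast}}-\balpha_k)^{T}\nabla g(\balpha_k)$. The whole segment $\{(1-\lambda)\balpha_k+\lambda\be_{i^{\ast}}:\lambda\in[0,1]\}$ is feasible, so the line-search defining $\delta_{\mbox{\tiny fw}}$ gives $\delta_{\mbox{\tiny fw}}\ge\max_{\lambda\in[0,1]}(\lambda\Delta^{d}-L\lambda^{2})$; evaluating the right-hand side at $\lambda=\min\{\Delta^{d}/(2L),1\}$ yields $\delta_{\mbox{\tiny fw}}\ge(\Delta^{d})^{2}/(4L)$ when $\Delta^{d}\le 2L$ and $\delta_{\mbox{\tiny fw}}\ge\Delta^{d}/2$ otherwise. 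Since $\delta_k\ge\delta_{\mbox{\tiny fw}}$ in every iteration, in either case $\Delta^{d}\le\max(2\sqrt{L\delta_k},\,2\delta_k)=\Delta$, which is the first defining inequality.

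Next I would settle the active-face condition, for which it suffices to treat the index $j^{\ast}$: because $\balpha_k^{T}\nabla g(\balpha_k)$ is a convex combination of $\{\nabla_j:j\in\cI_k\}$, the quantity $\Delta^{s}_i(\balpha_k)=\nabla_i-\balpha_k^{T}\nabla g(\balpha_k)$ is minimal over the active face at $i=j^{\ast}$, and moreover $0\le\balpha_k^{T}\nabla g(\balpha_k)-\nabla_{j^{\ast}}\le\nabla_{i^{\ast}}-\nabla_{j^{\ast}}=:D=(\be_{i^{\ast}}-\be_{j^{\ast}})^{T}\nabla g(\balpha_k)$, so it is enough to prove $D\le\Delta$. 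This is where the hypothesis that the iteration is SWAP-add or FW is used. In a SWAP-add iteration the SWAP line-search optimum $\lambda_{\mbox{\tiny swap}}$ is not truncated by the feasibility cap $\alpha_{k,j^{\ast}}$; it is therefore an interior maximizer of the concave map $\lambda\mapsto g(\balpha_k+\lambda(\be_{i^{\ast}}-\be_{j^{\ast}}))$, whose derivative is at least $D-2L\lambda$ (again by $L$-Lipschitzness) and hence positive for $\lambda<D/(2L)$, which forces $D/(2L)\le\lambda_{\mbox{\tiny swap}}\le\alpha_{k,j^{\ast}}$. Consequently the point $\balpha_k+\tfrac{D}{2L}(\be_{i^{\ast}}-\be_{j^{\ast}})$ is feasible, the quadratic lower bound applies there, and $\delta_k=\delta_{\mbox{\tiny swap}}\ge D^{2}/(4L)$, so $D\le 2\sqrt{L\delta_k}\le\Delta$. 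In an FW iteration one has instead $\delta_k=\delta_{\mbox{\tiny fw}}\ge\delta_{\mbox{\tiny swap}}$ together with $\delta_{\mbox{\tiny swap}}\ge\max_{\lambda\in[0,1]}(\lambda D-L\lambda^{2})$ (the relevant segment is feasible throughout for the quadratic SVM objective~(\ref{eq:L2DUAL}), and in general one restricts the maximization to its feasible portion), so the same dichotomy gives $D\le\max(2\sqrt{L\delta_k},\,2\delta_k)=\Delta$. Hence $\Delta^{s}_{j^{\ast}}(\balpha_k)=\nabla_{j^{\ast}}-\balpha_k^{T}\nabla g(\balpha_k)\ge-D\ge-\Delta$; the degenerate situations $i^{\ast}=j^{\ast}$ or $\balpha_k=\be_{j^{\ast}}$ give $D=0$ and are immediate.

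The main obstacle I anticipate is not the one-dimensional estimates, which are routine, but the bookkeeping linking $\delta_{\mbox{\tiny swap}}$, $\delta_{\mbox{\tiny fw}}$ and $\delta_k$ across the two branches of Algorithm~\ref{alg:SWAP-generic}, together with the care needed to guarantee that the descent inequality~(\ref{eq:lipschitz_concave}) may legitimately be invoked on the relevant segment. The restriction to SWAP-add and FW iterations is exactly what makes this go through cleanly (it is also the case actually needed in the subsequent linear-convergence argument, where $\delta_k$ must equal the realized improvement $g(\balpha_{k+1})-g(\balpha_k)$); in a SWAP-drop iteration the true step-length is the clipped value $\alpha_{k,j^{\ast}}\neq\lambda_{\mbox{\tiny swap}}$, which severs the link between $\delta_k$ and the directional derivative along $\be_{i^{\ast}}-\be_{j^{\ast}}$.
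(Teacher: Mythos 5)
Your argument is correct and is essentially the proof the paper gives: the one-line proof in the text defers to the Appendix, which derives exactly your two estimates --- the descent-lemma bound with $\|\bd\|^{2}\le 2$, the dichotomy on whether the unconstrained maximizer $D/(2L)$ exceeds $1$, and the observation $\balpha_k^{T}\nabla g(\balpha_k)\le\nabla g(\balpha_k)_{i^{\ast}}$ to convert the SWAP improvement into the active-face condition --- yielding Eqns.~(\ref{eq:bound_weak}) and~(\ref{eq:bound_strong}), which the lemma then just reorders. The only point to be careful about is your parenthetical suggestion to restrict the SWAP line-search to its feasible portion in the FW branch: Algorithm~\ref{alg:SWAP-generic} deliberately defines $\lambda_{\mbox{\tiny swap}}$ and $\delta_{\mbox{\tiny swap}}$ by maximizing over all of $[0,1]$ and clips only afterwards, and the bound $\delta_{\mbox{\tiny swap}}\ge\max_{\lambda\in[0,1]}\left(\lambda D-L\lambda^{2}\right)$ needs that unrestricted maximization, since over $[0,\alpha_{k,j^{\ast}}]$ alone the dichotomy can fail when $\alpha_{k,j^{\ast}}$ is small.
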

\begin{proof} It follows immediately from reordering Eqns. (\ref{eq:bound_weak}) and (\ref{eq:bound_strong}) in the Appendix.
\end{proof}

Note that this result holds for the SWAP algorithm and not for the FW method, since Eqn. (\ref{eq:bound_strong}) requires the SWAP steps.

Note also that for convergence analysis purposes we can assume that $\delta_k \leq L$ for $k$ sufficiently large. This follows from the fact that Algorithm \ref{alg:SWAP-generic} converges globally and that an iterate $\balpha_k$ generated by the algorithm is always feasible. From the first fact it follows that $g(\balpha^{\ast}) - g(\balpha_k)$ becomes arbitrarily small for a sufficiently large $k$. From the second fact it follows that $g(\balpha^{\ast}) \geq g(\balpha_k)$. Since $\delta_k$ is the improvement in the objective function at each iteration of Algorithm \ref{alg:SWAP-generic}, this quantity will be, from some iteration onwards, lower than any predefined constant, in particular $L$. Note now that, if $\delta_k < L$, then
\begin{equation}\label{eq:case_delta_less_L}
\begin{aligned}
2 \delta_k < 2 \sqrt{L \delta_k} \\
\end{aligned} \ .
\end{equation}
Thus, Lemma \ref{lemma:swap_guarantees_strong_conditions} states that for sufficiently large $k$ the iterate $\balpha_k$ produced by a SWAP-add or FW step is a $\Delta$-approximate solution with $\Delta = 2 \sqrt{L \delta_k}$.

\begin{proposition} \label{prop:linear_convergence_without_drops} 
Suppose hypotheses B1 and B2 hold. Let $\balpha^{\ast}$ be the solution of problem (\ref{eq:generic-concave-on-the-simplex}). Then, for sufficiently large $k$, any iteration marked as SWAP-add or FW in Algorithm \ref{alg:SWAP-generic} produces an iterate $\balpha_k$ satisfying the inequality
\begin{equation}\label{eq:convergence without_drops}
\begin{aligned}
\frac{g(\balpha^{\ast}) - g(\balpha_{k+1})}{g(\balpha^{\ast}) - g(\balpha_k)} \leq \left(1 - \frac{1}{M}\right)
\end{aligned} 
\end{equation}
for some constant $M > 1$.
\end{proposition}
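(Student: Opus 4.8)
The plan is to reduce inequality~(\ref{eq:convergence without_drops}) to a lower bound on the improvement realized at iteration~$k$. Set $h_k := g(\balpha^{\ast}) - g(\balpha_k)$ and let $\delta_k := g(\balpha_{k+1}) - g(\balpha_k)$ be the improvement of the SWAP-add or FW step performed at iteration~$k$. Since $h_{k+1} = h_k - \delta_k$, inequality~(\ref{eq:convergence without_drops}) is equivalent to $h_k \leq M\delta_k$; moreover $\delta_k \leq h_k$ (a line-search cannot overshoot the optimum), so any such $M$ is automatically $\geq 1$. Thus it suffices to prove $h_k \leq M\delta_k$ for $k$ large enough and a suitable constant $M$.

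The first ingredient is Lemma~\ref{lemma:swap_guarantees_strong_conditions} together with the remark preceding this proposition: by global convergence (Proposition~\ref{prop:global_convergence_SWAP}) we have $\delta_k \leq L$ for all sufficiently large $k$, so the iterate $\balpha_k$ produced by a SWAP-add or FW step is a $\Delta$-approximate solution with $\Delta = 2\sqrt{L\delta_k}$. The central ingredient is then to turn this into a bound on the distance to the optimum: a $\Delta$-approximate solution is an exact KKT point of an $\cO(\Delta)$-perturbation of problem~(\ref{eq:generic-concave-on-the-simplex}) --- the dual-gap bound controls the perturbation on the inactive coordinates, while the ``strong'' condition $\Delta^s_i(\balpha_k) \geq -\Delta$ controls it on the active face --- so hypothesis~B2 (Robinson's strong second-order sufficient condition) yields, exactly as in the convergence proofs of \cite{yildirim08,Damla06linearconvergence}, a constant $\kappa > 0$ depending only on the problem data such that $\|\balpha_k - \balpha^{\ast}\| \leq \kappa\Delta$ for $k$ large enough. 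I expect this to be the main obstacle, since it requires importing the perturbation analysis of \cite{Robinson1982Perturbed} and verifying that the $\Delta$-approximate condition meets its hypotheses; the remaining steps are essentially bookkeeping.

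With the distance bound in hand, I would show $h_k = \cO(\Delta^2)$. A second-order Taylor expansion of $g$ about $\balpha^{\ast}$ along the segment to $\balpha_k$, using $\nabla^2 g \succeq -L\bI$, gives $h_k \leq \nabla g(\balpha^{\ast})^{T}(\balpha^{\ast} - \balpha_k) + \tfrac{L}{2}\|\balpha_k - \balpha^{\ast}\|^2$, and the second term is at most $\tfrac{L\kappa^2}{2}\Delta^2$. For the first term, complementary slackness at $\balpha^{\ast}$ makes it equal to $\sum_{i \notin \mathrm{supp}(\balpha^{\ast})} \alpha_{k,i}\gamma_i$ with $\gamma_i := \max_\ell \nabla g(\balpha^{\ast})_\ell - \nabla g(\balpha^{\ast})_i \geq 0$; combining the strong part of the $\Delta$-approximate condition with the Lipschitz bound~(\ref{eq:lipschitz_concave}) and $\|\balpha_k - \balpha^{\ast}\| \leq \kappa\Delta$ shows that every coordinate $i$ active in $\balpha_k$ has $\gamma_i = \cO(\Delta)$, while $\sum_{i \notin \mathrm{supp}(\balpha^{\ast})} \alpha_{k,i} = \cO(\Delta)$ as well, so the first term is $\cO(\Delta^2)$. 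This is precisely the place where the strong part of Lemma~\ref{lemma:swap_guarantees_strong_conditions}, which fails for plain FW, is used. Hence $h_k \leq C\Delta^2 = 4CL\,\delta_k$ for a constant $C$, i.e.\ $\delta_k \geq h_k/M$ with $M = \max\{4CL, 2\} > 1$, which establishes~(\ref{eq:convergence without_drops}).
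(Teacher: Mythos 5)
Your proof is correct and shares its skeleton with the paper's: both reduce the claimed ratio bound to $g(\balpha^{\ast}) - g(\balpha_k) \leq M\,\delta_k$, invoke Lemma \ref{lemma:swap_guarantees_strong_conditions} to certify that after a SWAP-add or FW step $\balpha_k$ is a $\Delta$-approximate solution with $\Delta = 2\sqrt{L\delta_k}$ (using global convergence to ensure $\delta_k \leq L$ eventually), and then use Robinson's condition B2 to convert this into $g(\balpha^{\ast}) - g(\balpha_k) = \cO(\Delta^2) = \cO(\delta_k)$, after which the telescoping algebra is identical. Where you genuinely diverge is in how the $\cO(\Delta^2)$ estimate is obtained. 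The paper packages it as Lemma \ref{lemma:key_lemma_for_linear_convergence}: a $\Delta$-approximate point is the \emph{exact} maximizer of the perturbed objective $g(\balpha) - \bz^{T}\balpha$ with $\|\bz\| \leq \sqrt{m}\Delta$ and $\bz^{T}\balpha_k = 0$, so feasibility of $\balpha^{\ast}$ for the perturbed problem gives the one-line bound $g(\balpha^{\ast}) - g(\balpha_k) \leq \|\bz\|\,\|\balpha^{\ast} - \balpha_k\|$, and Robinson's stability estimate $\|\balpha^{\ast} - \balpha_k\| \leq N\|\bz\|$ finishes the job with the clean constant $M = 4NmL$. You keep only the distance estimate $\|\balpha_k - \balpha^{\ast}\| \leq \kappa\Delta$ from the perturbation analysis and re-derive the value gap by a second-order Taylor expansion at $\balpha^{\ast}$ together with a complementary-slackness split of the first-order term; the bookkeeping there checks out (the coordinates outside the optimal support carry total mass $\cO(\Delta)$ by the distance bound, and their multipliers $\gamma_i$ are $\cO(\Delta)$ by combining $\Delta^d \leq \Delta$ with $\Delta^s_i \geq -\Delta$ and the Lipschitz bound of Eqn. (\ref{eq:lipschitz_concave})). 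Your route is longer and yields a slightly worse constant, but it makes explicit where the ``strong'' half of the $\Delta$-approximate condition --- the part that fails for plain FW --- enters the value-gap estimate, whereas the paper absorbs that into the construction of the perturbation vector $\bz$. Both arguments rest on the same imported step, namely that Robinson's condition yields $\|\balpha^{\ast} - \balpha_k\| = \cO(\Delta)$ for small $\Delta$, which the paper also takes from \cite{Damla06linearconvergence} without reproving it; you correctly identify this as the one nontrivial external ingredient.
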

\begin{proof} Lemma \ref{lemma:swap_guarantees_strong_conditions} shows that for sufficiently large $k$ the iterate $\balpha_k$ produced by Algorithm \ref{alg:SWAP-generic} after a SWAP-add or FW step is a $\Delta$-approximate solution, with $\Delta = 2 \sqrt{L \delta_k}$. In addition, since the SWAP is globally convergent, $\delta_k$ can be chosen to be arbitrarily small. Thus, for $k$ large enough, the
conditions of Lemma \ref{lemma:key_lemma_for_linear_convergence} hold with $\Delta_\star = 2 \sqrt{L \delta_k}$. From Eqn. (\ref{eq:key_inequality}), we then have that there exists a constant $N$ such that
\begin{equation}\label{eq:key_inequality-for-SWAP}
g(\balpha^{\ast}) - g(\balpha_k) \leq N m \left(2 \sqrt{L
\delta_k}\right)^{2} = 4 N m L\delta_k\ ,
\end{equation}
where $m \gg 1$ is the dimensionality of
$\balpha$, $N$ is a Lipschitz constant depending on the
problem, and $L$ is the largest eigenvalue (in modulus) of the Hessian matrix of
$g(\balpha)$ on the simplex.
\noindent Now, for a SWAP-add or a FW step we have, by definition
of $\delta_k$,
\begin{equation}\label{eq:delta_k_def}
g(\balpha_{k+1}) - g(\balpha_k) = \delta_k\ .
\end{equation}
\noindent Note that the latter is not true for SWAP-drop steps because the real improvement in the objective function differs from the value computed to decide the type of step to perform. Thus,
\begin{equation}\label{eq:toward_linear_inequality}
g(\balpha^{\ast}) - g(\balpha_k) \leq M
\left(g(\balpha_{k+1}) - g(\balpha_k)\right)\ ,
\end{equation}
with $M = 4 N m L$. Adding and subtracting $M g(\balpha^{\ast})$ to the right-hand side produces
\begin{equation}\label{eq:toward_linear_inequality-2}
g(\balpha^{\ast}) - g(\balpha_k) \leq M
\left(g(\balpha^{\ast}) - g(\balpha_k)\right) - M
\left(g(\balpha^{\ast}) - g(\balpha_{k+1}) \right) \ .
\end{equation}
\noindent Equivalently,
\begin{equation}\label{eq:toward_linear_inequality-3}
\begin{aligned}
M \left(g(\balpha^{\ast}) - g(\balpha_{k+1}) \right) & \leq M \left(g(\balpha^{\ast}) - g(\balpha_k)\right) -\left(g(\balpha^{\ast}) - g(\balpha_k)\right)\\
g(\balpha^{\ast}) - g(\balpha_{k+1})  &\leq \left(1 -
\frac{1}{M}\right) \left(g(\balpha^{\ast}) - g(\balpha_k)\right) \ .
\end{aligned}
\end{equation}
\noindent Thus,
\begin{equation}\label{eq:convergence without_drops-copy}
\begin{aligned}
\frac{g(\balpha^{\ast}) - g(\balpha_{k+1})}{g(\balpha^{\ast}) -
g(\balpha_k)} \leq \left(1 - \frac{1}{M}\right)
\end{aligned} \ .
\end{equation}
\end{proof}
This result is analogous to the linear convergence theorems obtained in \cite{Damla06linearconvergence} and \cite{yildirim08}  for the MFW algorithm.

\begin{proposition} \label{prop:linear_convergence_SWAP} At any iteration of Algorithm \ref{alg:SWAP-generic}, the number of SWAP-drop steps does not exceed a half of the total
number of steps $T$ made by the algorithm, plus a finite constant.
\end{proposition}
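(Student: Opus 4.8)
I would adapt the classical active-face counting argument used to bound the number of ``bad steps'' of away-step Frank--Wolfe methods (as in \cite{GuelatMarcotte}): the number of iterations that shrink the active face can be controlled by the number of iterations that can enlarge it, because the active face has bounded cardinality. Concretely, after $T$ iterations I would classify each iteration of Algorithm~\ref{alg:SWAP-generic} as a FW step, a SWAP-add step, or a SWAP-drop step, and track the potential $q_k := |\cI_k|$. Note that $q_0 = |\cI_0| =: K_0$ is a fixed finite number and that $q_k \geq 1$ for every $k$, since $\balpha_k \in S$ forces at least one positive coordinate.

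\textbf{Bookkeeping.}
The next step is to read off from Algorithm~\ref{alg:SWAP-generic} how each step type changes $q_k$. A FW or SWAP-add step sets $\cI_{k+1} = \cI_k \cup \{i^{\ast}\}$, so it adds at most one vertex and removes none. A SWAP-drop step sets $\cI_{k+1} = (\cI_k \cup \{i^{\ast}\})\setminus\{j^{\ast}\}$ with $j^{\ast}\in\cI_k$ and, away from optimality, $i^{\ast}\neq j^{\ast}$ (equality would force the gradient to be constant on $\cI_k$ and equal to its global maximum, i.e. $\Delta^d(\balpha_k)=0$); hence it removes exactly $\be_{j^{\ast}}$ and adds $\be_{i^{\ast}}$ only when $i^{\ast}\notin\cI_k$. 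Since a vertex enters $\cI_k$ only as the ascent vertex of some iteration (at most one entry per iteration) or already belongs to $\cI_0$, summing the per-iteration change of $q_k$ gives the exact balance $q_T = K_0 + a_{\mathrm{tot}} - D$, where $D$ is the number of SWAP-drop steps (each contributing exactly one removal) and $a_{\mathrm{tot}}$ is the number of iterations at which $i^{\ast}\notin\cI_k$ (each contributing exactly one entry). Using $q_T\geq 1$, this yields $D \leq a_{\mathrm{tot}} + K_0 - 1$. Splitting $a_{\mathrm{tot}}$ by step type, $a_{\mathrm{tot}} \leq \#\{\text{FW}\} + \#\{\text{SWAP-add}\} + D_0 = (T-D) + D_0$, where $D_0$ counts the SWAP-drop steps with $i^{\ast}\notin\cI_k$. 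Therefore $2D \leq T + K_0 - 1 + D_0$, i.e. $D \leq \tfrac12 T + \tfrac12(K_0 - 1 + D_0)$.

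\textbf{Crux and main obstacle.}
It remains to show that $D_0$ is bounded by a constant independent of $T$, and I expect this to be the hard part of the proof. The cardinality bookkeeping above controls the ``ordinary'' face-shrinking SWAP-drop steps for free, but a SWAP-drop that simultaneously drops $\be_{j^{\ast}}$ and introduces a fresh ascent vertex $\be_{i^{\ast}}$ leaves $q_k$ unchanged and is invisible to the potential $q_k$. To bound $D_0$ I would argue that such steps can occur only before the active face stabilizes: by Proposition~\ref{prop:global_convergence_SWAP} the iterates converge to an optimal point, and under hypothesis B2 Robinson's strong second-order condition (together with the perturbation analysis invoked in the Appendix and underlying Proposition~\ref{prop:linear_convergence_without_drops}) yields identification of the optimal face after finitely many iterations; at an optimal $\balpha^{\ast}$ the KKT conditions force $\max_i \nabla g(\balpha^{\ast})_i$ to be attained on $\mathrm{supp}(\balpha^{\ast})$, so for all $k$ beyond some $k_0$ one has $i^{\ast}\in\cI_k$ and every subsequent SWAP-drop step is of the face-shrinking kind. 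Then $D_0 \leq k_0$ and we conclude $D \leq \tfrac{T}{2} + \tfrac12(K_0 - 1 + k_0)$, which is the claimed bound with finite additive constant. The delicate point is precisely making rigorous, in the absence of strict complementarity, that the ``mixed'' SWAP-drops are finitely many; an alternative route worth exploring is to charge each mixed SWAP-drop against the toward-step component appearing in its decomposition~(\ref{eq:away-step-SWAP-super}) and rerun the same counting, but either way the finiteness of these mixed steps is the one non-elementary ingredient.
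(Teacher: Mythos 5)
Your counting skeleton is the same one the paper uses: classify iterations, observe that the active set can shrink only as often as it grows (plus its initial size), and conclude that drop steps are at most half of all steps. The difference is that you are more careful than the paper about one case, and that care is exactly where your proof stalls. Your residual term $D_0$ --- SWAP-drop iterations in which the ascent vertex $i^{\ast}$ does not already belong to $\cI_k$, so that the step simultaneously adds $\be_{i^{\ast}}$ and removes $\be_{j^{\ast}}$ and leaves $|\cI_k|$ unchanged --- is precisely the case the paper's proof dismisses with the assertion that ``only FW steps and SWAP-add steps can add points to the coreset.'' That assertion is what makes the paper's two inequalities $A \leq F+S$ and $C \leq A + I$ compatible, and it is not obviously consistent with the pseudocode of Algorithm \ref{alg:SWAP-generic}, which in a SWAP-drop iteration sets $\cI_{k+1} = (\cI_k \cup \{i^{\ast}\})\setminus\{j^{\ast}\}$ and gives $i^{\ast}$ the positive weight $\alpha_{k,j^{\ast}}$. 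So your bookkeeping, which yields $D \leq \tfrac{T}{2} + \tfrac{1}{2}(K_0 - 1 + D_0)$, is the honest version of the paper's argument.

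The genuine gap is your claim that $D_0$ is bounded by a constant independent of $T$. Finite-time identification of the optimal face is not a consequence of Proposition \ref{prop:global_convergence_SWAP} together with hypothesis B2: Robinson's condition gives a Lipschitz stability estimate for perturbed stationary points (Lemma \ref{lemma:key_lemma_for_linear_convergence}), not an active-set identification result for this particular update rule, and without strict complementarity (assumption A3, which the paper deliberately does not impose) there may be indices outside the support of $\balpha^{\ast}$ at which $\nabla g(\balpha^{\ast})_i$ attains the maximum, so $i^{\ast}_k \notin \cI_k$ cannot be excluded for arbitrarily large $k$. You acknowledge this yourself, which means the proposal as written establishes the Proposition only modulo its hardest step. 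If instead you adopt the paper's premise that drop steps never add points (i.e.\ $D_0 = 0$ by fiat, or by a convention in which a step that both adds and removes is reclassified as a SWAP-add for counting purposes --- which would then require rechecking the inequality $C \leq A + I$), your argument closes immediately and coincides with the paper's; but as a self-contained proof of the statement for the algorithm as specified, the finiteness of $D_0$ remains unproven.
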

\begin{proof} Let $F$ be the number of FW steps, $S$ the number of SWAP-add steps, $C$ the number of SWAP-drop steps and $A$ the number of steps that include points to the coreset $\cI_k$. We have $A \leq F + S$, because only FW steps and SWAP-add steps can add points to the coreset. Sometimes they include new points, sometimes they do not. Clearly, $T = F + S + C$. Thus, from the previous inequality we have $T \geq A + C$. Now, the number of steps $C$ that drop points from the coreset cannot be greater than the number of steps that add points to the coreset plus the number of points $I$ in the coreset just after initialization, that is, $A + I\geq C$. Combining the last two inequalities leads $T + A + I \geq A + 2C$, that is, $T + I \geq 2C$. Therefore $C \leq \frac{T}{2} + \frac{I}{2}$, which concludes the proof.
\end{proof}

Proposition \ref{prop:linear_convergence_without_drops} states that there exist a subsequence of the iterates $\{\balpha_k\}_k$ produced by Algorithm \ref{alg:SWAP-generic} such that $\{g(\balpha_k)\}_k$ converges linearly to the optimal value $g(\balpha^{\ast})$  of the objective function in problem (\ref{eq:generic-concave-on-the-simplex}). This subsequence is obtained by dropping from $\{\balpha_k\}_k$ the iterates corresponding to SWAP-drop steps, for which we can only say that the objective function value does not decrease. Thanks to Proposition \ref{prop:linear_convergence_SWAP}, we know that these steps do not affect the overall complexity bound on the number of iterations needed to achieve a given accuracy. 

\subsection{Iteration Complexity Bounds}

We start by proving the following lemma.
\begin{lemma}\label{prop:SWAP_improvement_with_stopping} Suppose B1 holds. By using the stopping condition of Eqn. (\ref{eq:STOPPING-CONDITION}),
any iteration marked as SWAP-add or FW in Algorithm \ref{alg:SWAP-generic} produces an improvement in the objective function
\begin{equation}\label{eq:minimum_improvement_SWAP_noDROP}
\delta_k \geq
\min\left(\frac{\varepsilon^{2}}{4L},\frac{\varepsilon}{2}\right) \ ,
\end{equation}
where $\varepsilon$ is the tolerance parameter.
\end{lemma}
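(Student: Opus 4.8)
The plan is to bound the improvement $\delta_k$ from below by exploiting the fact that, at an iteration which is \emph{not} yet stopped, the stopping criterion fails, so the primal-dual gap satisfies $\Delta^d(\balpha_k) > \varepsilon$. The key is to relate $\delta_k$ — which, for a SWAP-add or FW step, equals $\max(\delta_{\mbox{\tiny swap}}, \delta_{\mbox{\tiny fw}})$ — to this gap. I would first recall that the best FW-step improvement dominates what one obtains from a quadratic lower model along the FW direction: using the standard descent-lemma estimate coming from A1 (Lipschitz gradient with constant $L$), for any step-size $\lambda \in [0,1]$ one has $g(\balpha_k + \lambda \bd_k^{FW}) \geq g(\balpha_k) + \lambda\, \nabla g(\balpha_k)^T \bd_k^{FW} - \tfrac{L}{2}\lambda^2 \|\bd_k^{FW}\|^2$. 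Since the line-search is exact, $\delta_{\mbox{\tiny fw}}$ is at least the maximum over $\lambda \in [0,1]$ of the right-hand side. Now $\nabla g(\balpha_k)^T \bd_k^{FW} = \max_i \nabla g(\balpha_k)_i - \balpha_k^T \nabla g(\balpha_k) = \Delta^d(\balpha_k) > \varepsilon$, and $\|\bd_k^{FW}\|^2 = \|\be_{i^\ast} - \balpha_k\|^2 \leq 2$ on the simplex.

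The second step is the elementary optimization of $\phi(\lambda) = \lambda c - \tfrac{L}{2}\lambda^2 b$ over $[0,1]$, where $c = \Delta^d(\balpha_k) > \varepsilon$ and $b \leq 2$. The unconstrained maximizer is $\lambda^\star = c/(Lb)$; if $\lambda^\star \leq 1$ the maximum value is $c^2/(2Lb) \geq c^2/(4L) \geq \varepsilon^2/(4L)$, and if $\lambda^\star > 1$, i.e. $c > Lb$, then the maximum over $[0,1]$ is attained at $\lambda = 1$ and equals $c - \tfrac{L b}{2} \geq c - \tfrac{c}{2} = c/2 > \varepsilon/2$. In either case $\delta_{\mbox{\tiny fw}} \geq \min(\varepsilon^2/(4L), \varepsilon/2)$. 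Since for a SWAP-add or FW step $\delta_k = \max(\delta_{\mbox{\tiny swap}}, \delta_{\mbox{\tiny fw}}) \geq \delta_{\mbox{\tiny fw}}$, the claimed bound (\ref{eq:minimum_improvement_SWAP_noDROP}) follows. I would also note that one must check the step-size is genuinely in $[0,1]$ and that no clipping issue arises for FW steps (it does not, since $\lambda \in [0,1]$ already keeps the iterate feasible), and for SWAP-add steps the clipping $\lambda_{\mbox{\tiny swap}\star} = \lambda_{\mbox{\tiny swap}}$ means the unclipped line-search value is used, so the improvement computed is the true one.

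The main obstacle, and the point requiring care, is making rigorous the passage from ``$i^\ast$ achieves the max of the gradient'' to the bound $\nabla g(\balpha_k)^T(\be_{i^\ast} - \balpha_k) = \Delta^d(\balpha_k)$, together with verifying that the exact line-search value $\delta_{\mbox{\tiny fw}}$ really upper-bounds the quadratic-model value (this is immediate from the definition of $\delta_{\mbox{\tiny fw}}$ as the \emph{maximum} of the exact one-dimensional objective, which dominates the lower model at its own maximizer). A secondary subtlety is that we want the bound to hold \emph{as long as the algorithm has not stopped}; strictly, the statement as phrased asserts the improvement bound unconditionally for SWAP-add/FW iterations under the stopping rule, so I would phrase it as: if the algorithm has not terminated at iteration $k$, then $\Delta^d(\balpha_k) > \varepsilon$ and the above gives the bound; terminated iterations are not counted. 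This matches how the lemma will be used in the subsequent $\cO(1/\varepsilon)$ iteration-count argument, where one sums the guaranteed per-iteration decreases of the objective and caps the total decrease by $g(\balpha^\ast) - g(\balpha_0)$.
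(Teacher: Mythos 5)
Your proposal is correct and follows essentially the same route as the paper: the paper's proof simply invokes Eqn.~(\ref{eq:fw-case-2}) from the Appendix, which encapsulates exactly the descent-lemma bound along the FW direction with $\|\be_{i^\ast}-\balpha_k\|^2\le 2$ and the two-case optimization over $\lambda\in[0,1]$ that you carry out explicitly, and it handles SWAP-add iterations via $\delta_k=\max(\delta_{\mbox{\tiny swap}},\delta_{\mbox{\tiny fw}})\ge\delta_{\mbox{\tiny fw}}$ just as you do. No gaps.
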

\begin{proof} If the algorithm enters the loop after checking the stopping condition of Eqn. (\ref{eq:STOPPING-CONDITION}),
\begin{equation}\label{eq:condition_iterate}
\nabla g(\balpha_k)_{i\ast} - \balpha_k^{T}\nabla g(\balpha_k) > \varepsilon \ .
\end{equation}
From Eqn. (\ref{eq:fw-case-2}) we obtain 
\begin{equation}\label{eq:condition_iterate2} 
\max\left(2 \sqrt{L \delta_k} \ , \ 2 \delta_k\right) > \varepsilon \ ,
\end{equation}
which leads to the result \footnote{Note that the previous proof is based on the minimal improvement of a FW step. The results holds in general because a SWAP step is performed if and only if the unconstrained SWAP yields a larger improvement.}.
\end{proof}

Note that the converse is not true. The algorithm can stop even if the improvement in the objective function in the last iteration was greater than
$\min\left(\frac{\varepsilon^{2}}{4L},\frac{\varepsilon}{2}\right)$. This happens because the proposed termination criterion fundamentally looks at the possible improvement with standard FW steps.

\begin{proposition} \label{eq:first_bound_steps} Let $K$ be the number of iterations performed by Algorithm \ref{alg:SWAP-generic} until the stopping condition of Eqn. (\ref{eq:STOPPING-CONDITION}) is fulfilled. Then, under the hypotheses of Lemma \ref{prop:SWAP_improvement_with_stopping},
\begin{equation}\label{eq:bound_iterations_SWAPstops}
K \leq Q + \frac{M}{\varepsilon} \ ,
\end{equation}
where $Q$, $M$ are constants independent of $m$ and $\varepsilon$. 
\end{proposition}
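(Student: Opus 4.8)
The plan is to count separately the three kinds of iterations — FW steps, SWAP-add steps, and SWAP-drop steps — and bound each contribution. The key engine is Lemma \ref{prop:SWAP_improvement_with_stopping}: as long as the stopping condition has not been met, every iteration marked SWAP-add or FW produces an improvement $\delta_k \geq \min\!\left(\tfrac{\varepsilon^2}{4L},\tfrac{\varepsilon}{2}\right)$ in the objective function. Since the objective values $g(\balpha_k)$ are nondecreasing (globally convergent, feasible iterates, Proposition \ref{prop:global_convergence_SWAP}) and bounded above by $g(\balpha^\ast)$, the total increase over the whole run is at most $g(\balpha^\ast) - g(\balpha_0)$, a constant depending only on the problem and the starting point. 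Hence the number of SWAP-add and FW iterations before termination is at most
\begin{equation*}
F + S \;\leq\; \frac{g(\balpha^\ast) - g(\balpha_0)}{\min\!\left(\tfrac{\varepsilon^2}{4L},\,\tfrac{\varepsilon}{2}\right)}\;.
\end{equation*}

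Next I would handle the SWAP-drop steps. By Proposition \ref{prop:linear_convergence_SWAP}, the number $C$ of SWAP-drop steps satisfies $C \leq \tfrac{T}{2} + \tfrac{I}{2}$, where $T = F + S + C$ is the total number of iterations and $I$ is the (finite) number of indices in the coreset just after initialization. Rearranging gives $T = F + S + C \leq F + S + \tfrac{T}{2} + \tfrac{I}{2}$, hence $T \leq 2(F+S) + I$. Combining this with the bound on $F + S$ yields
\begin{equation*}
K \;=\; T \;\leq\; \frac{2\bigl(g(\balpha^\ast) - g(\balpha_0)\bigr)}{\min\!\left(\tfrac{\varepsilon^2}{4L},\,\tfrac{\varepsilon}{2}\right)} + I\;.
\end{equation*}

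Finally I would massage this into the claimed form $K \leq Q + M/\varepsilon$. For $\varepsilon$ small enough (namely $\varepsilon \leq 2L$) the minimum equals $\varepsilon^2/(4L)$, so the leading term becomes $8L\bigl(g(\balpha^\ast)-g(\balpha_0)\bigr)/\varepsilon^2$ — which at first glance looks like $\cO(1/\varepsilon^2)$ rather than $\cO(1/\varepsilon)$. The main obstacle, and the point that must be argued carefully, is therefore exactly this: the crude "total improvement is finite" bound is not tight enough, and one must instead invoke the \emph{linear} convergence of Proposition \ref{prop:linear_convergence_without_drops} for the subsequence of SWAP-add/FW iterates. Using that $g(\balpha^\ast) - g(\balpha_{k+1}) \leq (1 - 1/M)\bigl(g(\balpha^\ast)-g(\balpha_k)\bigr)$ along that subsequence, combined with Eqn. (\ref{eq:key_inequality-for-SWAP}) which relates $g(\balpha^\ast)-g(\balpha_k)$ to $\delta_k$, one shows that the accuracy $\Delta^d(\balpha_k) \leq \varepsilon$ is reached once $g(\balpha^\ast) - g(\balpha_k)$ drops below a quantity of order $\varepsilon$ (not $\varepsilon^2$), which by linear convergence happens after $\cO(\log(1/\varepsilon))$ SWAP-add/FW iterations in the asymptotic regime, plus a finite number $Q_0$ of non-asymptotic ones; a more careful bookkeeping that also tracks the at-most-$\cO(1/\varepsilon)$ iterations needed to enter the asymptotic regime (via the global $\cO(1/\varepsilon)$ rate of Eqn. (\ref{eq:iterates-quality-primal}) inherited because SWAP steps dominate FW steps in the decision rule) gives $F + S \leq Q_0 + M_0/\varepsilon$ with $Q_0, M_0$ constants independent of $m$ and $\varepsilon$. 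Doubling via Proposition \ref{prop:linear_convergence_SWAP} and absorbing $I$ into the additive constant then yields $K \leq Q + M/\varepsilon$ as stated, with $Q = 2Q_0 + I$ and $M = 2M_0$, both independent of $m$ since $N m L$ enters only through $M_0$'s dependence which — wait, it does depend on $m$; the clean route is instead to observe that the stopping criterion (\ref{eq:STOPPING-CONDITION}) together with Lemma \ref{prop:SWAP_improvement_with_stopping} already forces $\delta_k \geq \varepsilon^2/(4L)$ per effective step \emph{and} that the number of effective steps needed to close a gap that is itself $\cO(\varepsilon)$ (again by the global convergence rate applied up to the point where the gap is $\cO(\varepsilon)$) is $\cO(1/\varepsilon)$, so that the two $\varepsilon$'s do not compound. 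I would present the argument in this second form, making explicit that the constants $Q$ and $M$ depend on $L$, $C_g$, and the problem data but not on the dimension $m$ or the tolerance $\varepsilon$.
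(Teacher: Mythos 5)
Your proposal correctly sets up the accounting framework (lower-bounding the per-step improvement via Lemma \ref{prop:SWAP_improvement_with_stopping}, and handling SWAP-drop steps through Proposition \ref{prop:linear_convergence_SWAP} to get $T \leq 2(F+S)+I$), and you correctly diagnose that the naive bound
$(g(\balpha^{\ast})-g(\balpha_0))/\min(\varepsilon^2/4L,\varepsilon/2)$ only gives $\cO(1/\varepsilon^2)$. But at the critical point the argument is not closed: you try linear convergence (which fails because the rate constant $M=4NmL$ depends on $m$, as you notice mid-sentence), and then you assert that ``the number of effective steps needed to close a gap that is itself $\cO(\varepsilon)$ is $\cO(1/\varepsilon)$, so the two $\varepsilon$'s do not compound'' --- without proving either that reaching a gap of order $\varepsilon$ takes $\cO(1/\varepsilon)$ iterations for \emph{this} algorithm, or how the two phases combine. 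Your appeal to Eqn.~(\ref{eq:iterates-quality-primal}) does not work as stated: that bound is proved for the plain FW method, it controls the primal gap $\Delta^p$ rather than the dual gap $\Delta^d$ appearing in the stopping rule, and ``inherited because SWAP steps dominate FW steps'' is not an argument (indeed, for SWAP-drop steps the realized improvement can be smaller than $\delta_k$).

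The missing idea is the halving-epoch (doubling) argument the paper uses. Let $k(\delta)$ be the number of iterations from the first iterate with $\Delta^d \leq \delta$ to the first with $\Delta^d \leq \delta/2$. During this phase the \emph{total} available improvement is at most $\delta$, because strong duality gives $\Delta^p \leq \Delta^d \leq \delta$ at the start of the phase --- this is what replaces your constant $g(\balpha^{\ast})-g(\balpha_0)$ by a quantity that shrinks with $\delta$. Meanwhile every SWAP-add or FW step in the phase improves $g$ by at least $(\delta/2)^2/(4L)$ (Lemma \ref{prop:SWAP_improvement_with_stopping} with tolerance $\delta/2$), so $k(\delta) \leq 2\,\delta\big/\tfrac{(\delta/2)^2}{4L} = 32L/\delta$, the factor $2$ absorbing the SWAP-drop steps exactly as in your Proposition \ref{prop:linear_convergence_SWAP} computation. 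Summing over $\delta = 1, \tfrac12, \tfrac14, \ldots$ down to roughly $\varepsilon$ gives a geometric series dominated by its last term, $K(\varepsilon) \leq 64L/\varepsilon$, with constants depending only on $L$ and not on $m$; the iterations needed to first reach $\Delta^d \leq 1$ form the additive constant $Q$. Without this per-epoch localization of the available improvement, your argument does not yield the claimed $\cO(1/\varepsilon)$ bound with $m$-independent constants.
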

\begin{proof} Let $k(\delta)$ denote the number of iterations of Algorithm \ref{alg:SWAP-generic} from the first iterate such that the primal-dual gap satisfies $\Delta^d \leq \delta$ until the first that satisfies $\Delta^d \leq \delta/2$. Since the total improvement in the objective function cannot be greater than $\delta$ and the improvement in the objective function given by a SWAP-add or a FW step is at least that of Lemma \ref{prop:SWAP_improvement_with_stopping} with $\varepsilon = \delta/2$, we can bound $k(\delta)$ as follows\footnote{We assume, for the sake of simplicity, that $\varepsilon < L/2$. Otherwise the proof can be adapted.}:
\begin{equation}
k(\delta) \leq 2 \ \frac{\delta}{\frac{\left(\delta/2\right)^2}{4L}} = 2 \ \frac{16L}{\delta} = \frac{32L}{\delta} \ ,
\end{equation}
where the multiplying factor $2$ comes from the fact that the total number of iterations is at most two times the number of SWAP-add and FW iterations plus a finite constant (see the discussion in the proof of Proposition \ref{prop:linear_convergence_SWAP}). Now, let $K(\varepsilon)$ the number of iterations from the first iterate such that the primal-dual gap satisfies $\Delta^d \leq 1$ until the first that satisfies $\Delta^d \leq \varepsilon$. Clearly, $K \leq K(\varepsilon)$. Now, it is not hard to see that $\lceil\log_2 1/\varepsilon \rceil - 1$ is the smallest positive integer $p$ such that $1/2^p \leq 2 \varepsilon$.
Therefore, we can bound $K(\varepsilon)$ as:
\begin{equation}
\begin{aligned}
K(\varepsilon) &\leq k(1) + k\left(\frac{1}{2}\right) + k\left(\frac{1}{4}\right) \ldots + k\left(\frac{1}{2^{\lceil\log_2 1/\varepsilon \rceil - 1}}\right)\\
K(\varepsilon) &\leq 32L \left(1 + 2 + 4 \ldots + 2^{\lceil\log_2 1/\varepsilon \rceil - 1}\right)\\
& \leq 32L \left(2^{\lceil\log_2 1/\varepsilon \rceil} - 1\right) \leq \frac{64L}{\varepsilon}.
\end{aligned}
\end{equation}
Set  $M = 64L$ and $Q$ to the number of iterations required to obtain an iterate satisfying $\Delta^d \leq 1$ (which is finite and independent of $\varepsilon$) to obtain the result.
\end{proof}

It is also possible to provide a logarithmic bound in $1/\varepsilon$. However, in this case, both the multiplicative and additive constants depend on $m$. Thus, from such result alone we cannot infer the important property that the overall complexity of the algorithm can be bounded independently from the problem size. Furthermore, if $m$ is comparable to or larger than $1/\varepsilon$ (which is often the case in large-scale applications), there is no guarantee that the obtained bound is tighter than the one given by Eqn. (\ref{eq:bound_iterations_SWAPstops}). The proof of this result, which we state below for completeness, can be found in \cite{SWAP_arxiv}.

\begin{proposition}\label{prop:size_coreset_log} Suppose B2 holds. Let $K$ be the number of iterations performed by Algorithm \ref{alg:SWAP-generic} until the stopping condition of Eqn. (\ref{eq:STOPPING-CONDITION}) is fulfilled. Then, there exists $\varepsilon_{0} > 0$ such that, if $\varepsilon < \varepsilon_{0}$,
\begin{equation}\label{eq:bound_iterations_SWAPstops_best}
K \leq \tilde{Q} + \tilde M \log_2\left(\frac{1}{\varepsilon}\right) \ ,
\end{equation}
where $\tilde{Q}$ and $\tilde M$ are constants independent of $\varepsilon$ but dependent on $m$. In particular, $\tilde M \propto m$.
\end{proposition}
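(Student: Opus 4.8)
The plan is to convert the per‑iteration \emph{relative} contraction of Proposition~\ref{prop:linear_convergence_without_drops} into an $\cO(\log(1/\varepsilon))$ bound on the number of ``productive'' (SWAP‑add and FW) iterations, and then pay only a bounded multiplicative price for the SWAP‑drop iterations via Proposition~\ref{prop:linear_convergence_SWAP}. Throughout I would write $D_k := g(\balpha^\ast) - g(\balpha_k)$. The first thing to record is that $\{D_k\}_k$ is non‑increasing: every step of Algorithm~\ref{alg:SWAP-generic} is an exact (possibly clipped) line‑search along a feasible direction, so it cannot decrease $g$; hence the contraction accumulated over the good steps is never undone by the drop steps occurring in between.

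Next I would fix once and for all a finite index $k_0$, \emph{independent of $\varepsilon$}, past which everything is asymptotic: for $k\ge k_0$ the contraction $D_{k+1}\le(1-1/M)D_k$ holds at every SWAP‑add/FW step (with $M=4NmL>1$ as in the proof of Proposition~\ref{prop:linear_convergence_without_drops}), and $\delta_k<L$, so that by Lemma~\ref{lemma:swap_guarantees_strong_conditions} (using $2\delta_k<2\sqrt{L\delta_k}$) any SWAP‑add/FW iterate with $k\ge k_0$ obeys $\Delta^d(\balpha_k)\le 2\sqrt{L\delta_k}\le 2\sqrt{L D_k}$. Such a $k_0$ exists by global convergence (Proposition~\ref{prop:global_convergence_SWAP}). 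I would then choose $\varepsilon_0\in(0,\,2\sqrt{LD_{k_0}})$ small enough that $\Delta^d(\balpha_k)>\varepsilon_0$ for all $k<k_0$, so that for $\varepsilon<\varepsilon_0$ the algorithm cannot terminate before iteration $k_0$ and $D_{k_0}$ is a legitimate problem‑dependent, $\varepsilon$‑independent constant.

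The core estimate is then the following. Fix $\varepsilon<\varepsilon_0$, let $K$ be the termination iteration, and let $k'$ be the index of the \emph{last} SWAP‑add/FW step before $K$ (it exists, and $k'\ge k_0$). Since the stopping test~(\ref{eq:STOPPING-CONDITION}) failed at $k'$ we have $\Delta^d(\balpha_{k'})>\varepsilon$, hence $2\sqrt{LD_{k'}}>\varepsilon$, i.e. $D_{k'}>\varepsilon^2/(4L)$. On the other hand, if $n$ is the number of SWAP‑add/FW steps among iterations $k_0,\dots,K-1$, then exactly $n-1$ of them precede $k'$, so monotonicity of $D_k$ plus $n-1$ contractions give $D_{k'}\le(1-1/M)^{\,n-1}D_{k_0}$. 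Combining the two inequalities and using $\log\bigl(1/(1-x)\bigr)\ge x$ at $x=1/M$ yields $n-1\le M\log\bigl(4LD_{k_0}/\varepsilon^2\bigr)=\cO\bigl(m\log(1/\varepsilon)\bigr)$, since $M=4NmL\propto m$. Finally I would re‑run, restarted at $k_0$, the counting of the proof of Proposition~\ref{prop:linear_convergence_SWAP}: the number $c$ of SWAP‑drop steps among iterations $k_0,\dots,K-1$ is at most the number of point‑adding steps (which is $\le n$, as only SWAP‑add/FW steps add points) plus the coreset size $|\cI_{k_0}|\le|\cI_0|+k_0$ at $k_0$; hence $K-k_0=n+c\le 2n+|\cI_0|+k_0$. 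Collecting constants gives $K\le\tilde Q+\tilde M\log_2(1/\varepsilon)$, with $\tilde Q$ absorbing $k_0$, $|\cI_0|$ and the $\log D_{k_0}$ terms, and $\tilde M$ a fixed multiple of $M$, so $\tilde M\propto m$.

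The main obstacle is the interplay with the SWAP‑drop steps. Both the termination test~(\ref{eq:STOPPING-CONDITION}) and the contraction of Proposition~\ref{prop:linear_convergence_without_drops} only ``see'' SWAP‑add and FW iterations, so one cannot read the stopping time off a single contracting sequence; the fix is to argue through the last good iterate before termination, as above. One must also be certain the drop steps cannot blow up the count, which is exactly the role of Proposition~\ref{prop:linear_convergence_SWAP}, together with the easy but essential monotonicity of $D_k$. Conceptually, the only new ingredient relative to Proposition~\ref{eq:first_bound_steps} is that the absolute improvement bound $\min(\varepsilon^2/4L,\varepsilon/2)$ of Lemma~\ref{prop:SWAP_improvement_with_stopping} is replaced by the \emph{relative} contraction factor $(1-1/M)$, which is what turns the $\cO(1/\varepsilon)$ estimate into an $\cO(\log(1/\varepsilon))$ one, at the cost of the $m$‑dependent constant $\tilde M$.
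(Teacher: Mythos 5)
The paper itself does not contain a proof of Proposition~\ref{prop:size_coreset_log}: it explicitly defers it to the companion reference \cite{SWAP_arxiv}, so there is no in-text argument to compare yours against. On its own terms, however, your proof is correct and assembles exactly the ingredients the paper sets up for this purpose: the per-step contraction $D_{k+1}\le(1-1/M)D_k$ with $M=4NmL$ from Proposition~\ref{prop:linear_convergence_without_drops}, the bound $\Delta^d(\balpha_k)\le 2\sqrt{L\delta_k}\le 2\sqrt{LD_k}$ from Lemma~\ref{lemma:swap_guarantees_strong_conditions}, and the drop-step accounting of Proposition~\ref{prop:linear_convergence_SWAP}. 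The one genuinely delicate point --- that the stopping test~(\ref{eq:STOPPING-CONDITION}) and the contraction estimate are only controlled at SWAP-add/FW iterations --- is handled properly by arguing through the last such iterate $k'$ before termination and using monotonicity of $D_k$ to bridge the intervening drop steps. Two small loose ends, neither fatal: (i) you assert that $k'$ exists; in the degenerate case where every iteration in $[k_0,K)$ is a SWAP-drop step, $K-k_0$ is bounded by $|\cI_{k_0}|\le|\cI_0|+k_0$ and the claim holds trivially, so this should be stated rather than assumed; (ii) the statement of the proposition lists only B2, but your argument (like Proposition~\ref{prop:linear_convergence_without_drops}) also needs B1 for the constant $L$ --- this appears to be an omission in the paper's statement rather than a flaw in your proof, but it is worth flagging that you are using both hypotheses. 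The constants $\tilde Q$ (absorbing $k_0$, $|\cI_0|$ and $\log D_{k_0}$) and $\tilde M\propto M\propto m$ come out exactly as the proposition requires.
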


\section{Relation of the SWAP to Geometric Algorithms, SMO and Other Results}

In the last years, a number of authors have proposed training SVMs by first reducing the task to a computational geometry problem, and then applying a dedicated algorithm to obtain an exact or approximate solution. Some of these approaches are indeed specialized versions of FW methods. For instance, the so-called Gilbert method \cite{gilbert1966} can be used to train SVMs by approaching the task as a Minimum Norm Problem (MNP) or a Nearest Point Problem (NPP) \cite{Keerthi2000}. It has been noted in \cite{GartnerJ09} that the FW method is equivalent to the Gilbert method on these geometric problems. Thus, up to some implementation details, specializations of the FW and Gilbert methods to the SVM problem coincide. Similarly, the B\u{a}doiu-Clarkson algorithm \cite{BadoiuClarkson03smaller-coresets} can be used to train SVMs which admit a reduction to a Minimum Enclosing Ball (MEB) problem \cite{coreSVMs05tsang}. Nowadays is well-understood that this algorithm is nothing else than the fully corrective FW method (Algorithm \ref{alg:BCFW-steps}) applied to the MEB \cite{clarkson08coresets}. Finally, the algorithms proposed in \cite{Kumar2011} are direct applications of the FW and MFW methods to SVM models which admit an interpretation as an NPP, and the algorithms proposed in \cite{IJPRAI11} are the corresponding application to SVM problems which admit an interpretation as an MNP.   

Recently, in \cite{Lopez2008}, the Mitchell-Demnyanov-Malozemov (MDM) algorithm \cite{mitchell1974finding}, another classic geometric algorithm to solve MNPs and NPPs, was found to be essentially equivalent to the SMO algorithm \cite{platt99smo-seminal,smo-second-order05fan} devised specifically for SVMs and similar quadratic programs \cite{keerthi02generalizedSMO}. In this section, we show that the method proposed in this paper is closely related to the Gilbert and MDM algorithms when applied to MNPs or NPPs. Indeed, on these specific problems, the SWAP can be considered a Gilbert method with the possibility of performing MDM steps. 
  
\paragraph{Polytope Problems and SVMs} Problem (\ref{eq:L2DUAL}) can be cast as an instance of the MNP, which consists in finding the point in a polytope nearest to the origin: $\minimize_{\bm{z}} \|\mathbf{z}\|^2$ $\st \,\, \mathbf{z} \in \cZ$. In this case, the polytope $\cZ$ is the convex hull of a finite set of points $Z = \{\bz_1,\bz_2,\ldots,\bz_m \}$ in a dot product space and the MNP admits the following formulation
\begin{equation}
\begin{aligned}
\minimize_{\bm{\alpha} \in \mathbb{R}^m} & \;\; \tfrac{1}{2} \bm{\alpha}^T\bZ^{T}\bZ  \bm{\alpha} \,\, \st \,\, \alpha_i \geq 0, \,\, \sum\nolimits_i \alpha_i = 1 \, , \label{eq:MNP-matrix} 
\end{aligned}
\end{equation}
where $\bZ$ is the matrix with the points $\bz_i$ arranged in the columns. Any feasible solution $\balpha$ for problem (\ref{eq:MNP-matrix}) yields a feasible solution $\bz$ to the original problem by setting $\bz= \bZ \balpha$. The feasible space in problem (\ref{eq:MNP-matrix}) corresponds to the unit simplex and the objective function is convex. 
Thus, the MNP is an instance of the more general problem (\ref{eq:generic-concave-on-the-simplex}) studied in this paper, with $g(\balpha)= -  \tfrac{1}{2} \, \bm{\alpha}^T \bZ^{T}\bZ  \bm{\alpha} =: g_G(\balpha)$. Furthermore, it is not hard to see that (\ref{eq:MNP-matrix}) is an instance of the SVM problem (\ref{eq:L2DUAL}) with $\bK =  \bZ^{T}\bZ$. Similarly, (\ref{eq:L2DUAL}) is an instance of (\ref{eq:MNP-matrix}) in the space spanned by the points $\bz_i = (y_i \phi(\bx_i)^T, y_i, \tfrac{1}{\sqrt{C}} \be_i^T)^{T}$, where $\phi(\bx_i)$ corresponds to any feature map associated to the kernel used by the SVM. Therefore, algorithms devised to solve MNPs may be adapted to train $L_2$-SVMs and vice-versa. This equivalence was first pointed out and used to build a training algorithm in \cite{Keerthi2000}. 

Other SVM formulations admit similar geometric interpretations (see \cite{clarkson08coresets} and Table 1 in \cite{GartnerJ09}). For instance, hard-margin SVMs have been shown to be equivalent to an NPP, i.e. the problem of computing the pair of nearest points between two polytopes $\mathcal{Z}_1,\mathcal{Z}_2$: $\minimize_{\bm{z}_1,\bm{z}_2} \|\mathbf{z}_1 - \bm{z}_2\|^2  \,\, \st \,\, \mathbf{z}_1 \in \cZ_1, \mathbf{z}_2 \in \cZ_2$. Some variants of the $L_2$-SVM considered here can be transformed into a classic hard-margin SVM and thus admit an NPP interpretation \cite{Keerthi2000,Friess1998KAF}. Soft margin $L_1$-SVMs and $\nu$-SVMs are essentially NPPs with additional constraints of the form ${\alpha}_i \leq \gamma$ \cite{Bennett97geometryin,burges2000geometric}. The authors of \cite{GartnerJ09} use all these equivalences to characterize the sparsity of the solutions and show the existence of linear time training algorithms for all the currently most used SVM variants.  

\paragraph{The Gilbert and MDM Algorithms}  Two classic iterative methods used to solve MNPs are the Gilbert algorithm \cite{gilbert1966} and the Mitchell-Demnyanov-Malozemov (MDM) algorithm \cite{mitchell1974finding}. These methods can be easily adapted to solve NPPs by means of the so called Minkowski difference trick \cite{GartnerJ09}. An iteration of the Gilbert  algorithm for problem (\ref{eq:MNP-matrix}) can be written as $\bz_{k+1} = (1-\lambda_k)\bz_{k} + \lambda_k \bz_{i^\ast}$, where $\lambda_k$ is chosen by performing an exact line-search to minimize $\|\mathbf{z}_{k+1}\|^2$, and 
\begin{equation}
\begin{aligned}
\bz_{i^\ast} \in \argmin_{\bz_i \in Z}(\bz_{k}^{T}\bz_i) \, . \label{eq:directionGilbert} 
\end{aligned}
\end{equation}

It can be shown that the specializations of the Gilbert algorithm and the FW method for problem (\ref{eq:MNP-matrix}) coincide. Note that $\nabla g_G(\balpha_k) = - \bZ^{T}\bZ\bm{\alpha}_k$ and $\nabla g_G(\balpha_k)_i = - \bz_i^T\bZ  \bm{\alpha}_k$. Therefore, applied to (\ref{eq:MNP-matrix}), an iteration of the FW method can be written as $\bm{\alpha}_{k+1} = (1-\lambda_k)\bm{\alpha}_k + \lambda_k \be_{i\ast}$ with ${i^\ast} \in \argmax_{i}(- \bz_i^{T}\bZ\bm{\alpha}_k)$. By setting $\bz_{k} = \bZ\bm{\alpha}_k$, this iteration translates into $\bz_{k+1} = (1-\lambda_k )\bz_{k} + \lambda_k \bz_{i^\ast}$, where $\lambda_k$ is obtained by a line search and ${i^\ast} \in \argmin_{i}(\bz_i^{T}\bz_k)$, which is exactly what the Gilbert method does. 

As the Gilbert method, the MDM algorithm updates $\bz_{k}$ towards the direction given by the point $\bz_{i^\ast}$. However, the search direction is determined using an additional point of the polytope $\cZ$ corresponding to   
\begin{equation}
\begin{aligned}
\bz_{j^\ast} \in \argmax_{\bz_j \in Z}(\bz_{k}^{T}\bz_j) \, . \label{eq:directionMDM} 
\end{aligned}
\end{equation}
An iteration of the MDM algorithm can be written as $\bz_{k+1} = \bz_{k} + \lambda_k \left( \bz_{i^\ast} - \bz_{j^\ast} \right)$, where $\lambda_k$ is computed by a line search. It has been recently shown in \cite{Lopez2008} that the well-known SMO algorithm \cite{platt99smo-seminal,smo-second-order05fan,keerthi02generalizedSMO}, devised to train SVMs and to solve similar quadratic programs, is essentially equivalent to MDM considering SVMs which reduce to NPPs. The key difference is that, on NPPs, MDM chooses $\bz_{i^\ast}$ and $\bz_{j^\ast}$ on the same polytope, which correspond to a class of the SVM problem. SMO instead can choose $\bz_{i^\ast}$ and $\bz_{j^\ast}$ from different classes. This difference however disappears in applications of MDM to MNPs, since in that case the geometric problem involves only one polytope.  

\paragraph{Relation of Gilbert, MDM and SMO to SWAP} 

From the previous discussion we have that the ascent direction explored by the Gilbert method is given by $(\bz_{i^\ast}-\bz_{k})$, with
\begin{equation}
\begin{small}
\begin{aligned}
{i^\ast} \in \argmax_{i} \nabla g_G(\balpha_k)_i \Leftrightarrow {i^\ast} \in \argmax_{i}(- \bz_i^{T}\bZ\bm{\alpha}_k) \Leftrightarrow {i^\ast} \in \argmin_{i}(\bz_i^{T}\bz_k) \ . \label{eq:directionGilbert_} 
\end{aligned}
\end{small}
\end{equation}
\noindent From here, it is straightforward to see that 
\begin{equation}
\begin{small}
\begin{aligned}
{j^\ast} \in \argmin_{j} \nabla g_G(\balpha_k)_j  \Leftrightarrow {j^\ast} \in \argmin_{j}(- \bz_j^{T}\bZ\bm{\alpha}_k)  \Leftrightarrow {j^\ast} \in \argmax_{j}(\bz_j^{T}\bz_k) \ , \label{eq:directionMDM_} 
\end{aligned}
\end{small}
\end{equation}
which corresponds to the \emph{away} vertex used by the SWAP method. Thus, the MDM algorithm updates the current iterate using the same vertices of the polytope that would be considered by a specialization of SWAP to problem (\ref{eq:MNP-matrix}) or to the equivalent SVM problem (\ref{eq:L2DUAL}): $\bz_{i^\ast}$ and $\bz_{j^\ast}$. If SWAP goes for a toward step, it is identical to the FW method on that iteration. From the equivalence between the FW and Gilbert methods, we conclude that the SWAP is identical to Gilbert at iteration $k$ if it decides to not explore the away direction. Otherwise, the search direction used by the SWAP on the simplex is $\left(\be_{i^{\ast}}-\be_{j^{\ast}}\right)$, which translates to $\bd^{\mbox{\tiny SWAP}} = \left( \bz_{i^\ast} - \bz_{j^\ast} \right)$ for an MNP. The update $\balpha_{k+1} = \balpha_{k} + \lambda \left(\be_{i^{\ast}}-\be_{j^{\ast}}\right)$ for problem (1) corresponds to updating $\bz_{k}$ as $\bz_{k+1} = \bz_{k}+ \lambda_k \left( \bz_{i^\ast} - \bz_{j^\ast} \right)$ and computing $\lambda_k$ by a line search. This is exactly the same direction and procedure to set the step size used by MDM. 

We conclude that, applied to polytope problems, the SWAP is equivalent to a Gilbert method with the possibility of performing MDM steps. In this sense, the SWAP is a kind of hybrid Gilbert-MDM which is presented and analyzed for problems beyond the MNP and NPP. Considering the equivalence between MDM and the SMO on SVM problems, we can also state that on these problems the SWAP is a FW method with the possibility of performing SMO steps. However, again, our presentation and analysis is not limited to quadratic forms. The minor variant of the method, using second order information, uses essentially the same criterion proposed in \cite{smo-second-order05fan} to improve on the original Platt's SMO \cite{platt99smo-seminal}.  

It is well known from its introduction that, in general, the Gilbert method converges sub-linearly, that is, the specialization of the FW method to the quadratic program in (\ref{eq:MNP-matrix}) does not improve its rate of convergence \cite{gilbert1966,lopez2012convergence}. However, it has been recently shown that the MDM algorithm converges linearly under some assumptions about the structure of problem (\ref{eq:MNP-matrix}), which include the positive definiteness of the matrix $\bZ^{T}\bZ$ \cite{lopez2012convergence}. In this paper, the analysis addresses a general maximization problem on the simplex and the results rely on hypotheses slightly more general than those used in \cite{lopez2012convergence}, in the sense that asking for a positive definite matrix $\bZ^{T}\bZ$ is equivalent to asking for a positive definite Hessian and this in turn implies the Robinson condition used in our analysis\footnote{See Remark \ref{remark:Robinson}. As we have only linear constraints, the Hessian of the Lagrangian coincides with the Hessian of the objective.}. 

\section{Experiments}
\label{sec:batch_experiments}

In this section, we present several experiments conducted on benchmark classification datasets to evaluate the performance of the proposed methods and related approaches in practice. 

\paragraph{Datasets} The datasets used in this section are listed in Table \ref{TabDatasets} and can be found in several public repositories \cite{SVMLIB,UCI2010}. In order to provide the reader with an idea of the size of each problem, we specify the size $m$ of the training set, the number of features $n$, and the number of classes $K$. We denote by $t$ the number of {\em test examples}, set aside to evaluate the expected accuracy of the computed classifier. 

In the case of multi-category classification problems, we adopt a one-versus-one approach (OVO) \cite{hofmann2008tutorial}\footnote{This was the method used in \cite{coreSVMs05tsang} to extend the CVM beyond binary classification, and according to \cite{Hsu02ComparisonMultiClassSVMs} it usually outperforms other approaches both in terms of accuracy and training time.}. Note that in these cases the number of examples $m$ does not necessarily reflect the complexity of the training problems to be addressed. For example, according to $m$, the {\textbf{MNIST}} and {\textbf{Web w8a}} datasets have a similar size. However, the {\textbf{MNIST}} problem has $10$ classes and the largest binary problem to solve in the OVO scheme has around $13.000$ training examples. The {\textbf{Web w8a}} problem is, in contrast, binary, and thus the whole dataset needs to be handled simultaneously.  For this reason, we also report in Table \ref{TabDatasets} the size $m_{{\max}}$ of the largest binary subproblem and the size $m_{{\min}}$ of the smallest binary subproblem in the OVO decomposition. 

\paragraph{Initialization and Parameters} 
For the initialization of the CVM, FW, MFW and SWAP methods, that is, the computation of a starting solution, we adopted the method proposed for the CVM in \cite{coreSVMs05tsang}. In this approach, the starting solution is obtained by solving problem (\ref{eq:L2DUAL}) on a random subset $\cI_0$ of $p$ training patterns. The indices of $\balpha_0$ corresponding to other data points are set to zero. We used $p=20$ points for initialization and $\epsilon=10^{-6}$ for all the algorithms.

In all but the last experiment described in this section, SVMs were trained using a RBF (Gaussian) kernel
\begin{equation}\label{eq:kernel_exps}
k(\mathbf{x}_{1},\mathbf{x}_{2}) = \exp\left(-\frac{\|\mathbf{x}_{1}
- \mathbf{x}_{2}\|^{2}}{2 \sigma^{2}}\right) \, ,
\end{equation}
with scale parameter $\sigma^{2}$.  For the relatively small datasets \textbf{Pendigits} and \textbf{USPS}, parameter $\sigma^{2}$ was determined together with parameter $C$ of SVMs using $10$-fold cross-validation on the logarithmic grid $[2^{-15},2^{5}] \times [2^{-5},2^{15}]$, where the first collection of values corresponds to parameter $\sigma^{2}$ and the second to parameter $C$. 

For the large-scale datasets, $\sigma^{2}$ was determined using the default method employed for CVM in \cite{coreSVMs05tsang}, i.e. it was set to the average squared distance among training patterns. Parameter $C$ was determined on the logarithmic grid $[2^0,2^{12}]$ using a validation set consisting in a randomly computed 30\% of the training-set. 

We emphasize that the aim of this paper is not to determine optimal parameter values by fine-tuning each algorithm to seek for the best possible accuracy. Our aim is to compare the performance of the presented methods and analyze their behavior in a manner consistent with our theoretical analysis. Therefore, it is necessary to perform the experiments under the same conditions on a given dataset. That is to say, the optimization problem to be solved should be the same for each algorithm. For this reason, we deliberately avoided using different training parameters when comparing different methods.  Specifically, parameters $\sigma^{2}$ and $C$ were tuned using the CVM method and the obtained values were used for all the algorithms discussed in this paper (CVM, FW, MFW and SWAP methods). 

\paragraph{Caching} We also adopted the LRR caching strategy designed in \cite{LibCVM09} for the CVM to avoid the
computation of recently used kernel values.

\begin{small}
\begin{table}[!ht]
\centering
\begin{tabular}{|@{\,\,\,\,} p{2.2cm} | @{\,\,\,\,}p{1.65cm}@{\,\,}p{1.6cm}@{\,\,}p{1.0cm}@{\,\,}p{1.6cm}@{\,\,}p{1.65cm}@{\,\,}p{1.0cm}@{\,\,\,}|}
\hline
   &   &  &  &  &   & \\
{Dataset} & ${m}$ & $t$ & ${K}$ & $m_{{\max}}$ & $m_{{\min}}$ & ${n}$\\
   &   &  &  &  &   & \\
\hline
   &   &  &  &  &   & \\
{\textbf{USPS}}   & 7291  & 2007 & 10  & 2199 & 1098  & 256\\
{\textbf{Pendigits}}     & 7494 & 3498 & 10   & 1560 & 1438  & 16\\
{\textbf{Letter}}    & 15000    & 5000 & 26   & 1213 & 1081  & 16\\
{\textbf{Protein}}  & 17766  & 6621 & 3  & 13701 & 9568 & 357\\
{\textbf{Shuttle}}   & 43500    & 14500  & 7    & 40856 & 17 & 9\\
{\textbf{IJCNN}}     & 49990     & 91701 & 2   & 49990 & 49990  & 22\\
{\textbf{MNIST}}     & 60000    & 10000 & 10   & 13007 & 11263  & 780\\
{\textbf{USPS-Ext}}  & 266079    & 75383 & 2   & 266079 & 266079 & 676\\
{\textbf{KDD-10pc}}    & 395216   & 98805 & 5  & 390901 & 976 & 127\\
{\textbf{KDD-Full}}    & 4898431   & 311029 & 2   & 4898431 & 4898431 & 127\\
{\textbf{Reuters}}   & 7770 & 3299 & 2    & 7770 & 7770 & 8315\\
{\textbf{Adult a1a}}     & 1605 & 30956 & 2    &  1605 & 1605 & 123\\
{\textbf{Adult a2a}}     & 2265 & 30296 & 2    &  2265 & 2265 & 123\\
{\textbf{Adult a3a}}     & 3185 & 29376 &  2    &  3185 & 3185 & 123\\
{\textbf{Adult a4a}}     & 4781 & 27780   & 2    &  4781 & 4781 & 123\\
{\textbf{Adult a5a}}     & 6414 & 26147 & 2    &  6414 & 6414 & 123\\
{\textbf{Adult a6a}}     & 11220    & 21341   & 2    & 11220 & 11220  & 123\\
{\textbf{Adult a7a}}     & 16100    & 16461  & 2    & 16100 & 16100  & 123\\
{\textbf{Web w1a}}   & 2477 & 47272 & 2    & 2477 & 2477 & 300\\
{\textbf{Web w2a}}   & 3470 & 46279 & 2    & 3470 & 3470 & 300\\
{\textbf{Web w3a}}   & 4912 & 44837 & 2    & 4912 & 4912 & 300\\
{\textbf{Web w4a}}   & 7366 & 42383 & 2    & 7366 & 7366 & 300\\
{\textbf{Web w5a}}   & 9888 & 39861 & 2    & 9888 & 9888 & 300\\
{\textbf{Web w6a}}   & 17188    & 32561 & 2    & 17188 & 17188 & 300\\
{\textbf{Web w7a}}   & 24692    & 25057 & 2    & 24692 & 24692 & 300\\
{\textbf{Web w8a}}   & 49749    & 14951  & 2    & 49749 & 49749 & 300\\
   &   &  &  &  &   & \\
\hline
\end{tabular}
\caption{\small \label{TabDatasets} Features of the selected datasets.}
\end{table}
\end{small}

\paragraph{Assessed Algorithms, Notation and Statistics}

In this paper we have introduced two variants of the FW method: the SWAP, and the second-order SWAP. The acronyms used to denote these algorithms in the figures will be SW and SW-2o, respectively. We will compare these methods against the CVM algorithm \cite{coreSVMs05tsang}, the FW method and the MFW method. 

In the next sections we report test accuracies, training times and model sizes obtained on the classification problems of Table \ref{TabDatasets}. By test accuracy we intend the fraction of correctly classified test instances. Training time is the time in seconds required to obtain a model from the training set. When times differ by more than one order of magnitude among the different methods, we use a logarithmic scale to present figures. Model size is the number of training examples with non-zero weights at the end of the training process, that is, the number of support vectors in the model.

To obtain a more detailed comparison, we compute the speed-ups obtained by the Frank-Wolfe based algorithms with respect to the CVM method. The speed-up of the FW method with respect to CVM will be measured as $s_1=t_{0}/t_{1}$ where $t_{0}$ is the training time of the CVM algorithm and $t_1$ is the training time of the FW method, both measured in seconds. Similarly, the speed-up of the MFW, SWAP and SWAP-2o methods with respect to CVM is measured as $s_2=t_{0}/t_{2}$, $s_3=t_{0}/t_{3}$, $s_4=t_{0}/t_{4}$ respectively, where $t_2$ is the training time of the MFW method, $t_3$ is that of SWAP, and $t_4$ that of SWAP-2o. In addition, we quantify the difference in testing performance with respect to the CVM method. If we denote by $a_0$ the accuracy of CVM and by $a_1$ the accuracy of the FW method, the relative difference in accuracy incurred by FW will be quantified as $d_1=(a_{0} - a_{1}) / a_{0}$. Similarly, differences in testing performance corresponding to the methods MFW, SWAP and SWAP-2o  will be measured as $d_2=(a_{0} - a_{2}) / a_{0}$, $d_3=(a_{0} - a_{3}) / a_{0}$ and $d_4=(a_{0} - a_{4}) / a_{0}$, where $a_2$, $a_3$ and $a_4$ are the testing accuracies of the MFW, SWAP and SWAP-2o methods respectively. 

\paragraph{Computational Environment}
The experiments were conducted on a personal computer with a 2.66GHz Quad Core CPU and 4 GB of RAM, running 64bit GNU/Linux. The algorithms were implemented based on the C++ source code available at \cite{LibCVM09}.    

\subsection{Experiments on the Web Dataset Collection}
\label{webdatasetsec}

The Web Dataset Collection is a series of classification problems extracted from a webpage categorization dataset, first appeared in Platt's paper on Sequential Minimal Optimization for training SVMs \cite{platt99smo-seminal}. The number of training patterns in each instance of the collection grows approximately as $m_{i}=1.4^{i}m_{0}$, $i=1,\ldots,8$, where $m_0$ is the number of training patterns in the first dataset. This scheme makes the series amenable for studying performance and scalability of different training algorithms. 

Figures \ref{web-and-adult-accuracy}(a), \ref{web-and-adult-time}(a) and \ref{web-and-adult-sizes}(a) report test accuracies, training times and model sizes (number of support vectors) obtained in this collection. Note that times are depicted in a logarithmic scale. From Figure \ref{web-and-adult-accuracy}(a) and Figure \ref{web-and-adult-time}(a) we confirm that all the Frank-Wolfe based methods are slightly less accurate than CVM but exhibit running times that scale considerably better as the number of training patterns increases. Each of them is faster than CVM on all the $8$ datasets of the collection. 

Figure \ref{web-and-adult-time}(a) illustrates one of the main points of this paper: the theoretical advantages of the MFW method over the basic FW routine often do not correspond to an improvement in practical performance.
This collection of problems is actually an extreme case, in which MFW is always significantly slower than FW. In contrast the proposed methods are faster than MFW and competitive with the FW method. 

From Figure \ref{web-and-adult-time}(a), we can observe that the speed-ups of the FW method seem to increase monotonically as the number of training patterns increases, ranging from $12.6\times$ faster up to $\sim 106\times$ faster than CVM. Speed-ups corresponding to the MFW method are in contrast significantly more limited. The SWAP algorithm is clearly more competitive than MFW, with a speed-up of $\sim 250 \times$ on the largest dataset. 

Both MFW and SWAP endow the basic Frank-Wolfe procedure with away-steps, and both, in contrast to FW, offer a guarantee on the rate of convergence. However, the away steps implemented by SWAP and SWAP-2o work significantly better on this collection of datasets. SWAP-2o however does not perform better than SWAP in this series. We argue that 
standard away steps do not provide any significant advantage on this particular problem, as proved by MFW resulting to be the slowest algorithm.
Since SWAP-2o invests more time in finding a good away direction, finding a solution takes more time in comparison with the simpler SWAP, which seems to provide a better compromise between away and toward steps. 
 
As regards accuracy, MFW is slightly more accurate than SWAP, which in turn is slightly more accurate than FW most of the time. SWAP-2o very often outperforms the other three methods, approaching the accuracy of CVM. Note however that all the relative differences in testing accuracy are most of the time below $0.5\%$. Note finally that FW is the less accurate among the Frank-Wolfe based methods.
 
As concerns model sizes, note that the additional computational time incurred by the MFW and SWAP-2o methods is not compensated by an improved ability to find smaller models. Figure \ref{web-and-adult-sizes}(a) actually shows that the two faster methods, SWAP and FW, obtain most of the time smaller models. Finally, the size of the models found by CVM is significantly larger than that of the proposed methods. In addition, the percentage of training data used by this method to build the model does not seem to decrease significantly as the series progresses.

\subsection{Experiments on the Adult Dataset Collection}
\label{adultdatasetsec}

The Adult Dataset Collection is a series of problems derived from the 1994 US Census database. The goal is to predict whether an individual's income exceeded $50000$US$\$$/year, based on personal data. Like the \textbf{Web} datasets, this collection was designed with the purpose of analyzing the scalability of SVM methods. The number of training patterns grows approximately with the same rate, i.e. it increases by a factor of $\sim 1.4$ each time \cite{platt99smo-seminal}. 
 
Figures \ref{web-and-adult-accuracy}(b), \ref{web-and-adult-time}(b) and  \ref{web-and-adult-sizes}(b) depict accuracies, running times and model sizes (number of support vectors) obtained on this collection. Times are depicted in a logarithmic scale. These results confirm that all the Frank-Wolfe based methods tend to be faster than the CVM algorithm as the number of examples becomes larger. Figure \ref{web-and-adult-time}(b) shows that SWAP, MFW and SWAP-2o always run faster than CVM, reaching speed-ups of $27\times$, $20\times$ and $15\times$ respectively. Figure \ref{web-and-adult-time}(a) shows in addition that most of the times the Frank-Wolfe based methods achieve a testing performance greater or equal than CVM. 

Note that the speed-ups obtained by the FW method in this experiment are significantly smaller than those obtained in the \textbf{Web} collection. The largest speed-up achieved by the algorithm is $3.6\times$ on the sixth dataset of the collection. In contrast, the methods investigated in this paper, SWAP and SWAP-2o, always show speed-ups larger than $10\times$, running faster than FW in all cases. If we compute the median speed-up among all the datasets of this collection, the results for SWAP and SWAP-2o are $15.5\times$ and $20.5\times$ respectively. In contrast, the FW method achieves a median of just $1.45\times$. We conclude that the proposed methods are one order of magnitude faster than the basic FW method in this experiment. 

The previous remark suggests that away steps are very useful to speed up the algorithm towards an optimal face in this problem. We confirm this observation by examining the performance of the MFW method in this experiment. Figure \ref{web-and-adult-time}(b) shows that the MFW method is always faster than FW. This result contrasts with our previous experiment in which MFW was always slower than FW. We conclude that in this experiment all the algorithms incorporating away steps are significantly faster the algorithms which do not. Note that the proposed methods SWAP and SWAP-2o always run faster than MFW. 

As regards testing accuracy, the CVM is most of the time slightly less accurate than Frank-Wolfe methods in this experiment. SWAP always obtains an accuracy greater or equal than FW and in all but one case an accuracy greater or equal than MFW. SWAP-2o is  most of the time as accurate as MFW. We conclude that the additional running time incurred by the CVM and FW methods is not compensated by a better accuracy in this series of datasets. 

Figure \ref{web-and-adult-sizes}(b) shows that the model sizes obtained by the different methods are quite similar.

\subsection{Experiments on Other Medium-scale and Large-scale Datasets}
\label{singledatasetsec}

Results of Figures \ref{small-and-large-accuracies} to \ref{small-and-large-time} show the accuracies, times, speed-ups and model sizes obtained in the other datasets of Table \ref{TabDatasets}. A detailed description of these datasets can be found in \cite{IJPRAI11} or in the public repositories \cite{SVMLIB} and \cite{UCI2010}. 

To simplify the presentation and further analysis, datasets were separated into two groups: medium-scale and large-scale datasets. A dataset was included in the first group if the largest binary subproblem (see column $m_{{\max}}$ of Table \ref{TabDatasets}) to be addressed was lower than $15.000$ training examples, and was included in the second group otherwise. According to this criterion, datasets \textbf{Letter}, \textbf{Pendigits}, \textbf{USPS}, \textbf{Reuters} and \textbf{MNIST} were put together in the first group and datasets \textbf{Shuttle}, \textbf{IJCNN}, \textbf{USPS-Ext}, \textbf{KDD-10pc} and \textbf{KDD-Full} were included in the second group. Results for dataset \textbf{Protein} were presented/analyzed independently because accuracies and training times were significantly different from other results in the medium-scale group. Note again that most of the problems using in this experiment have been already used to compare CVM against other algorithms to train SVMs \cite{coreSVMs05tsang}. Times and model sizes are depicted in a logarithmic scale.

By examining Figure \ref{small-and-large-accuracies} we again observe a slight advantage of CVM in terms of testing accuracy. In addition, we confirm that the accuracy of the SWAP and SWAP-2o methods tends to be the closest to the best observed performance. The FW method is very often the least accurate among the Frank-Wolfe based algorithms. Note that if we compute the difference in accuracy with respect to CVM we always obtain results lower than $2\%$.

Results in Figure \ref{small-and-large-time} show that the FW, MFW, SWAP and SWAP-2o methods are most of the time faster than CVM. The speed-up achieved by these methods becomes more significant as the size of the training set grows, with peaks of around $100\times$ and $25\times$ on the largest datasets.  Differences among the Frank-Wolfe methods depend on the size of the problem. Among the medium-scale datasets all the methods achieve running times of the same order of magnitude. Speed-ups in the large-scale group are clearly more significant, with medians of $27.3\times$, $15.0\times$, $30.7\times$, $29.5\times$ for FW, MFW, SWAP and SWAP-2o respectively. 

The advantage of the methods explored in this paper against standard FW routines can be summarized as follows. The FW and MFW methods can sometimes be faster than SWAP and SWAP-2o, but in that case the advantage is very tight. Often, however, our methods can improve on FW and MFW with more significant speed-ups. MFW in particular tends to be significantly outperformed in the cases where the FW works better. In those cases the performance of our methods tends to be competitive or better. On medium-scale problems all the methods are evenly matched in performance, with a slight advantage for SWAP-2o and MFW. In the large-scale group, SWAP and SWAP-2o tend to outperform FW and  MFW more significantly.        
 
Results on the \textbf{Protein} dataset deserve a particular comment. This is a dataset of around $18.000$ examples distributed into $3$ classes, which leads to binary subproblems of around $10000$ examples. According to this size, the problem should be included in the group of medium-scale datasets on which we have seen that the Frank-Wolfe algorithms obtain fairly similar and small speed-ups. On the \textbf{Protein} problem however the methods obtain peculiar results. The FW method achieves here a speed-up of $20.8\times$ against CVM. However the standard  MFW runs here $123.5\times$ faster than the CVM and $5.95\times$ faster than FW. This suggests that in this problem, away steps significantly help the algorithm to find the solution to the SVM problem more quickly. Since our methods tend to be better when aways steps work, we should observe important improvements on the CVM using the proposed methods. Indeed, the respective speed-ups for the SWAP and SWAP-2o methods on this datasets are $157.3\times$ and $358.0\times$. This means that SWAP runs $17.25\times$ faster than FW and $1.27\times$ faster than MFW. SWAP-2o runs $7.58\times$ faster than FW and $2.90\times$ faster than MFW.

Note finally that Figure \ref{small-and-large-sizes} suggests that there are no significant differences among the sizes of the models built by the different methods.

\subsection{Statistical Tests}
\label{subsec:significance}

\setlength\floatsep{5pt} \setlength\intextsep{6pt}
\setlength{\abovecaptionskip}{1pt}
\setlength{\belowcaptionskip}{1pt} \setlength{\tabcolsep}{4pt}

In this section, we perform some statistical tests to assess the significance of the experimental results reported in this paper. To this end we adopt the guidelines suggested in \cite{Demsar06Statistical}. We first conduct a multiple test to determine whether the hypothesis that all the algorithms perform equally can be rejected or not. Then, we conduct separate binary tests to compare the performances of each algorithm against each other. For the binary tests we adopt the \emph{Wilcoxon Signed-Ranks Test} method. For the multiple test we use the non-parametric \emph{Friedman Test}. In \cite{Demsar06Statistical}, Demsar recommends these tests as safe alternatives to the classical parametric t-tests to compare classifiers over multiple datasets.   

From the multiple test, we conclude that there is indeed a statistically significant difference among the running times and accuracies of all the algorithms ($p$-values were lower than $0.001$ in both cases). 

We then conduct a binary test on each pair of algorithms. The main hypothesis of this paper is that the SWAP method outperforms the MFW and FW methods in terms of training time without significant differences in terms of predictive accuracy. In contrast, we claim that no significant differences between the MFW and FW methods are observed in practice (although MFW seems to be slightly more accurate).  We have also observed that the SWAP method significantly outperforms CVM, sometimes at the expense of a little test accuracy. Finally, we have observed that the SWAP-2o usually exhibits larger running times than the SWAP method but outperforms the other FW based methods in terms of predictive power. As regards the comparison of the proposed methods, there is no apparent advantage in terms of running time of one against the other. We thus conduct a two-tailed test for the running times but adopt a one-tailed test for testing accuracy. Considering all the observations above, our design for the binary tests is that of Table \ref{table:designTests}.

\begin{table}[!ht]
\centering
\begin{small}
\vspace{0.2cm}
\begin{tabular}{|p{3.85cm}|p{1.35cm}|p{4.4cm}|p{1.35cm}|}
  \hline
\cellcolor{RoyalBlue} Time SWAP vs. FW &  p-value & \cellcolor{RoyalBlue}  Accuracy SWAP vs. FW & p-value\\
     \hline
   $H_0: \, \mbox{Both equally fast} $ & $0.03757$ & \cellcolor{Cyan} $H_0: \, \mbox{Both equally accurate}$ & $0.2389$ \\
\cellcolor{Cyan}   $H_1: \, \mbox{SWAP faster} \,$ &   & $H_1: \, \mbox{Different accuracies}\,$ & \\
  \hline
  \hline
\cellcolor{RoyalBlue} Time SWAP vs. MFW & p-value & \cellcolor{RoyalBlue} Accuracy SWAP vs. MFW & p-value\\
     \hline
   $H_0: \, \mbox{Both equally fast} $ & $1.526e$-$05$ & \cellcolor{Cyan} $H_0: \, \mbox{Both equally accurate}$ & $0.1019$\\
 \cellcolor{Cyan}  $H_1: \, \mbox{SWAP faster} \,$ &  & $H_1: \, \mbox{Different accuracies}\,$ &\\
  \hline
  \hline
  
\cellcolor{RoyalBlue} Time SWAP vs. CVM & p-value & \cellcolor{RoyalBlue} Accuracy SWAP vs. CVM & p-value\\
     \hline
   $H_0: \, \mbox{Both equally fast} $ & $5.528e$-$06$ &$H_0: \, \mbox{Both equally accurate}$ & $1.873e$-$04$ \\
 \cellcolor{Cyan}  $H_1: \, \mbox{SWAP faster} \,$ &  & \cellcolor{Cyan} $H_1: \, \mbox{CVM more accurate}\,$ &\\
  \hline
  \hline
  
\cellcolor{RoyalBlue} Time FW vs. MFW & p-value & \cellcolor{RoyalBlue} Accuracy FW vs. MFW & p-value\\
     \hline
 \cellcolor{Cyan}  $H_0: \, \mbox{Both equally fast} $ & $0.6893$  & \cellcolor{Cyan} $H_0: \, \mbox{Both equally accurate}$& $0.1118$ \\
   $H_1: \, \mbox{Different times} \,$ &  & $H_1: \, \mbox{Different accuracies}\,$& \\
  \hline
  \hline
  
\cellcolor{RoyalBlue} Time SWAP-2o vs. FW & p-value & \cellcolor{RoyalBlue} Accuracy SWAP-2o vs. FW & p-value\\
     \hline
 \cellcolor{Cyan}   $H_0: \, \mbox{Both equally fast} $ & $0.3403$ & $H_0: \, \mbox{Both equally accurate}$ & $0.01071$\\
   $H_1: \, \mbox{Different times} \,$ &  &  \cellcolor{Cyan} $H_1: \, \mbox{SWAP-2o more accurate}\,$ &\\
  \hline
  \hline
  
\cellcolor{RoyalBlue} Time SWAP-2o vs. MFW & p-value & \cellcolor{RoyalBlue} Accuracy SWAP-2o vs. MFW & p-value\\
     \hline
   $H_0: \, \mbox{Both equally fast} $ & $1.087e$-$04$ &$H_0: \, \mbox{Both equally accurate}$ & $0.01634$\\
 \cellcolor{Cyan}   $H_1: \, \mbox{SWAP-2o faster} \,$ &  &  \cellcolor{Cyan} $H_1: \, \mbox{SWAP-2o more accurate}\,$ &\\
  \hline
  \hline
  
\cellcolor{RoyalBlue} Time SWAP-2o vs. CVM & p-value & \cellcolor{RoyalBlue} Accuracy SWAP-2o vs. CVM & p-value\\
     \hline
   $H_0: \, \mbox{Both equally fast} $ & $4.47e$-$08$ & $H_0: \, \mbox{Both equally accurate}$ & $2.25e$-$04$\\
 \cellcolor{Cyan}   $H_1: \, \mbox{SWAP-2o faster} \,$ &  &  \cellcolor{Cyan} $H_1: \, \mbox{CVM more accurate}\,$ &\\
  \hline
  \hline
  
\cellcolor{RoyalBlue} Time SWAP-2o vs. SWAP & p-value & \cellcolor{RoyalBlue} Accuracy SWAP-2o vs. SWAP & p-value\\
     \hline
 \cellcolor{Cyan}   $H_0: \, \mbox{Both equally fast} $ & $0.1901$ & $H_0: \, \mbox{Both equally accurate}$& $1.418e$-$03$\\
   $H_1: \, \mbox{SWAP faster} \,$ &  &  \cellcolor{Cyan} $H_1: \, \mbox{SWAP-2o more accurate}\,$& \\
  \hline

  \end{tabular} \vspace{0.2cm} 
  \caption{\label{table:designTests} Null hypotheses, alternative hypotheses and p-values for the binary statistical tests. The conclusion of the test adopting a significance level of $5\%$ is highlighted in blue.}
\end{small}
\end{table}

In Table \ref{table:designTests}, we also report the p-values corresponding to each test\footnote{In some cases we implement one-sided alternative hypotheses, and in others two-sided tests. If a two-sided test is preferred to a one-sided alternative, it's enough to double the p-value reported here. Vice versa, if a one-sided test is preferred to a two-sided test, it's enough to halve the p-value reported here.}. For reproducibility concerns, p-values were computed using the statistical software R \cite{R}. For the Wilcoxon Signed-Ranks Test, the exact p-values were preferred to the asymptotic ones. The Pratt method to handle ties is employed by default. In the case of the Friedman test, the Iman and Davenport's correction was adopted, as suggested in \cite{Demsar06Statistical}. 

We now point out some of the conclusions which can be obtained from Table \ref{table:designTests}. At commonly used significance levels ($10\%$, $5\%$, $1\%$ or lower), the hypothesis that FW and MFW are equally fast cannot be rejected. Adopting a significance level of $5\%$, the running times of SWAP method are found to be significantly different from those of all the baseline methods (FW, MFW and CVM), so the null hypothesis is rejected in favor of the alternative hypothesis than the SWAP method is faster. At the same significance level, or better, the hypotheses than the SWAP-2o method is as fast as MFW or CVM are rejected in favor of the conclusion that the SWAP-2o method is faster. Empirical data is however insufficient to reject the hypothesis that the SWAP-2o method is as fast as the FW or the SWAP methods. As regards the testing accuracy, FW, MFW and SWAP are found to be equally as accurate at reasonable significance levels ($10\%$, $5\%$, $1\%$ or lower). In contrast, the hypothesis that the SWAP-2o method has similar accuracies to FW, MFW and SWAP is rejected in favor of the conclusion that SWAP-2o is more accurate.

\begin{figure}[hp]
\centering
\begin{tabular}{cc}
\textbf{(a)} \textbf{Web} collection & \textbf{(b)} \textbf{Adult} collection \\
\includegraphics[width=0.48\textwidth, height=0.92\textheight]{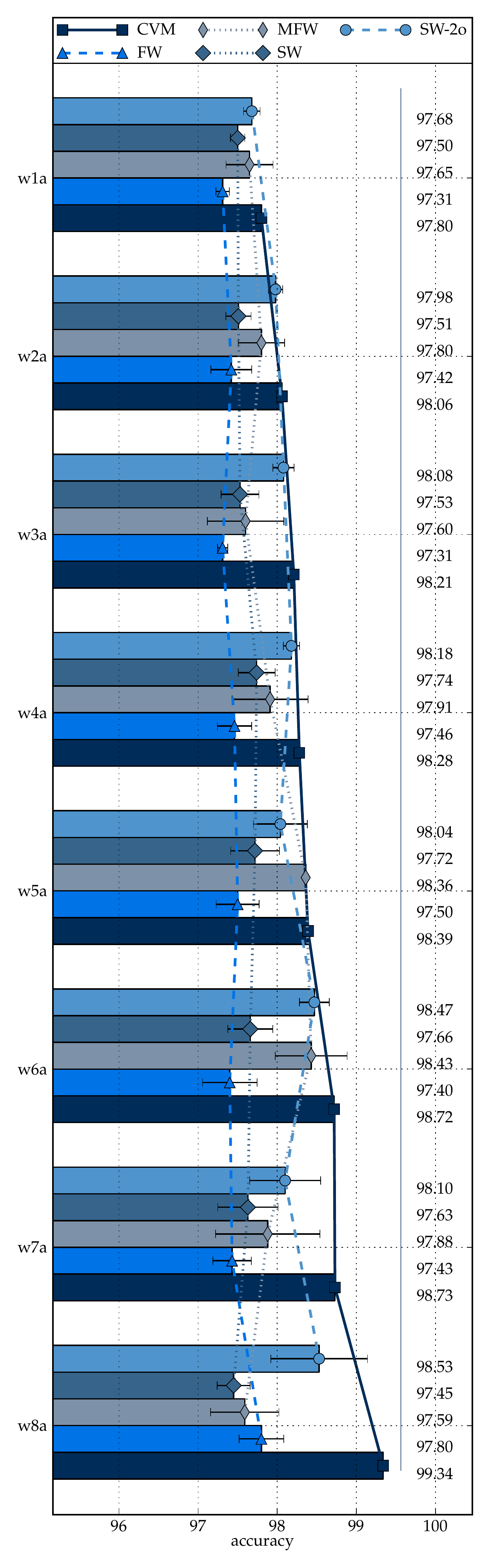} &
\includegraphics[width=0.48\textwidth, height=0.92\textheight]{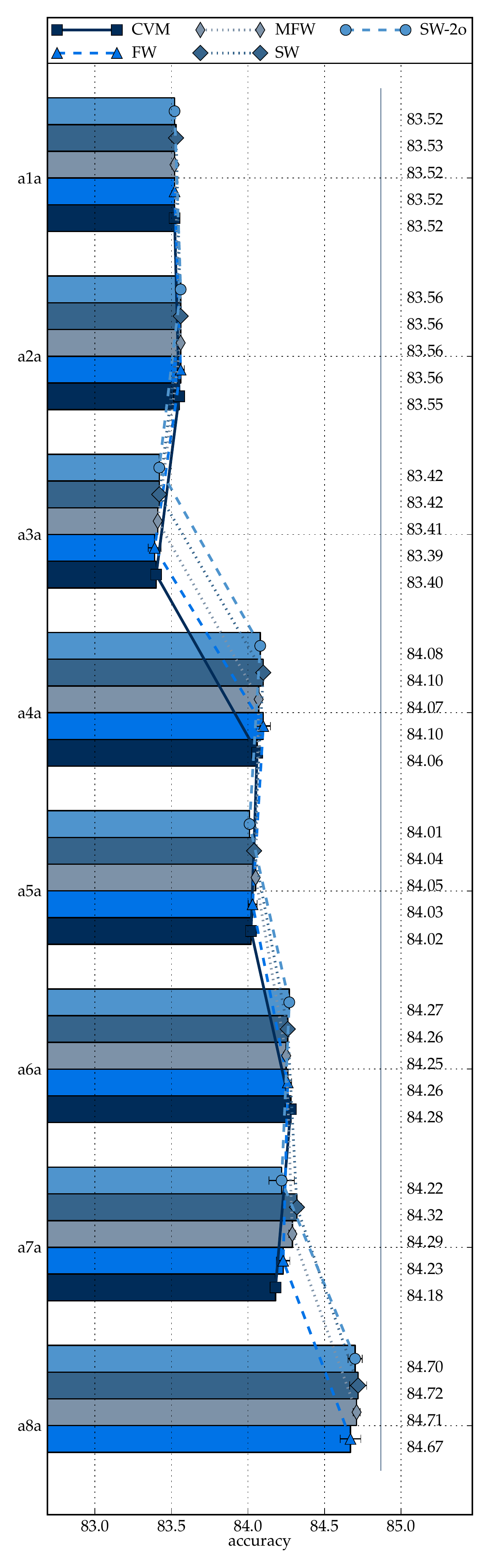}
\end{tabular}
\caption{\small Testing accuracies in the \textbf{Web} and \textbf{Adult} collections. \label{web-and-adult-accuracy}}
\end{figure}

\begin{figure}[hp]
\centering
\begin{tabular}{cc}
\textbf{(a)} \textbf{Web} collection & \textbf{(b)} \textbf{Adult} collection \\
\includegraphics[width=0.48\textwidth, height=0.92\textheight]{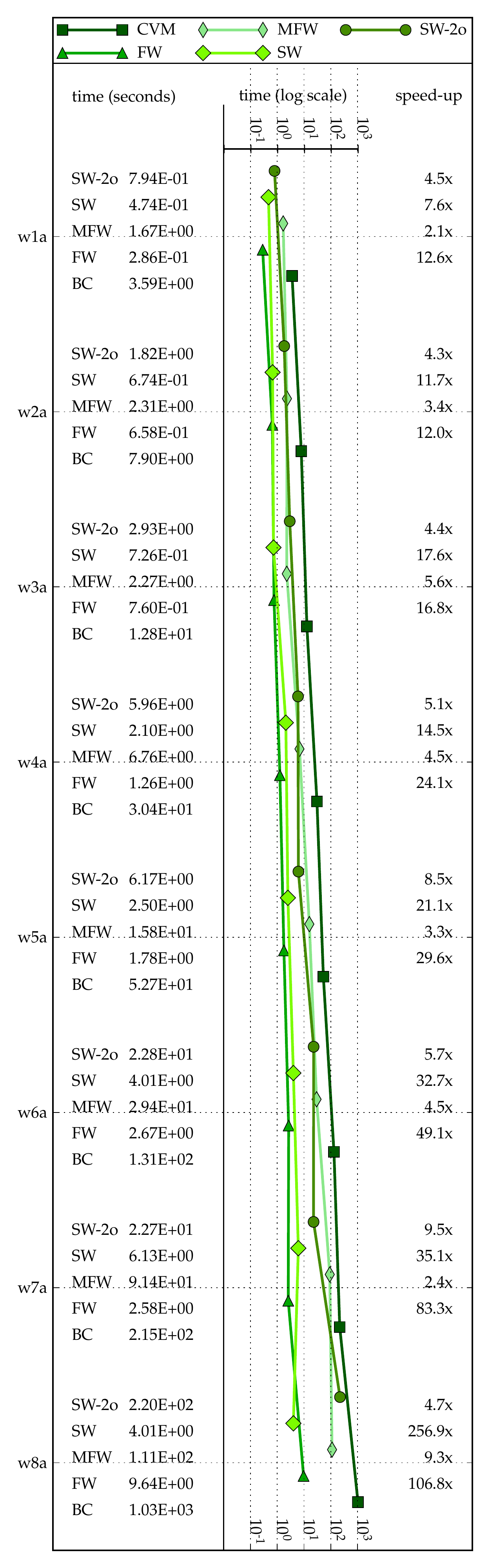} &
\includegraphics[width=0.48\textwidth, height=0.92\textheight]{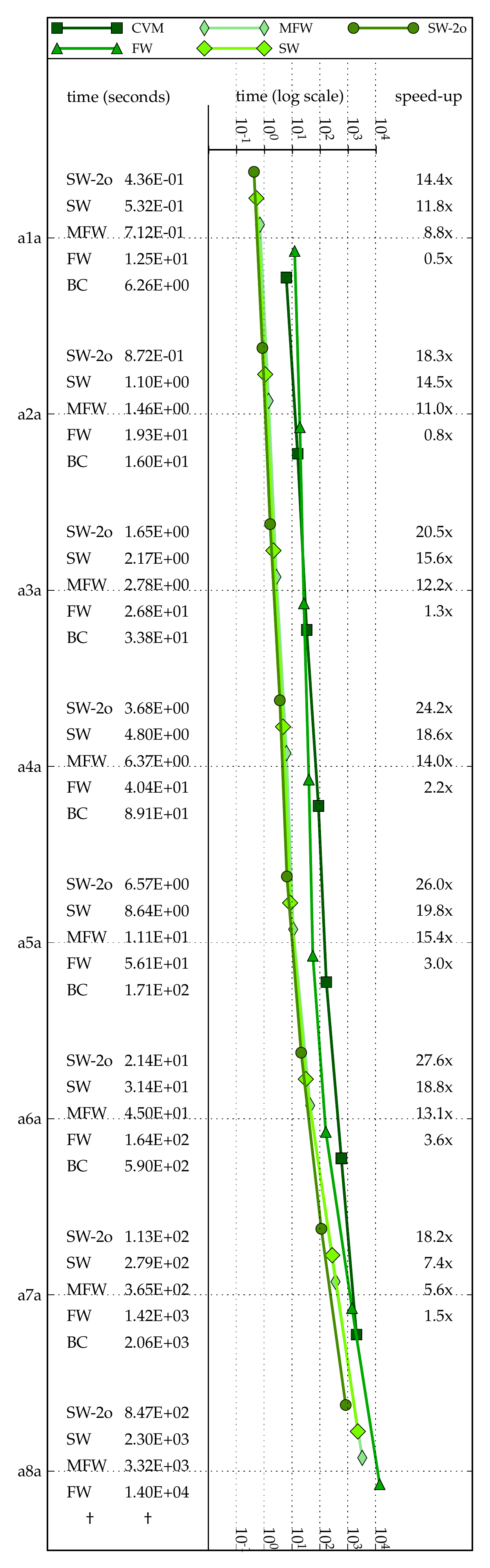}
\end{tabular}
\caption{\small Running times in the \textbf{Web} and \textbf{Adult} collections. The column on the left shows speed-ups with respect to CVM.  \label{web-and-adult-time}}
\end{figure}

\begin{figure}[hp]
\centering
\begin{tabular}{cc}
\textbf{(a)} \textbf{Web} collection & \textbf{(b)} \textbf{Adult} collection \\
\includegraphics[width=0.48\textwidth, height=0.92\textheight]{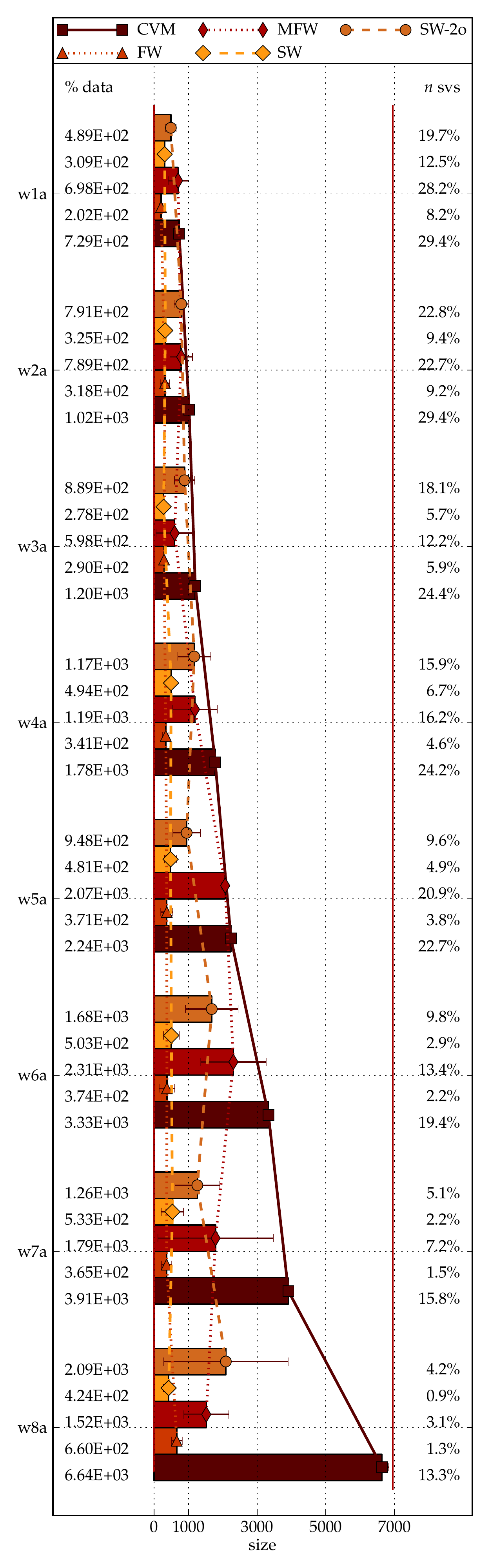} &
\includegraphics[width=0.48\textwidth, height=0.92\textheight]{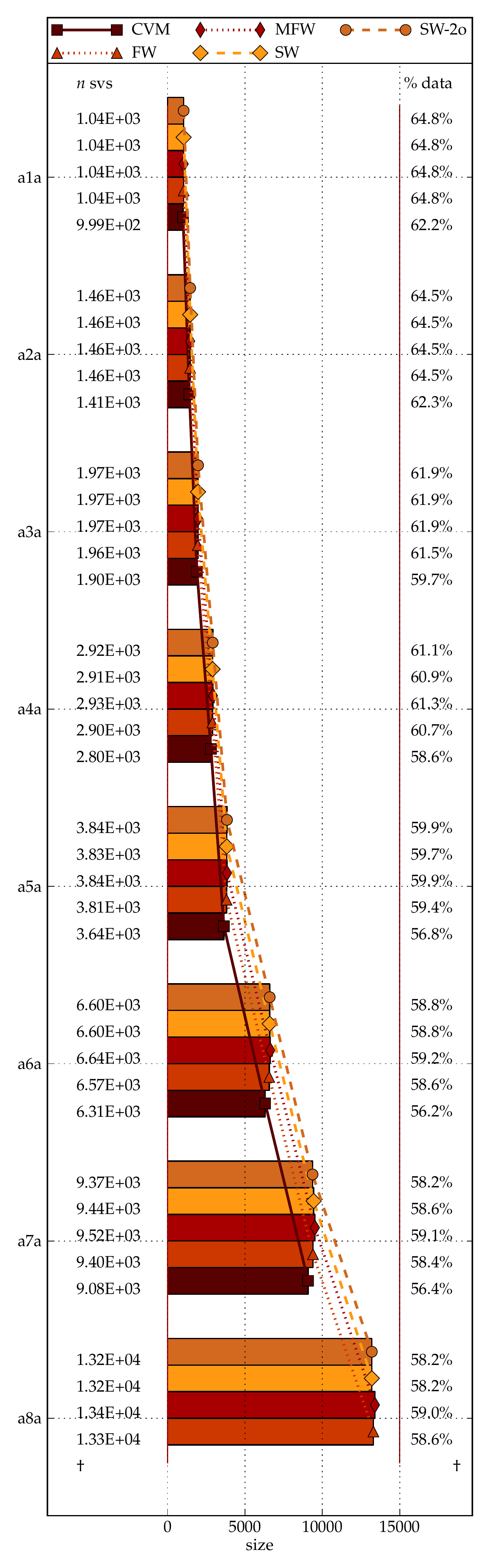}
\end{tabular}
\caption{\small Model sizes in the \textbf{Web} and \textbf{Adult} collections. The column on the left shows the percentage of the total number of examples. 
\label{web-and-adult-sizes}} 
\end{figure}

\begin{figure}[hp]
\centering
\begin{tabular}{cc}
\textbf{(a)} \textbf{Medium-scale} datasets & \textbf{(b)}\textbf{Large-scale} datasets \\
\includegraphics[width=0.48\textwidth,height=0.63\textheight]{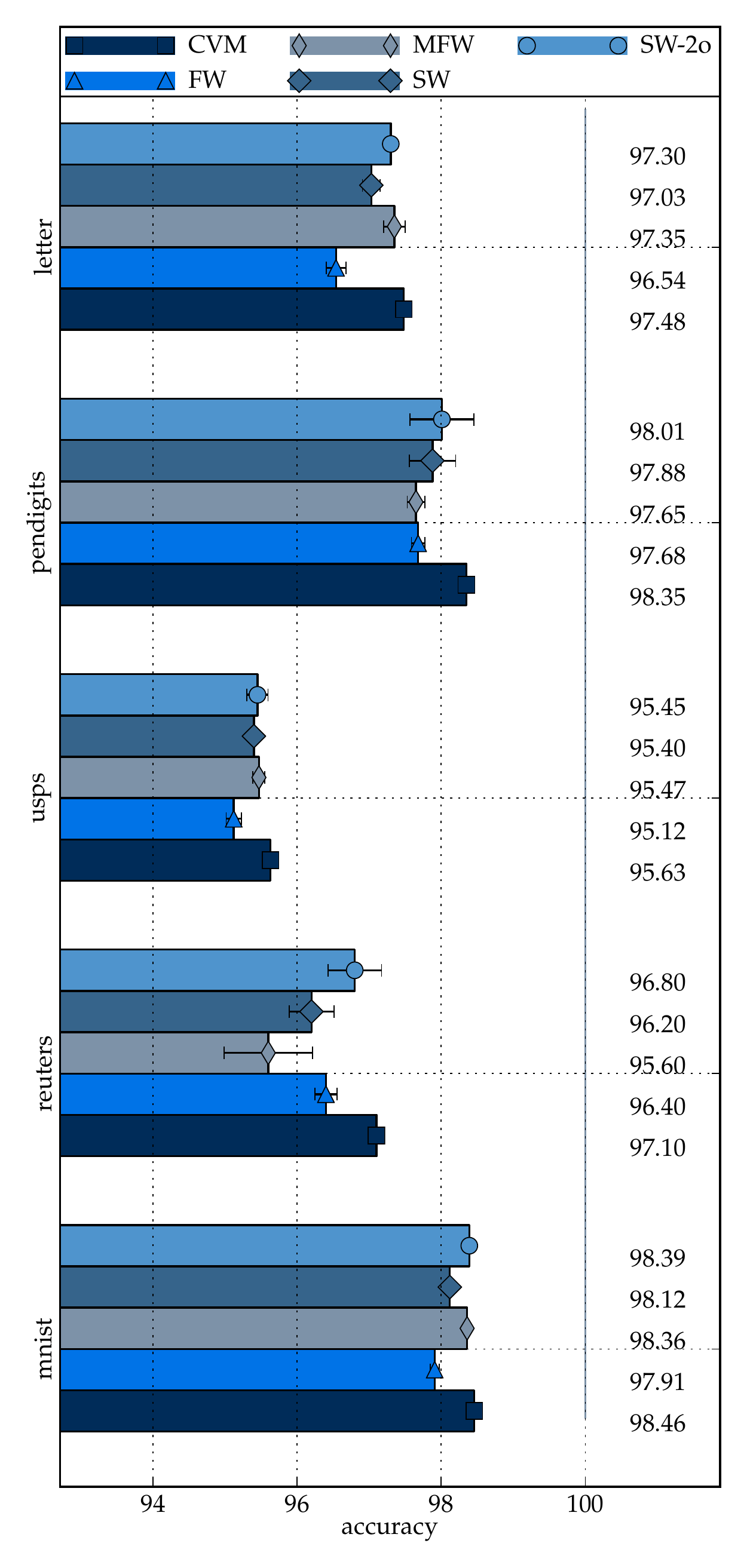} &
\includegraphics[width=0.48\textwidth,height=0.63\textheight]{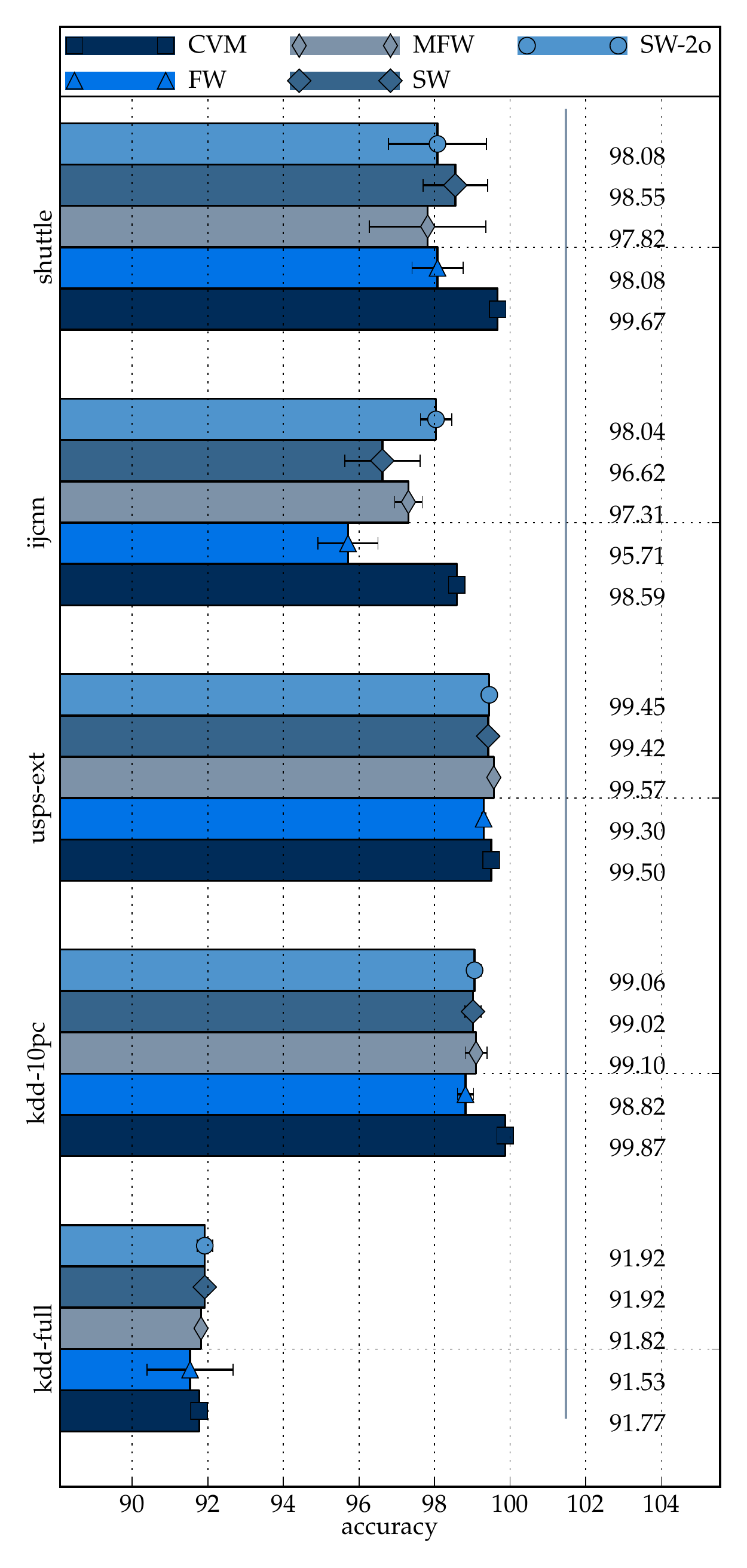}
\end{tabular}
\caption{\small On the left, testing accuracies in the medium-scale datasets: \textbf{Letter}, \textbf{Pendigits}, \textbf{USPS}, \textbf{Reuters}, \textbf{MNIST}. On the right, testing accuracies in the large dataset collection: \textbf{Shuttle}, \textbf{IJCNN},x \textbf{USPS-Ext}, \textbf{KDD-10pc}, \textbf{KDD-Full}. \label{small-and-large-accuracies}} 
\end{figure}
\begin{figure}[hp]
\centering
\begin{tabular}{cc}
\includegraphics[width=0.48\textwidth,height=0.18\textheight]{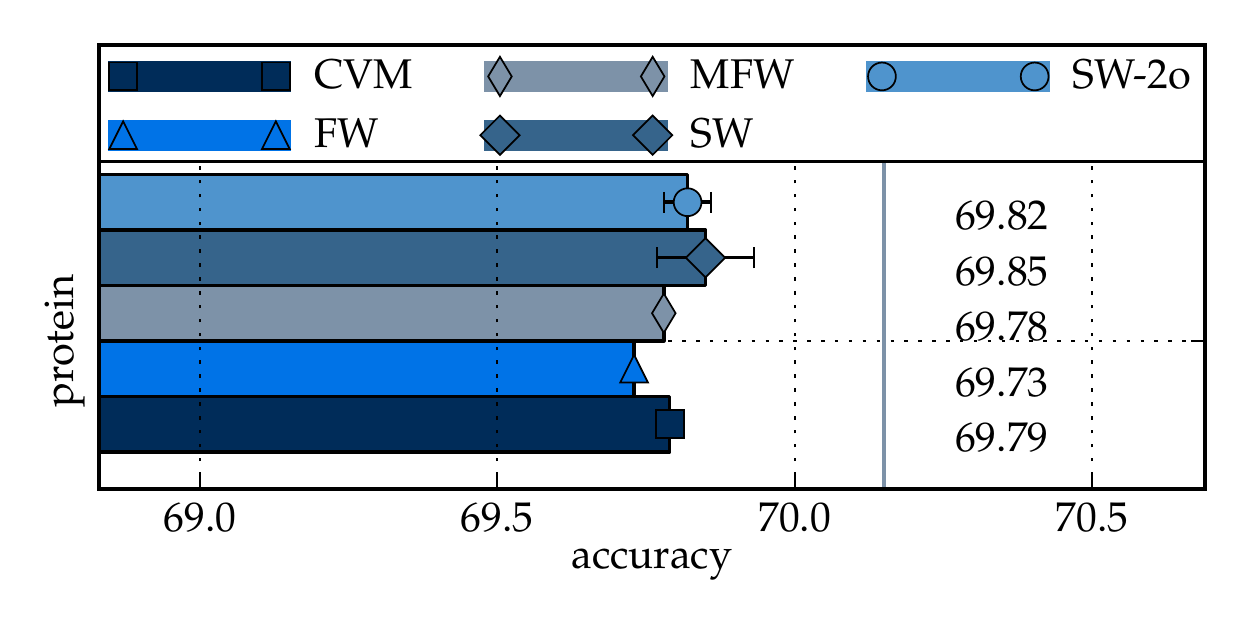} &
\includegraphics[width=0.48\textwidth,height=0.18\textheight]{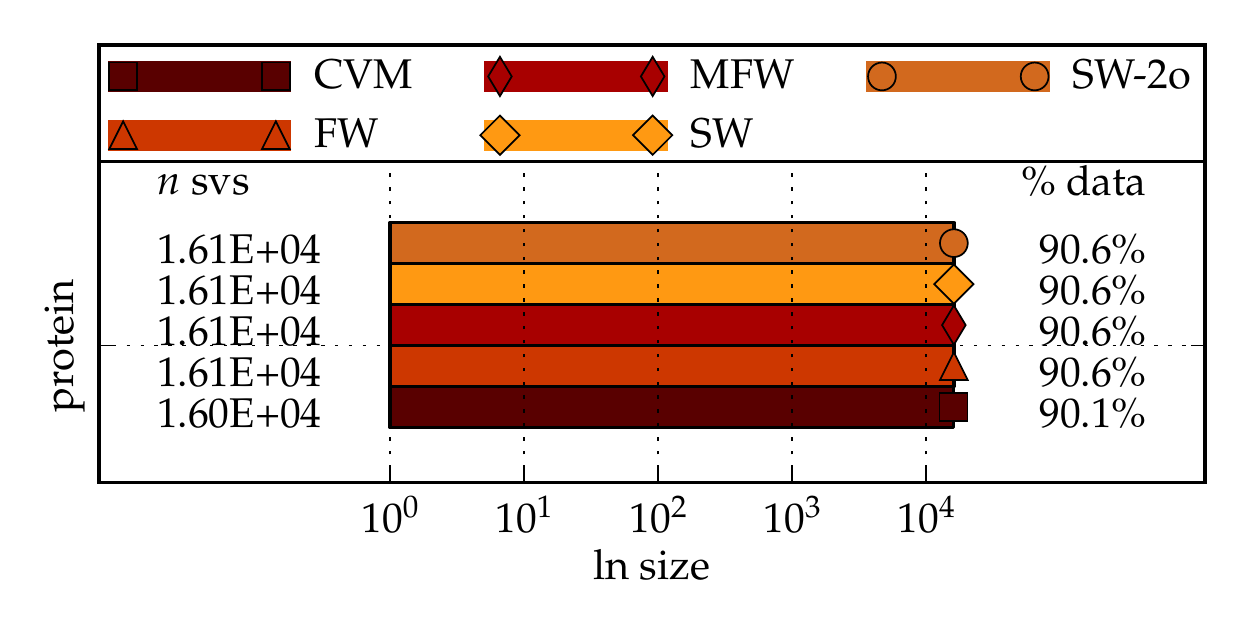}
\end{tabular}
\caption{\small On the left, testing accuracies in the \textbf{Protein} dataset. On the right, model sizes in the \textbf{Protein} dataset. \label{protein-accuracy-and-time}} 
\end{figure}

\begin{figure}[hp]
\centering
\begin{tabular}{cc}
\textbf{(a)} \textbf{Medium-scale} datasets & \textbf{(b)}\textbf{Large-scale} datasets \\
\includegraphics[width=0.48\textwidth,height=0.65\textheight]{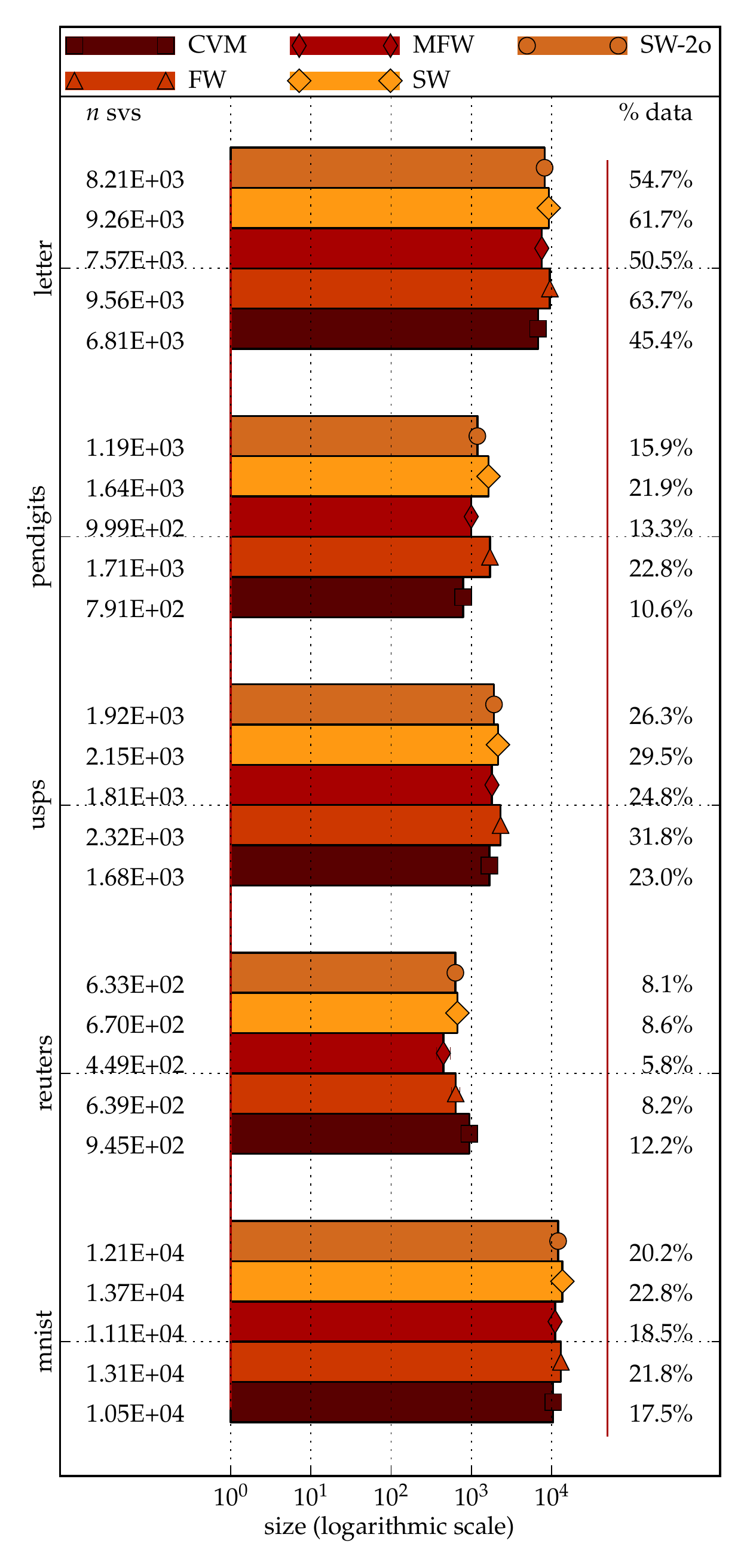} &
\includegraphics[width=0.48\textwidth,height=0.65\textheight]{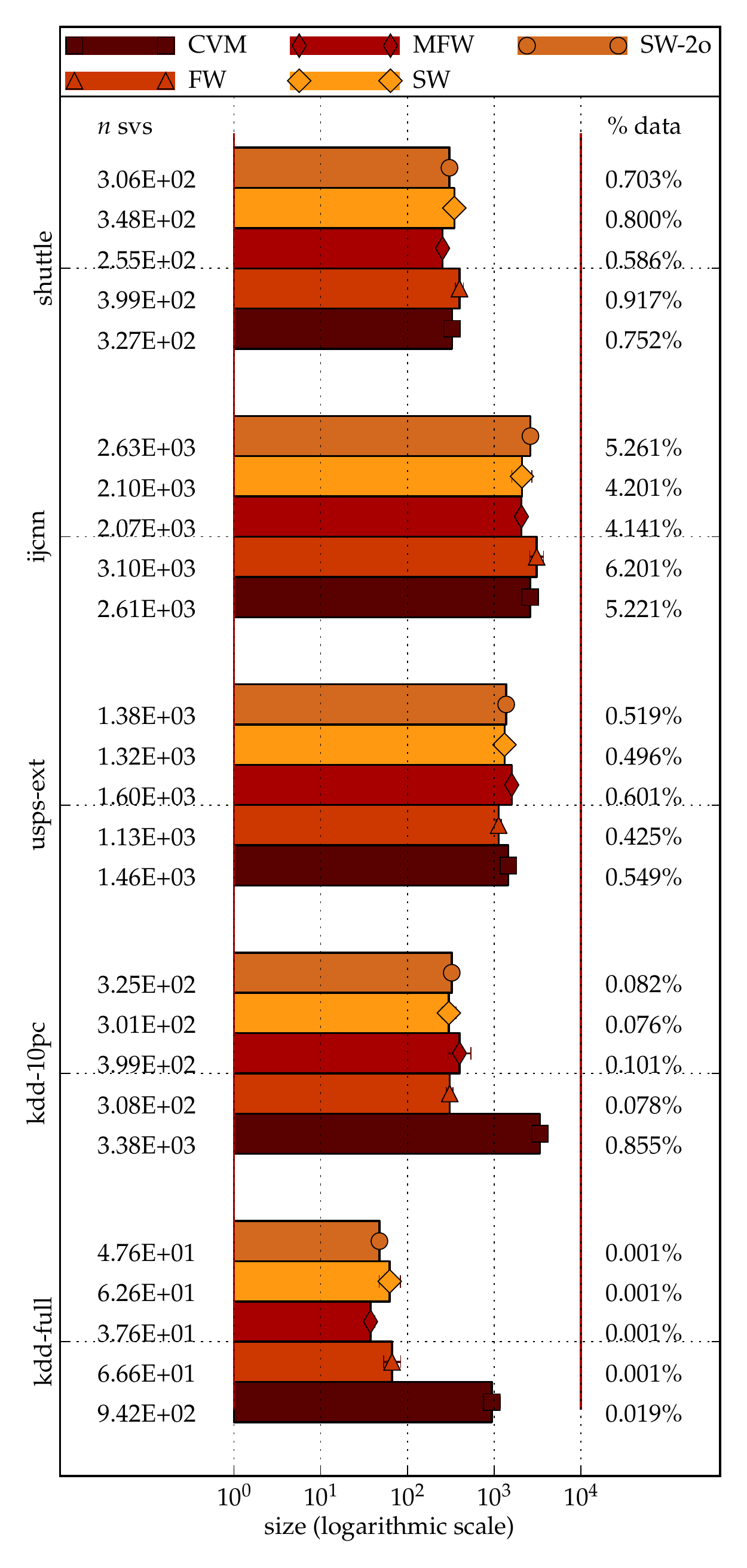}
\end{tabular}
\caption{\small On the left, model sizes in the medium-scale datasets \textbf{Letter}, \textbf{Pendigits}, \textbf{USPS}, \textbf{Reuters}, \textbf{MNIST}. On the right, model sizes in the large dataset collection: \textbf{Shuttle}, \textbf{IJCNN}, \textbf{USPS-Ext}, \textbf{KDD-10pc}, \textbf{KDD-Full}.  \label{small-and-large-sizes}}
\end{figure}

\begin{figure}[hp]
\centering
\begin{tabular}{cc}
\includegraphics[width=0.48\textwidth,height=0.18\textheight]{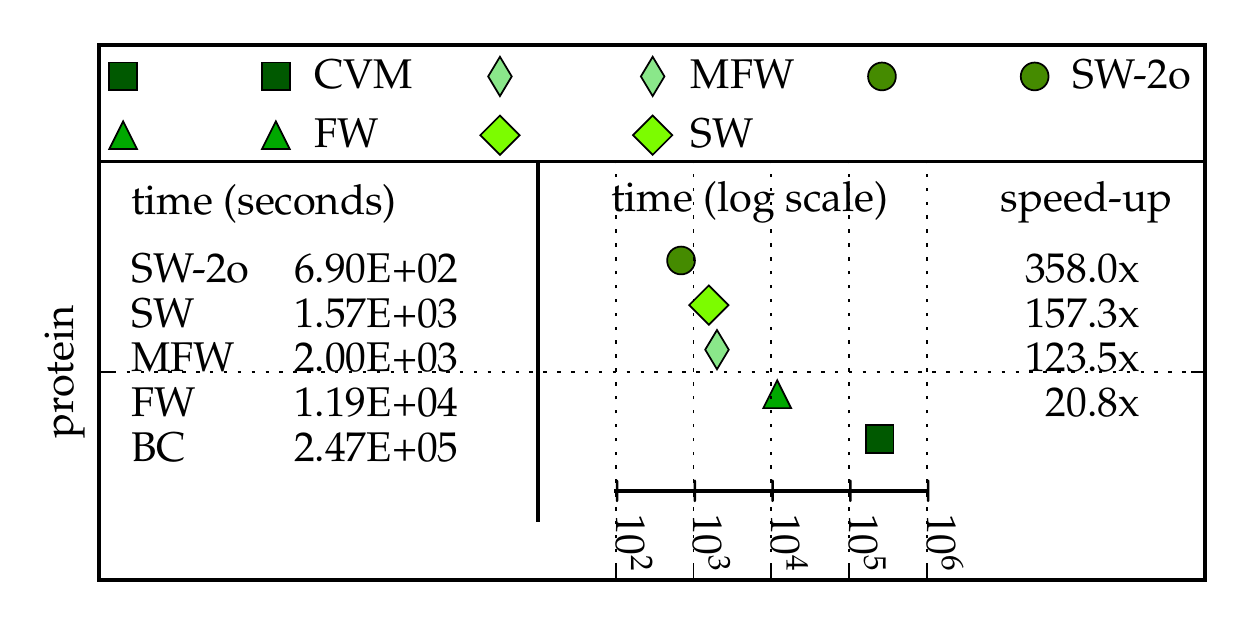}  & 
\end{tabular}
\caption{\small Running times in the \textbf{Protein} dataset. \label{protein-sizes}}
\end{figure}

\begin{figure}[hp]
\centering
\begin{tabular}{cc}
\textbf{(a)} \textbf{Medium-scale} datasets & \textbf{(b)}\textbf{Large-scale} datasets \\
\includegraphics[width=0.48\textwidth]{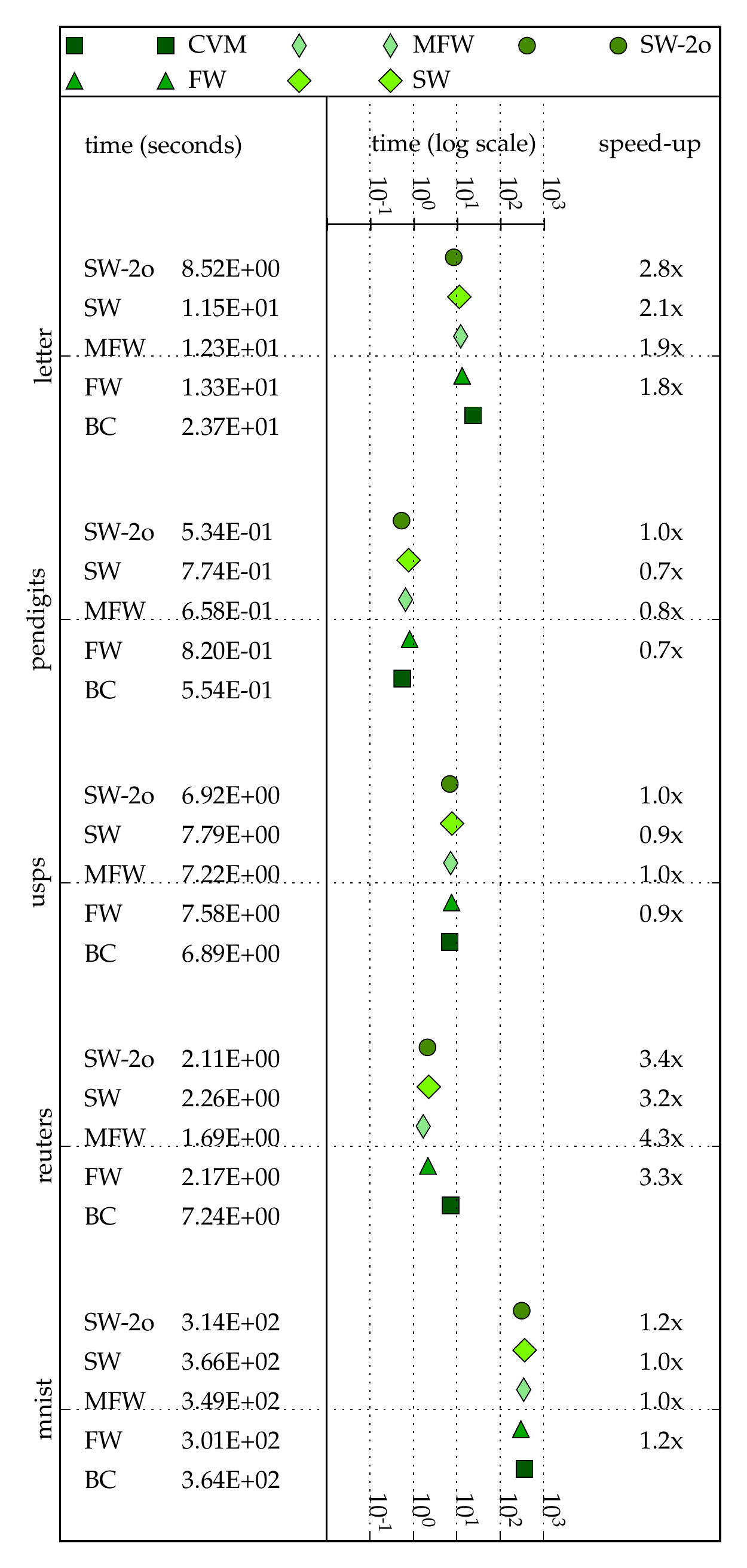} &
\includegraphics[width=0.48\textwidth]{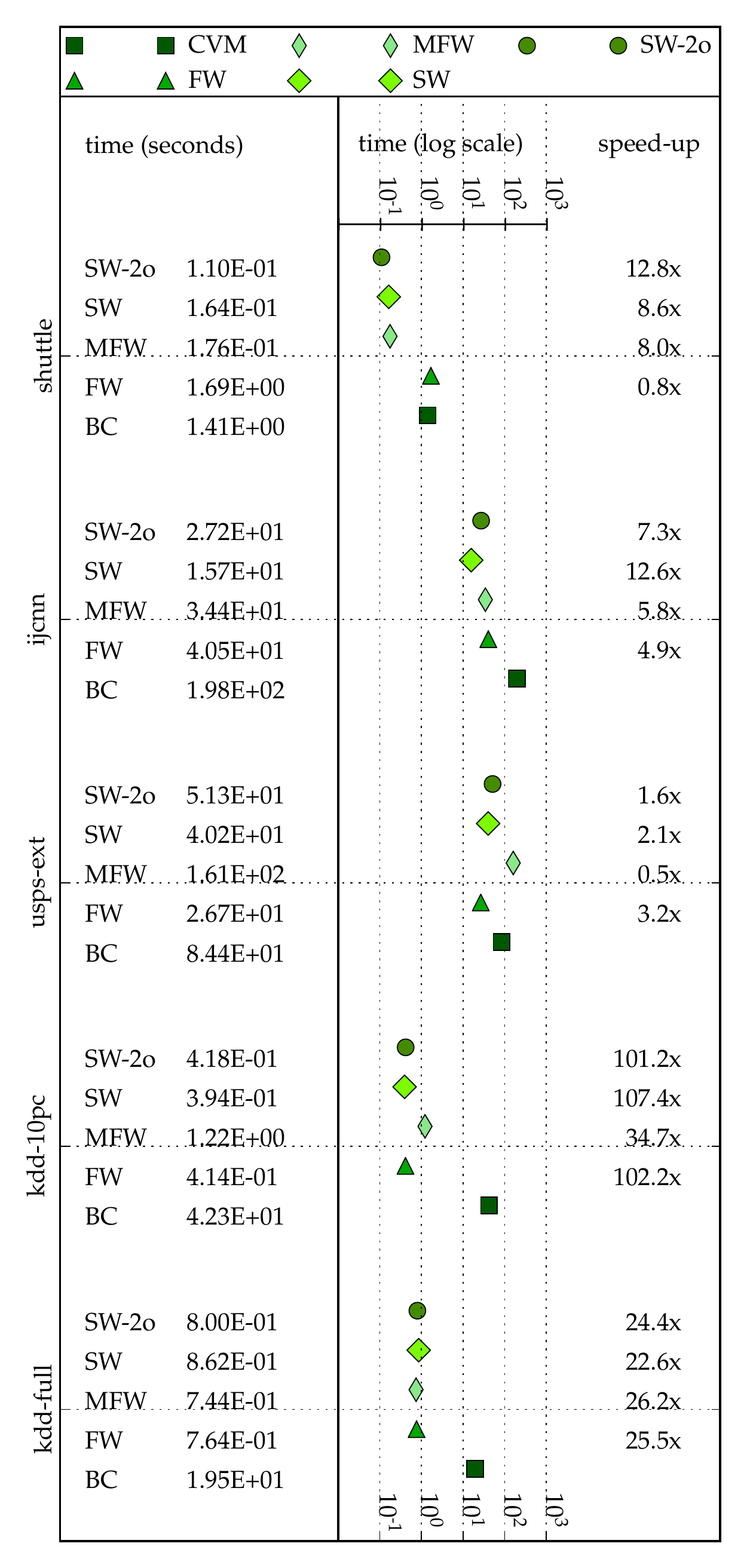}
\end{tabular}
\caption{\small On the left, running times in the medium-scale datasets: \textbf{Letter}, \textbf{Pendigits}, \textbf{USPS}, \textbf{Reuters}, \textbf{MNIST}. On the right, running times in the large dataset collection: \textbf{Shuttle}, \textbf{IJCNN}, \textbf{USPS-Ext}, \textbf{KDD-10pc}, \textbf{KDD-Full}. \label{small-and-large-time}} 
\end{figure}

\subsection{Experiments with Non-Normalized Kernels}

Solving a classification problem using SVMs requires to select a kernel function. Since the optimal kernel for a given application cannot be specified \emph{a priori}, the capability of a training method to work with any (or the widest possible) family of kernels is an important feature.

In order to illustrate that the proposed methods can obtain effective models even if the kernel does not satisfy the conditions required by CVM, we conduct experiments using the homogenous second order polynomial kernel $k(\bx_i,\bx_j)=(\gamma \bx_i^{T}\bx_j)^{2}$. Here, parameter $\gamma$ is estimated as the inverse of the average squared distance among training patterns \cite{coreSVMs05tsang}. 

Figures \ref{poly-web-accuracy-and-time} and \ref{poly-rest} summarize the results obtained in some of the datasets used in this section. We can see that both test accuracies and training times are comparable to those obtained using the Gaussian kernel. It should be noted that the CVM algorithm cannot be used to train an SVM using the kernel selected for this experiment, thus we only incorporate the Frank-Wolfe based methods in the figures. These results demonstrate the capability of our methods to be used with kernels other than those satisfying the normalization condition imposed by CVM.

\begin{figure}[hp]
\centering
\begin{tabular}{cc}
\includegraphics[width=0.48\textwidth]{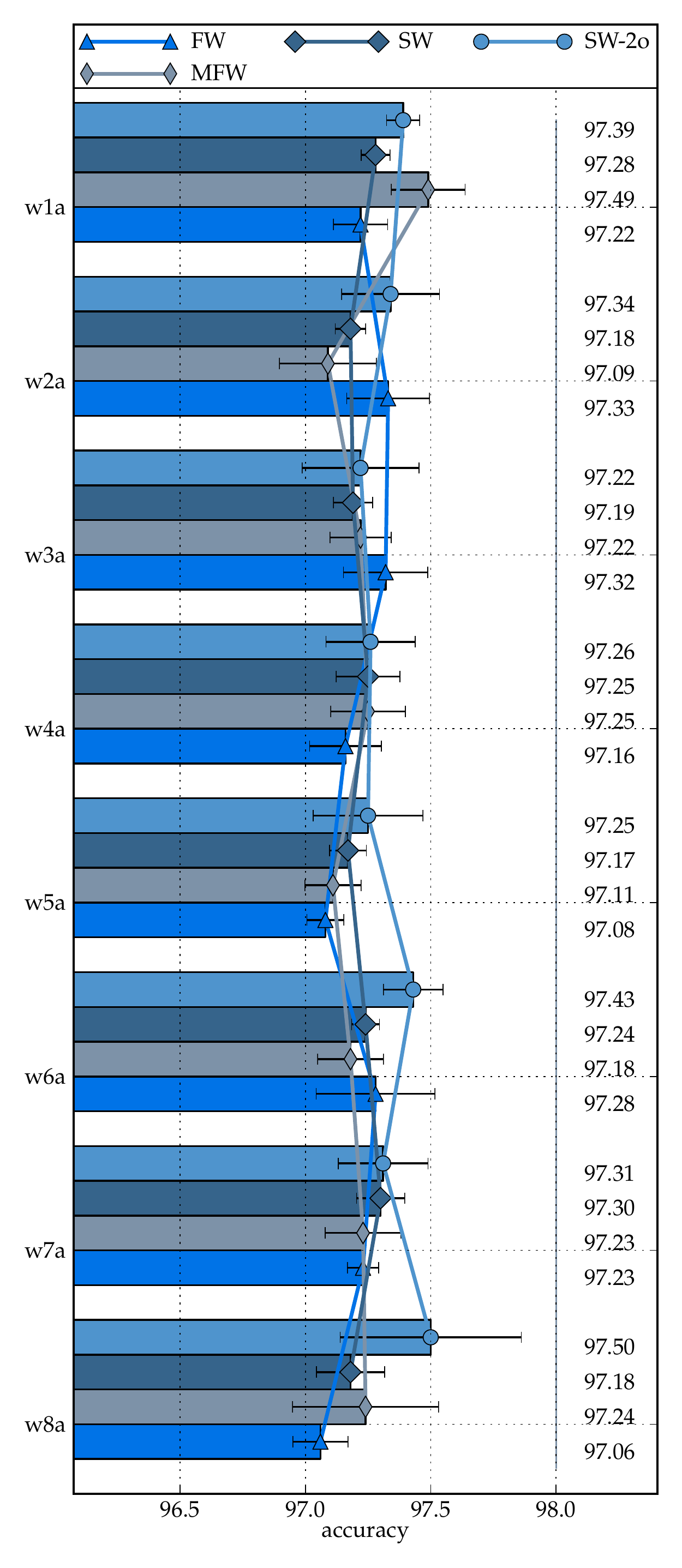} &
\includegraphics[width=0.48\textwidth]{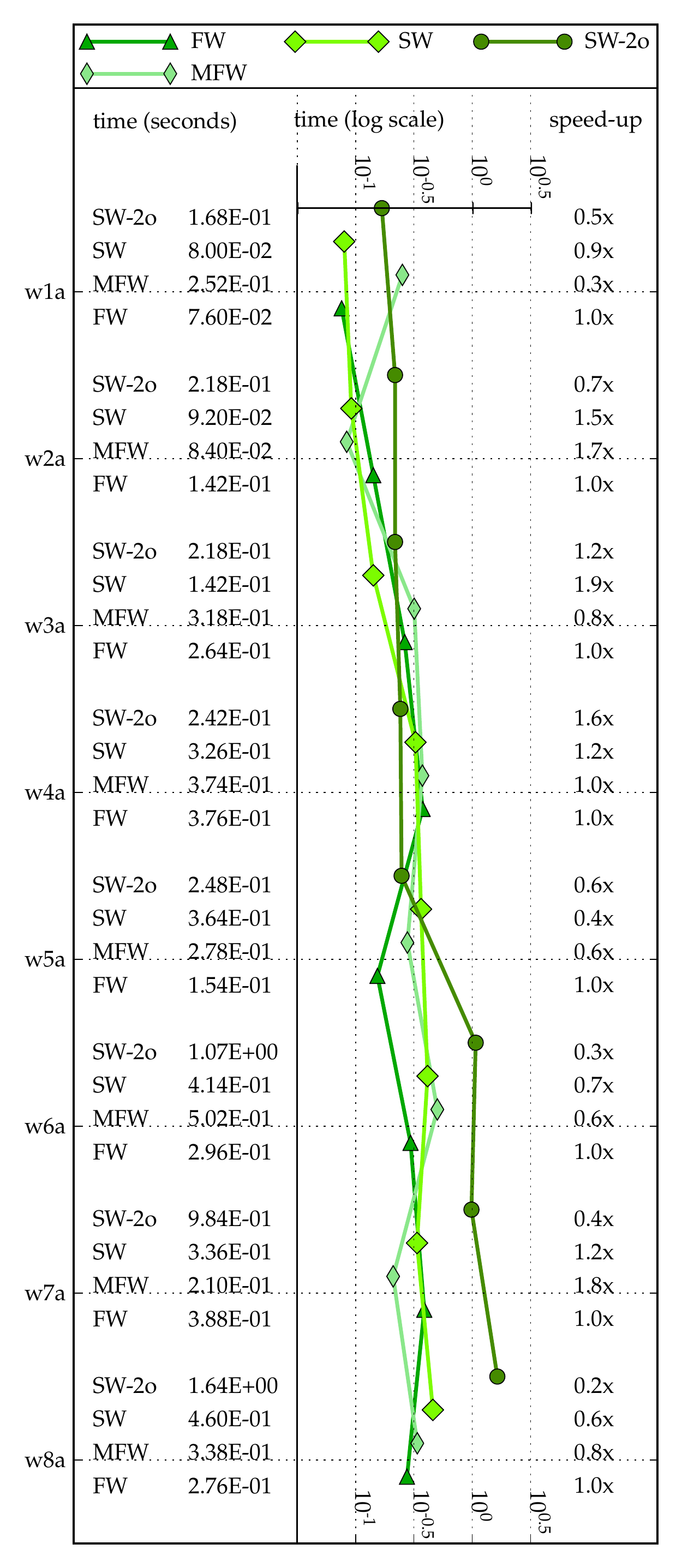}
\end{tabular}
\caption{\small On the left, testing accuracies obtained with the polynomial kernel in the datasets of the \textbf{Web} collection, \emph{w1a}, \emph{w2a}, \emph{w3a}, \emph{w4a}, \emph{w5a}, \emph{w6a}, \emph{w7a} and \emph{w8a}. On the right, the corresponding running times. \label{poly-web-accuracy-and-time}}
\end{figure}

\begin{figure}

	\begin{minipage}[c][0.95\height]{%
	   0.5\textwidth}
	   \centering%
	   \includegraphics[width=\textwidth]{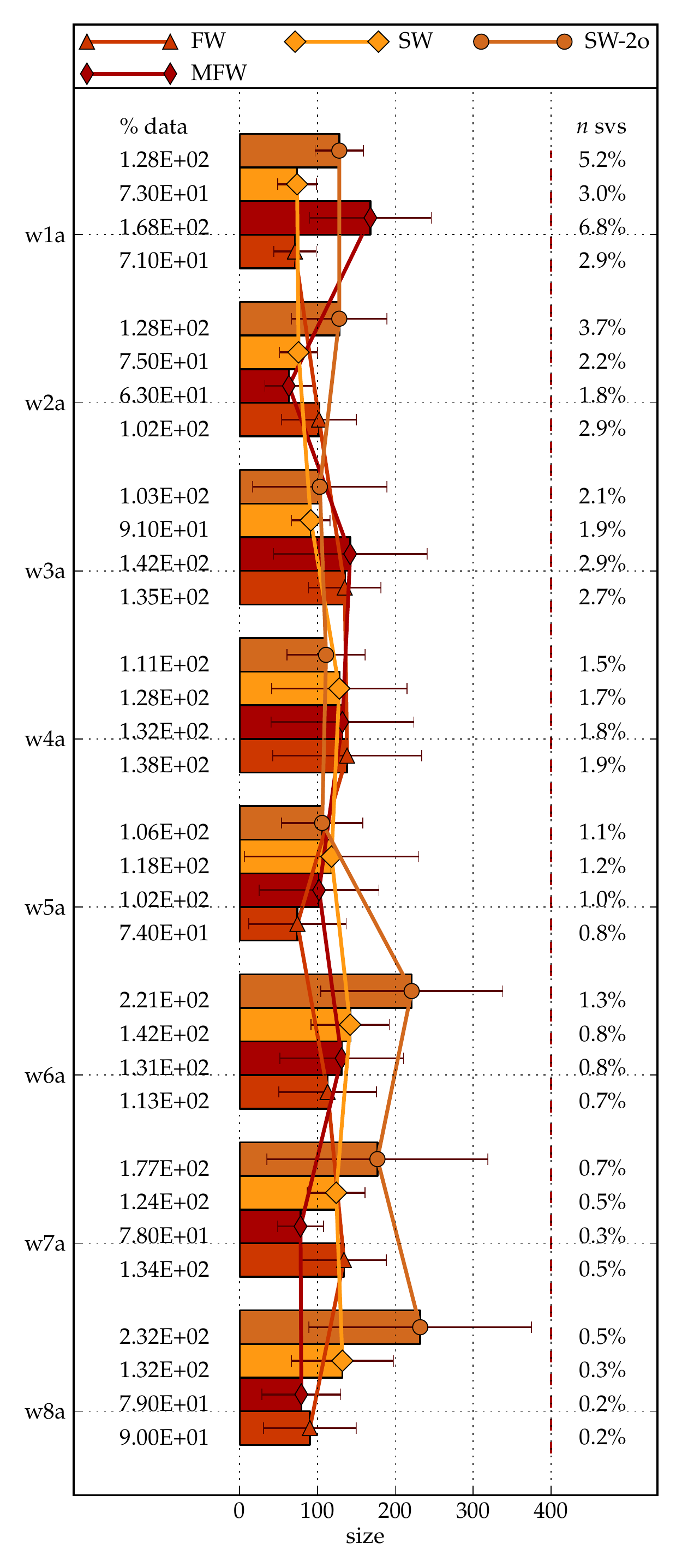}
	\end{minipage}
\qquad
	\begin{minipage}[c][0.95\height]{%
	   0.5\textwidth}
	   \centering%
	   \begin{tabular}{c}
\includegraphics[width=\textwidth]{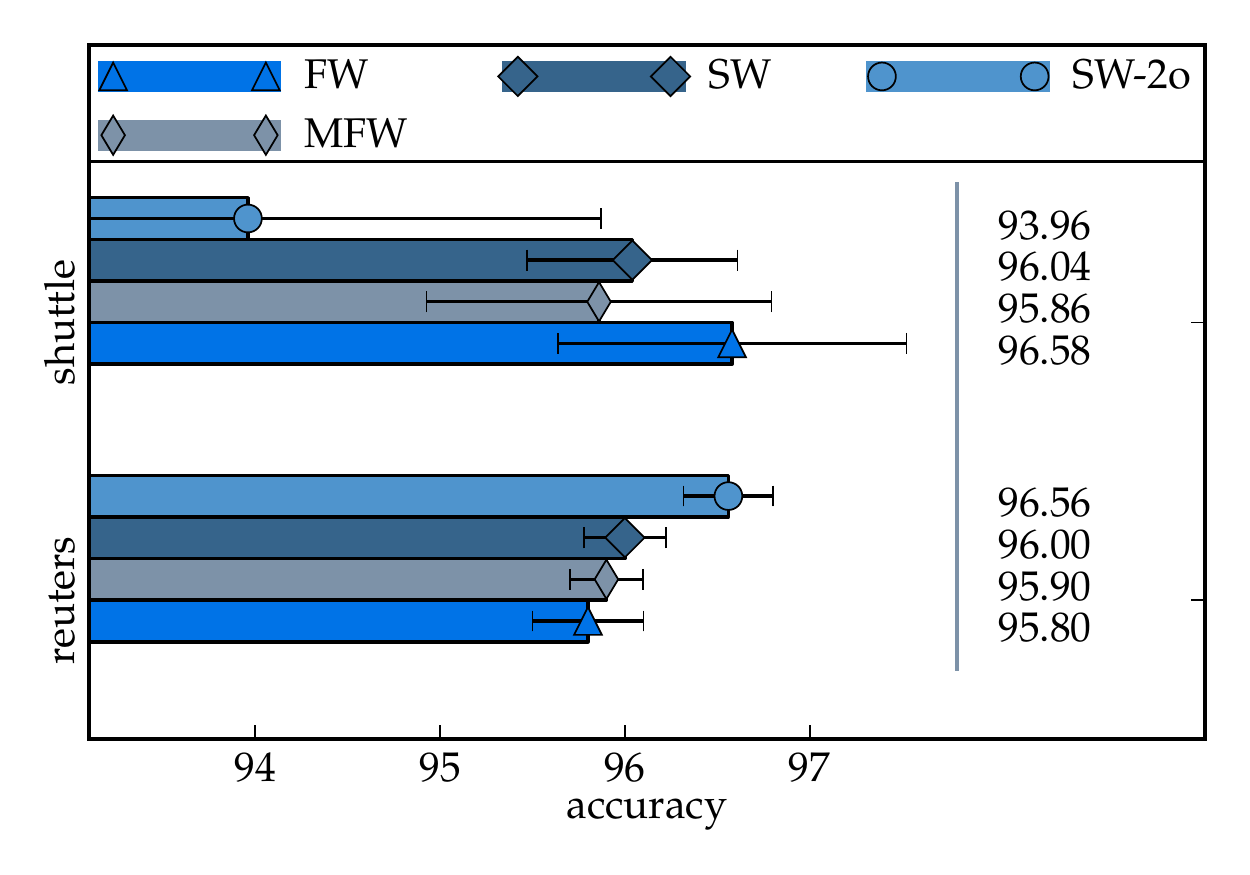}\\
\includegraphics[width=\textwidth]{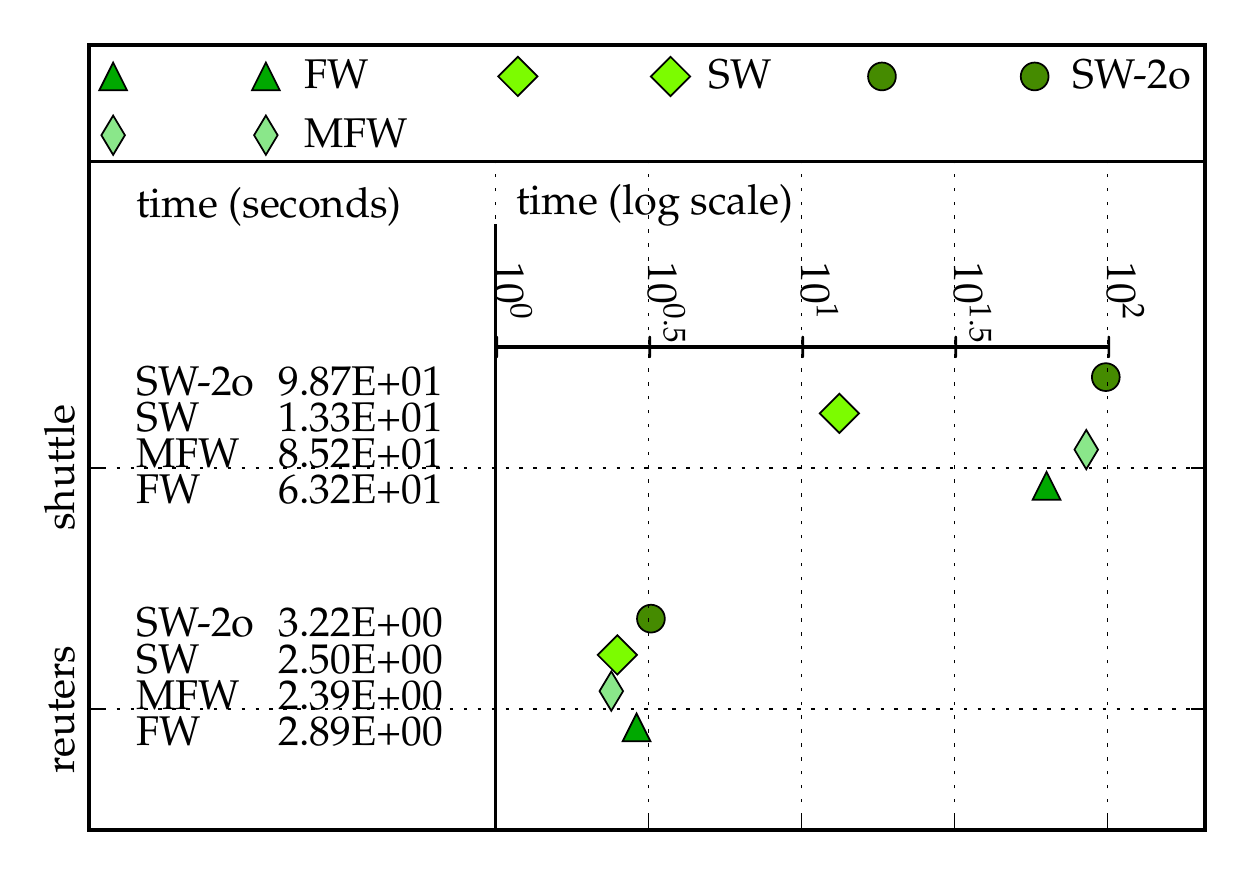}\\
\includegraphics[width=\textwidth]{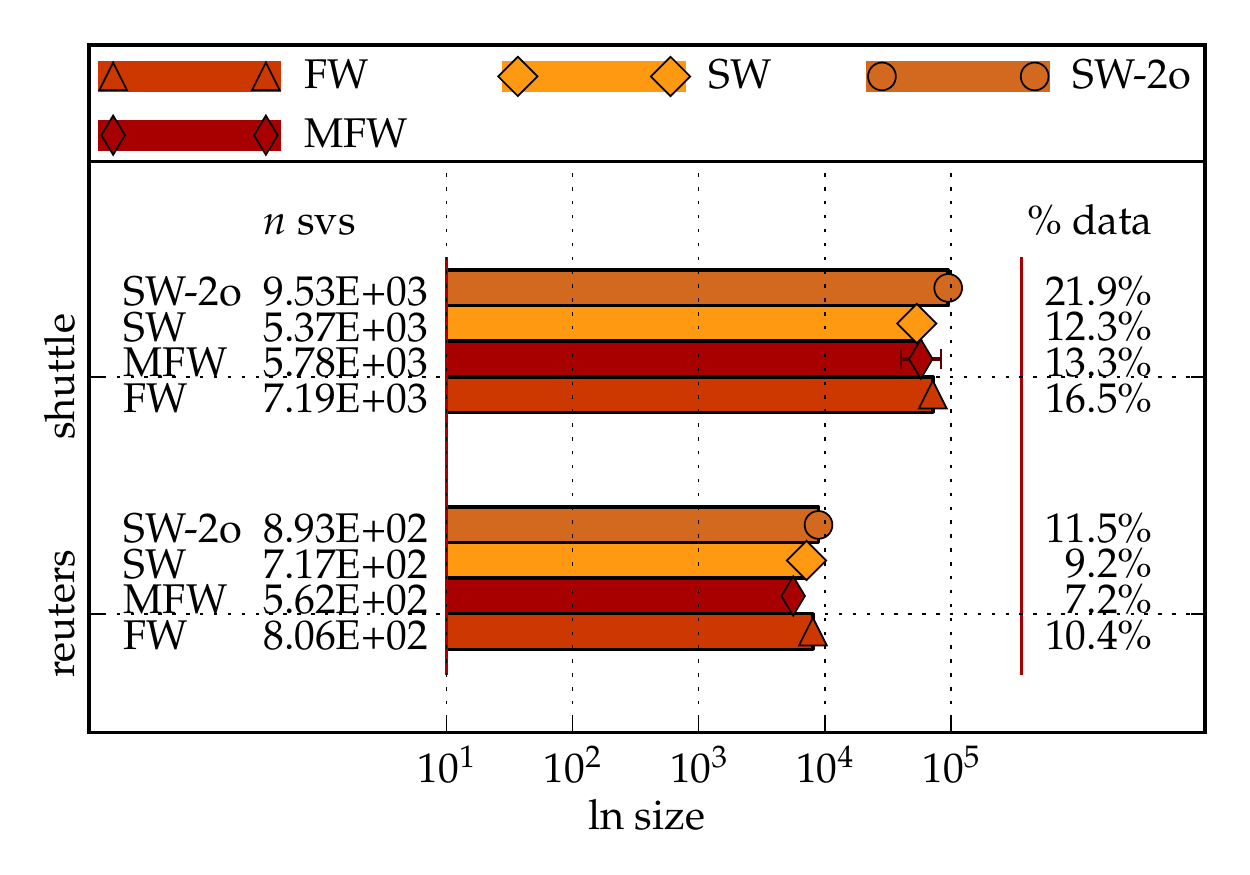}\\
\end{tabular}
	\end{minipage}
\vspace{0.2cm}	
\caption{\small On the left, model sizes obtained with the polynomial kernel in the datasets of the \textbf{Web} collection. On the right, testing accuracies, running times and coreset sizes obtained in the \emph{Shuttle} and \emph{Reuters} datasets. \label{poly-rest}}
\end{figure}

\section{Conclusions}

The main contribution of this paper is twofold. On the theoretical side, we proposed a new variant of the FW method for the general problem of maximizing a concave function on the unit simplex, introducing a novel way to perform away steps in the FW method devised to boost its convergence. On the practical side, we demonstrated that our approach is very effective in improving the performance of state-of-the-art SVM learners for large datasets, further expanding on the research about FW methods for Machine Learning problems.

We presented two variants of the procedure, SWAP and SWAP-2o, for which we provided a thorough theoretical analysis. First, we demonstrated that they converge globally. Second, we showed that SWAP and SWAP-2o asymptotically exhibit a linear rate of convergence, which is, as in the case of the MFW method, the main additional property with respect to the standard FW method. Finally, we proved that they achieve a primal-dual gap lower than a given tolerance $\varepsilon$ in $\cO(1/\varepsilon)$ iterations, independently of $m$, the dimensionality of the feasible space and the number of examples in SVM problems.

We then carried out an extensive set of performance evaluation experiments for both variants of the algorithm. The obtained results demonstrated that, in contrast to the MFW method, our approach provides a useful and robust alternative to the FW method for training SVMs.   

Most often, the proposed methods SWAP and SWAP-2o improved on the performance of MFW. The SWAP method was faster than MFW on all the datasets of the \textbf{Adult} collection, the \textbf{Web} collection and the \textbf{Protein} problem.  In the large-scale group of Figure \ref{small-and-large-time}(b) SWAP outperformed MFW on $4$ (out of $5$) datasets. In the medium-scale problems of Figure \ref{small-and-large-time}(a) SWAP was slightly slower. 

The SWAP-2o method was faster than MFW on all the datasets of the \textbf{Adult} collection, $6$ (out of $8$) datasets in the \textbf{Web} collection and $4$ (out of $5$) datasets in the large-scale group of Figure \ref{small-and-large-time}(b). SWAP-2o was also faster in the \textbf{Protein} problem and slightly faster on the medium-scale problems of Figure \ref{small-and-large-time}(a). 

The conclusion that SWAP and SWAP-2o are faster than MFW was found statistically significant at significance levels of $1\%$ or better. Often, the SWAP method improved on MFW by one order of magnitude and sometimes by two orders of magnitude. In addition, in the cases in which MFW was faster, the advantage was less significant than the improvements of our techniques on MFW. 

The proposed methods were also faster than the basic FW method several times. For example, SWAP ran in median $15$ times faster than FW in the \textbf{Adult} collection and SWAP-2o ran $20$ times faster. Similar results were observed in the \textbf{Shuttle} and \textbf{Protein} datasets. We found that the conclusion that SWAP is faster than FW is statistically significant at a critical value of around $4\%$. In contrast, we were not able to reject the hypothesis that MFW and FW lead to similar training times.  
Similarly, we cannot conclude that FW and SWAP-2o have different running times.  
      
Another important conclusion of our experimental results arises after an analysis of the cases in which either FW or MFW \emph{fail} in improving running times of CVM by a significant amount. 

\begin{itemize}
\item In some cases, away steps of MFW significantly speed-up the FW method. Some examples were the \textbf{Adult} collection, the \textbf{Shuttle} and \textbf{Protein} datasets. In those cases, the SWAP method is competitive with or faster than MFW and significantly faster than FW.  

\item In some other cases, classic away steps fail. MFW achieves in those cases noticeably worse running times. For instance, we observed this behavior in the \textbf{Web} collection, the \textbf{USPS-Ext} and \textbf{KDD-10pc} datasets. In those cases, the SWAP method is clearly faster than MFW. In addition, it is competitive with the fastest algorithm (FW). 

\end{itemize}
  
We conclude that the SWAP method can be expected to be faster than MFW in those cases in which classic away steps effectively boost the convergence of the FW method but also very competitive against FW when away steps fail. Thus, SWAP is a robust alternative to FW, MFW or CVM. From this point of view, the SWAP-2o method is less appealing. Even if SWAP-2o outperforms more significantly the standard FW when away steps are useful, this technique seems to fail very often in the same cases in which MFW fails. If we knew that away steps were going to be useful for a given problem, SWAP-2o would be the algorithm of choice. However, since we cannot predict that in advance, MFW and SWAP-2o are less reliable in practice.

Finally, our experiments have demonstrated that the improvements in running time that we obtain on FW or MFW do not come at the expense at the expense of testing accuracy. Most of the time SWAP is slightly more accurate than FW and as accurate as MFW.

\appendix
\section{Technical Results}\label{appendice}
Here we report statements and proofs of a number of technical results, which are used in the theoretical analysis of Section 4.

\subsection{Perturbation Analysis}
We follow the analysis presented in \cite{Damla06linearconvergence}, which is in turn based on the perturbation method of Robinson \cite{Robinson1982Perturbed}. Consider the following \emph{perturbed} variant of
(\ref{eq:generic-concave-on-the-simplex}),
\begin{equation}\label{eq:perturbed-concave-on-the-simplex}
\begin{aligned}
\maximize_{\balpha} &\;\; w(\balpha) = g(\balpha) - \bz^{T}\balpha \\
\mbox{subject to} &\;\; \uno^{T}\balpha = 1, \;\; \balpha \geq 0 \ ,
\end{aligned}
\end{equation}
where $\bz \in \bbR^{m}$ is perturbation vector.

Now, suppose we have a $\Delta_\star$-approximate solution $\balpha_\star \in \bbR^{m}$. We are aimed to show that $\balpha_\star$ is
the solution of a perturbed problem with a certain $\bz$. We define
$\bz = \bz(\balpha_\star,\Delta_\star)$ by
\begin{equation}\label{eq:perturbation_vector}
z_i = \left\{
\begin{aligned}
  \Delta_{\star} & \quad \mbox{if} \quad & \alpha_{\star i} = 0, \\
  \nabla g(\balpha_\star)_i - \balpha_\star^{T}\nabla g(\balpha_\star) & \quad \mbox{if} \quad & \alpha_{\star i} > 0. \\
\end{aligned} \right. \ 
\end{equation}
Note first that if $\alpha_{\star i} \neq 0$
\begin{equation}\label{eq:perturbation_vector_coordinate}
\begin{aligned}
z_i &= \nabla g(\balpha_\star)_i - \balpha_\star^{T}\nabla g(\balpha_\star) \geq  - \Delta_\star\\
z_i &= \nabla g(\balpha_\star)_i - \balpha_\star^{T}\nabla g(\balpha_\star) \leq  + \Delta_\star
\ , %
\end{aligned}
\end{equation}
because $\balpha_\star$ is a $\Delta_\star$-approximate solution. If $\alpha_{\star i} = 0$, $z_i=\Delta_\star$ by construction. Then,
\begin{equation}\label{eq:bound_norm_perturbation_vector}
\|\bz\|^{2} = \sum_i |z_i|^{2} \leq m \Delta_\star^{2} \ .
\end{equation}
Note in addition that
\begin{equation}\label{eq:kkt_gap_perturbation_vector}
\begin{aligned}
\balpha^{T}_\star\bz = \sum_i \alpha_{\star i} z_i = \sum_{i:
\alpha_{\star i} \neq 0} \alpha_{\star i} z_i & = \balpha^{T}_\star
\nabla g(\balpha_\star) - \left(\balpha^{T}_\star \nabla
g(\balpha_\star)\right) \balpha^{T}_\star \uno = 0 \ ,\\
\end{aligned}
\end{equation}
because $\balpha_\star$ is feasible for (\ref{eq:generic-concave-on-the-simplex}). Note finally that
\begin{equation}\label{eq:perturbed_gradient}
\nabla w(\balpha) = \nabla g(\balpha) - \bz \ .
\end{equation}
Thus, from Eqn. (\ref{eq:kkt_gap_perturbation_vector}) we obtain that the following stationarity condition is fulfilled
\begin{equation}\label{eq:perturbed_stationarity}
\balpha_\star^{T} \nabla w(\balpha_\star) = \balpha_\star^{T} \nabla
g(\balpha_\star) - \balpha_\star^{T} \bz = \balpha_\star^{T} \nabla
g(\balpha_\star) \ .
\end{equation}
The following lemma follows easily from the previous remarks.
\begin{lemma} If $\balpha_{\star}$ is a $\Delta_\star$-approximate solution, then $\balpha_{\star}$ is optimal for problem
(\ref{eq:perturbed-concave-on-the-simplex}) with $\bz = \bz(\balpha_\star,\Delta_\star)$ as defined in Eqn. (\ref{eq:perturbation_vector}).
\end{lemma}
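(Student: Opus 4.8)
The plan is to show that $\balpha_\star$ satisfies the first-order optimality condition for the concave program (\ref{eq:perturbed-concave-on-the-simplex}); since $w(\balpha) = g(\balpha) - \bz^T\balpha$ differs from the concave function $g$ by a linear term it is itself concave, and the feasible set is the simplex, so this condition is also sufficient for global optimality. Concretely I would verify that $(\balpha - \balpha_\star)^T \nabla w(\balpha_\star) \leq 0$ for every feasible $\balpha$, from which concavity of $w$ gives $w(\balpha) \leq w(\balpha_\star) + (\balpha - \balpha_\star)^T \nabla w(\balpha_\star) \leq w(\balpha_\star)$, i.e. $\balpha_\star$ is optimal.

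First I would note that $\balpha_\star$ is feasible for (\ref{eq:perturbed-concave-on-the-simplex}), as it is feasible for (\ref{eq:generic-concave-on-the-simplex}) and the two problems have the same constraints. Then, using (\ref{eq:perturbed_gradient}), I would read off the coordinates of $\nabla w(\balpha_\star) = \nabla g(\balpha_\star) - \bz$. For an active index $\alpha_{\star i} > 0$, the definition (\ref{eq:perturbation_vector}) of $\bz$ gives $z_i = \nabla g(\balpha_\star)_i - \balpha_\star^T\nabla g(\balpha_\star)$, hence $\nabla w(\balpha_\star)_i = \balpha_\star^T\nabla g(\balpha_\star) =: \mu$; thus every active coordinate of $\nabla w(\balpha_\star)$ equals the common value $\mu$. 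For an inactive index $\alpha_{\star i} = 0$, one has $z_i = \Delta_\star$, so $\nabla w(\balpha_\star)_i = \nabla g(\balpha_\star)_i - \Delta_\star$, and the first defining property of a $\Delta_\star$-approximate solution, $\Delta^d(\balpha_\star) = \max_i \nabla g(\balpha_\star)_i - \balpha_\star^T\nabla g(\balpha_\star) \leq \Delta_\star$, yields $\nabla w(\balpha_\star)_i \leq \balpha_\star^T\nabla g(\balpha_\star) = \mu$. So $\nabla w(\balpha_\star)_i \leq \mu$ for all $i$, with equality on the active face; multiplying by $\alpha_{\star i}$ and summing recovers (\ref{eq:perturbed_stationarity}), i.e. $\balpha_\star^T\nabla w(\balpha_\star) = \mu$ (this also follows directly from $\balpha_\star^T\bz = 0$, Eqn. (\ref{eq:kkt_gap_perturbation_vector})).

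The rest is immediate: for any feasible $\balpha$, $\balpha^T\nabla w(\balpha_\star) = \sum_i \alpha_i \nabla w(\balpha_\star)_i \leq \mu\sum_i\alpha_i = \mu = \balpha_\star^T\nabla w(\balpha_\star)$ since $\alpha_i \geq 0$ and $\sum_i\alpha_i = 1$, hence $(\balpha - \balpha_\star)^T\nabla w(\balpha_\star) \leq 0$, and concavity of $w$ closes the argument.

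I do not expect a genuine obstacle here: the perturbation vector (\ref{eq:perturbation_vector}) was engineered precisely so that $\nabla w(\balpha_\star)$ is flat on the active face and dominated off it, which is exactly the KKT system for (\ref{eq:perturbed-concave-on-the-simplex}) at $\balpha_\star$ with $\mu$ the multiplier of $\uno^T\balpha = 1$ and $\mu - \nabla w(\balpha_\star)_i \geq 0$ the multipliers of $\alpha_i \geq 0$ (complementary slackness being built into the active/inactive split). The only place the $\Delta_\star$-approximation hypothesis is used beyond feasibility is the inactive-coordinate bound, and there only the dual-gap condition $\Delta^d(\balpha_\star) \leq \Delta_\star$ is needed; the complementary bound $\Delta^s_i(\balpha_\star) \geq -\Delta_\star$ (and the norm estimate (\ref{eq:bound_norm_perturbation_vector}) it yields) plays no role in this lemma but will be needed for the subsequent perturbation-theoretic results.
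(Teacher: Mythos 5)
Your proof is correct and follows essentially the same route as the paper, which simply states that the result ``follows from the concavity of the perturbed problem and the remarks above'' and defers the details to Lemma 3.3 of the cited reference. Your writeup supplies exactly those omitted details: the verification that $\nabla w(\balpha_\star)$ is constant (equal to $\mu = \balpha_\star^T\nabla g(\balpha_\star)$) on the active coordinates and bounded above by $\mu$ elsewhere via the dual-gap condition, followed by the standard first-order sufficiency argument for a concave objective on the simplex.
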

\begin{proof} It follows from the concavity of problem (\ref{eq:perturbed-concave-on-the-simplex}) and the remarks above.
See \cite{Damla06linearconvergence}, Lemma 3.3 for details.
\end{proof}

The next lemma is the basis of the analysis of the rate of convergence for the modified Frank-Wolfe methods.

\begin{lemma} Let $\balpha^{\ast}$ be the solution of problem (\ref{eq:generic-concave-on-the-simplex}) and
$\balpha_{\star}$ a $\Delta_\star$-approximate solution. Then,
\begin{equation}\label{eq:key_inequality}
g(\balpha^{\ast}) - g(\balpha_\star) \leq \|\bz\| \|\balpha^{\ast} -
\balpha_\star\| \leq \sqrt{m} \Delta_\star \|\balpha^{\ast} -
\balpha_\star\| \ .
\end{equation}
\end{lemma}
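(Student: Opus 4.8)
The plan is to exploit the preceding lemma, which tells us that the $\Delta_\star$-approximate solution $\balpha_\star$ is an exact maximizer of the perturbed objective $w(\balpha) = g(\balpha) - \bz^T\balpha$ over the simplex, where $\bz = \bz(\balpha_\star,\Delta_\star)$ is the vector defined in Eqn.~(\ref{eq:perturbation_vector}). Since the optimal solution $\balpha^{\ast}$ of problem~(\ref{eq:generic-concave-on-the-simplex}) is feasible for the perturbed problem~(\ref{eq:perturbed-concave-on-the-simplex}) (the feasible set is the same unit simplex), optimality of $\balpha_\star$ gives the single comparison inequality $w(\balpha_\star) \geq w(\balpha^{\ast})$.

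Next I would unfold this inequality using the definition of $w$, obtaining $g(\balpha_\star) - \bz^T\balpha_\star \geq g(\balpha^{\ast}) - \bz^T\balpha^{\ast}$. Here the key algebraic simplification is Eqn.~(\ref{eq:kkt_gap_perturbation_vector}), namely $\balpha_\star^T\bz = 0$, which comes from the construction of $\bz$ together with feasibility of $\balpha_\star$. Dropping the vanishing term leaves $g(\balpha_\star) \geq g(\balpha^{\ast}) - \bz^T\balpha^{\ast}$, and reintroducing the zero term $\bz^T\balpha_\star$ on the right-hand side rewrites this as
\begin{equation*}
g(\balpha^{\ast}) - g(\balpha_\star) \leq \bz^T(\balpha^{\ast} - \balpha_\star) \ .
\end{equation*}
An application of the Cauchy--Schwarz inequality then yields the first bound, $g(\balpha^{\ast}) - g(\balpha_\star) \leq \|\bz\|\,\|\balpha^{\ast} - \balpha_\star\|$.

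The second inequality in~(\ref{eq:key_inequality}) is immediate: Eqn.~(\ref{eq:bound_norm_perturbation_vector}) already established $\|\bz\|^2 \leq m\Delta_\star^2$, hence $\|\bz\| \leq \sqrt{m}\,\Delta_\star$, and multiplying through by the nonnegative factor $\|\balpha^{\ast} - \balpha_\star\|$ completes the chain.

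I do not expect any real obstacle here, since all the ingredients (optimality of $\balpha_\star$ for the perturbed problem, $\balpha_\star^T\bz = 0$, and the norm bound on $\bz$) have been assembled in the preceding paragraphs of the Appendix. The only point requiring a moment's care is making sure the comparison $w(\balpha_\star)\geq w(\balpha^{\ast})$ is applied with $\balpha^{\ast}$ — not some other competitor — as the feasible test point, and that the term $\balpha_\star^T\bz$ is correctly recognized as zero rather than merely small.
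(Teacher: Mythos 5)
Your argument is correct and follows exactly the paper's route: test $\balpha^{\ast}$ in the perturbed problem where $\balpha_\star$ is optimal, rearrange to $g(\balpha^{\ast}) - g(\balpha_\star) \leq \bz^T(\balpha^{\ast} - \balpha_\star)$, apply Cauchy--Schwarz, and finish with the norm bound on $\bz$. The only cosmetic difference is that your detour through $\balpha_\star^T\bz = 0$ (dropping and then reintroducing that term) is unnecessary, since the rearrangement goes through directly without it.
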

\begin{proof} The vector $\balpha^{\ast}$ is feasible for the perturbed problem
(\ref{eq:perturbed-concave-on-the-simplex}) with $\bz =
\bz(\balpha_\star,\Delta_\star)$. Since $\balpha_\star$ is optimal
for this problem, we have $g(\balpha^{\ast}) - \bz^{T}\balpha^{\ast}
\leq g(\balpha_\star) - \bz^{T}\balpha_\star$. This demonstrates the
first inequality. The other follows from Eqn. 
(\ref{eq:bound_norm_perturbation_vector}). 
\end{proof}
From here, the following lemma follows easily.
\begin{lemma}\label{lemma:key_lemma_for_linear_convergence} 
Suppose condition B2 holds. Let $\balpha^{\ast}$ be the solution of problem (\ref{eq:generic-concave-on-the-simplex}) and
$\balpha_{\star}$ a $\Delta_\star$-approximate solution, where $\Delta_\star$ is 
sufficiently small. Then,
\begin{equation}\label{eq:key_inequality}
g(\balpha^{\ast}) - g(\balpha_\star) \leq N m
\Delta_\star^{2} 
\end{equation}
for some Lipschitz constant $N$.
\end{lemma}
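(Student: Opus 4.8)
The plan is to combine the estimate already proved in the preceding lemma with a Lipschitz-type bound on how far a $\Delta_\star$-approximate solution can lie from the true optimum $\balpha^{\ast}$; the latter is precisely where Robinson's condition B2 enters.

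First I would recall that, by the lemma immediately above, $\balpha_\star$ is optimal for the perturbed problem (\ref{eq:perturbed-concave-on-the-simplex}) with $\bz = \bz(\balpha_\star,\Delta_\star)$ as in (\ref{eq:perturbation_vector}), and that the same lemma, together with (\ref{eq:bound_norm_perturbation_vector}), already yields
\[
g(\balpha^{\ast}) - g(\balpha_\star) \leq \|\bz\|\,\|\balpha^{\ast} - \balpha_\star\| \leq \sqrt{m}\,\Delta_\star\,\|\balpha^{\ast} - \balpha_\star\| \ .
\]
Hence the whole statement reduces to showing $\|\balpha^{\ast} - \balpha_\star\| \leq N\sqrt{m}\,\Delta_\star$ for a constant $N$ depending only on the problem (not on $m$ or $\Delta_\star$), since plugging such a bound into the displayed inequality gives $g(\balpha^{\ast}) - g(\balpha_\star) \leq N m \Delta_\star^{2}$, which is the claim.

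Second, to obtain this estimate I would invoke the strong sufficient condition of Robinson. Under B2, $\balpha^{\ast}$ is an isolated KKT point, and the perturbation analysis of \cite{Robinson1982Perturbed} (used in the same way in \cite{Damla06linearconvergence} and \cite{yildirim08}) provides a neighbourhood $\mathcal{U}$ of $\balpha^{\ast}$, a neighbourhood $\mathcal{V}$ of $0$ in $\bbR^{m}$, and a constant $N>0$ such that for every perturbation $\bz \in \mathcal{V}$ the problem (\ref{eq:perturbed-concave-on-the-simplex}) has a stationary point inside $\mathcal{U}$ lying within distance $N\|\bz\|$ of $\balpha^{\ast}$. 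Since $\Delta_\star$ is assumed sufficiently small, $\|\bz\| \leq \sqrt{m}\,\Delta_\star$ belongs to $\mathcal{V}$; once one knows that $\balpha_\star$ itself lies in $\mathcal{U}$, the optimality of $\balpha_\star$ for the perturbed problem identifies it with the stationary point supplied by Robinson's theorem, so $\|\balpha^{\ast} - \balpha_\star\| \leq N\|\bz\| \leq N\sqrt{m}\,\Delta_\star$, and the proof is complete.

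The main obstacle will be the localization step: a priori a $\Delta_\star$-approximate solution need not be unique nor close to $\balpha^{\ast}$, so one must argue that for $\Delta_\star$ below some threshold any such $\balpha_\star$ is trapped in $\mathcal{U}$. This is where one uses the first defining condition $\Delta^d(\balpha_\star)\leq\Delta_\star$ together with B2 (local uniqueness and the resulting quadratic growth of $g$ near $\balpha^{\ast}$) and the global convergence established in Proposition \ref{prop:global_convergence_SWAP}: a small primal--dual gap forces $g(\balpha_\star)$ close to $g(\balpha^{\ast})$, which by the local quadratic behaviour forces $\balpha_\star$ close to $\balpha^{\ast}$. Everything else — the bound $\|\bz\|\leq\sqrt{m}\,\Delta_\star$ and Robinson's Lipschitz estimate — is routine, and the argument mirrors the analogous lemma in \cite{Damla06linearconvergence}.
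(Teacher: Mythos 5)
Your proposal is correct and follows essentially the same route as the paper: it combines the preceding lemma's bound $g(\balpha^{\ast}) - g(\balpha_\star) \leq \sqrt{m}\,\Delta_\star\,\|\balpha^{\ast} - \balpha_\star\|$ with the Robinson-condition estimate $\|\balpha^{\ast} - \balpha_\star\| \leq N\|\bz\| \leq N\sqrt{m}\,\Delta_\star$, which is exactly what the paper does (deferring the details of that estimate to \cite{Damla06linearconvergence}). Your additional discussion of the localization step only fleshes out what the paper leaves to the cited reference.
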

\begin{proof} See
\cite{Damla06linearconvergence} to see how from the Robinson
condition it follows that there exists a Lipschitz  constant $N$
such that, for sufficiently small $\Delta_\star$,  $\|\balpha^{\ast}
- \balpha_\star\| \leq N \|\bz\| \leq N \sqrt{m} \Delta_\star$.
Combining this result with the previous lemma yields the result.
\end{proof}

Now, since $g$ is twice differentiable, the Taylor expansion 
for $g(\balpha_k + \lambda\bd)$ as a function of $\lambda$ is
\begin{small}
\begin{equation}\label{eq:taylor_second_order}
\begin{aligned}
g\left(\balpha_k + \lambda\bd\right) &= g(\balpha_k) + \lambda
\nabla g(\balpha_k)^{T}\bd + \frac{1}{2} \lambda^{2} \bd^{T}
\nabla^{2}g(\tilde{\balpha}) \bd\ ,
\end{aligned}
\end{equation}
\end{small}
where $\tilde{\balpha}$ is some point on the line between $\balpha_k
+ \lambda\bd$ and $\balpha_k$. Since $g$ is concave, the Hessian
matrix $g(\tilde{\balpha})$ is negative semi-definite, so the
last term is always non-positive. To obtain a bound for
$g\left(\balpha_k + \lambda\bd\right) - g(\balpha_k)$, we need a
bound $L$ on the norm of $\nabla^{2}g({\balpha})$ over the simplex.
We can set $L$ to the largest absolute value of an eigenvalue of
this matrix. 
We therefore obtain the following bound:
\begin{small}
\begin{equation}\label{eq:generic_bound_second_order}
\begin{aligned}
g\left(\balpha_k + \lambda\bd\right) - g(\balpha_k) & \geq  \lambda \nabla g(\balpha_k)^{T}\bd - \frac{1}{2}
\lambda^{2}L \|\bd\|^{2} \ .
\end{aligned}
\end{equation}
\end{small}

We now exploit the previous bound to analyze the
improvement in the objective function $\delta_{\mbox{\tiny fw}}$
after a standard FW step, and the improvement
$\delta_{\mbox{\tiny swap}}$ after a SWAP step in Algorithm \ref{alg:SWAP-generic}.

\subsection{Objective Function Improvement after Frank-Wolfe Steps}
Under hypothesis B1, we now derive a lower bound for the improvement in the objective function $g(\balpha_k + \lambda d_k) - g(\balpha_k)$ in the case a FW step is performed.

For a FW step we have $\bd_k = \be_{i\ast} - \balpha_k$. Thus,
\begin{small}
\begin{equation}\label{eq:fw-application-generic-bound}
\begin{aligned}
\delta_{\mbox{\tiny fw}} &= g\left(\balpha_k +
\lambda(\be_{i\ast} - \balpha_k)\right) - g(\balpha_k)\\
&\geq \lambda \left(\nabla g(\balpha_k)_{i\ast} - \nabla
g(\balpha_k)^{T}\balpha_k\right) - \frac{L}{2} \lambda^{2}
\left\|\be_{i\ast} - \balpha_k\right\|^{2} \ .
\end{aligned}
\end{equation}
\end{small}
But both $\be_{i\ast}$ and $\balpha_k$ lie in the simplex. Hence
$\left\|\be_{i\ast} - \balpha_k\right\|^{2} \leq 2$. This leads to
\begin{equation}\label{eq:fw-case-0}
\begin{aligned}
\delta_{\mbox{\tiny fw}} &\geq g\left(\balpha_k +
\lambda(\be_{i\ast} - \balpha_k)\right) - g(\balpha_k)\\
&\geq \lambda \left(\nabla g(\balpha_k)_{i\ast} - \nabla
g(\balpha_k)^{T}\balpha_k\right) - L\lambda^{2} \ .
\end{aligned}
\end{equation}
The maximum of the right-hand side is obtained for
\begin{small}
\begin{equation}\label{eq:fw-case-1}
\begin{aligned}
\lambda^{\star}_{\mbox{\tiny fw}} & = \frac{\nabla
g(\balpha_k)_{i\ast} - \nabla g(\balpha_k)^{T}\balpha_k}{2L} \ .
\end{aligned}
\end{equation}
\end{small}
If $\lambda^{\star}_{\mbox{\tiny fw}} \leq 1$, the improvement in
the objective function after an iteration marked
as a FW step in Algorithm \ref{alg:SWAP-generic} is bounded by
\begin{equation}\label{eq:fw-bound_improvement-a}
\begin{aligned}
\delta_{\mbox{\tiny fw}} & \geq \frac{\left(\nabla
g(\balpha_k)_{i\ast} - \nabla g(\balpha_k)^{T}\balpha_k\right)^{2}
}{4L} \ ,
\end{aligned}
\end{equation}
and by reordering we obtain
\begin{equation}\label{eq:fw-case-2-a}
\nabla g(\balpha_k)_{i\ast} - \nabla
g(\balpha_k)^{T}\balpha_k \leq 2 \sqrt{L \delta_{\mbox{\tiny
fw}}} \leq 2 \sqrt{L \delta_k} \ ,
\end{equation}
where the latter inequality follows from the definition of
$\delta_k=\max\left(\delta_{\mbox{\tiny fw}}, \delta_{\mbox{\tiny
swap}}\right)$. Now, if $\lambda^{\star}_{\mbox{\tiny fw}} > 1$, we
cannot use this step-size. In that case we use the step-size
$\lambda = 1$. But $\lambda^{\star}_{\mbox{\tiny fw}} > 1$ implies
\begin{small}
\begin{equation}\label{eq:fw-case_infeasisble}
\begin{aligned}
\nabla g(\balpha_k)_{i\ast} - \nabla
g(\balpha_k)^{T}\balpha_k & \geq 2L \ .
\end{aligned}
\end{equation}
\end{small}
Thus, using Eqn. (\ref{eq:fw-case-0}) with $\lambda = 1$ and exploiting the inequality above, the improvement in the objective function for a FW step can be
bounded in this case as
\begin{equation}\label{eq:fw-bound_improvement-b}
\begin{aligned}
\delta_{\mbox{\tiny fw}} & \geq \frac{\left(\nabla
g(\balpha_k)_{i\ast} - \nabla g(\balpha_k)^{T}\balpha_k\right)}{2} \
,
\end{aligned}
\end{equation}
which leads to
\begin{equation}\label{eq:fw-case-2-b}
\nabla g(\balpha_k)_{i\ast} - \nabla
g(\balpha_k)^{T}\balpha_k \leq 2 \delta_{\mbox{\tiny fw}}
\leq 2 \delta_k \ .
\end{equation}
In any case, we have the following bound for the improvement of the
objective function:
\begin{small}
\begin{equation}\label{eq:fw-bound_improvement}
\begin{aligned}
\delta_{\mbox{\tiny fw}} & \geq \min\left( \frac{\left(\nabla
g(\balpha_k)_{i\ast} - \nabla g(\balpha_k)^{T}\balpha_k\right)^{2}
}{4L} \ , \ \frac{\left(\nabla g(\balpha_k)_{i\ast} - \nabla
g(\balpha_k)^{T}\balpha_k\right)}{2}\right) \ ,
\end{aligned}
\end{equation}
\end{small}
which guarantees that for any $k$
\begin{small}
\begin{equation}\label{eq:fw-case-2}
\nabla g(\balpha_k)_{i\ast} - \nabla
g(\balpha_k)^{T}\balpha_k \leq \max\left(2 \sqrt{L
\delta_{\mbox{\tiny fw}}} \ , \ 2 \delta_{\mbox{\tiny fw}}\right)
\leq \max\left(2 \sqrt{L \delta_k} \ , \ 2 \delta_k\right) \ .
\end{equation}
\end{small}
Now, since $\nabla g(\balpha_k)_{i} \leq \nabla
g(\balpha_k)_{i\ast}\,$ $\forall i$, the following inequality is
guaranteed at each iteration of Algorithm \ref{alg:SWAP-generic} for
any $i$:
\begin{equation}\label{eq:bound_weak}
\nabla g(\balpha_k)_{i} - \balpha_k^{T} \nabla g(\balpha_k) \leq \max\left(2 \sqrt{L \delta_k} \ , \ 2 \delta_k\right) \
.
\end{equation}

\subsection{Objective Function Improvement after SWAP Steps}
We now bound the improvement obtained by SWAP steps. In this case, $\bd = \be_{i\ast} -
\be_{j\ast}$. Thus,
\begin{small}
\begin{equation}\label{eq:swap-application-generic-bound}
\begin{aligned}
\delta_{\mbox{\tiny swap}} &= g\left(\balpha_k +
\lambda(\be_{i\ast} - \be_{j\ast})\right) - g(\balpha_k)\\
&\geq \lambda \left(\nabla g(\balpha_k)_{i\ast} - \nabla
g(\balpha_k)_{j\ast}\right) - \frac{L}{2} \lambda^{2}
\left\|\be_{i\ast} - \be_{j\ast}\right\|^{2} \ .
\end{aligned}
\end{equation}
\end{small}
But $\|\be_{i\ast} - \be_{j\ast}\|^{2}=2$. Thus,
\begin{small}
\begin{equation}\label{eq:swap-case-0}
\begin{aligned}
\delta_{\mbox{\tiny swap}} & \geq \lambda \left(\nabla
g(\balpha_k)_{i\ast} - \nabla g(\balpha_k)_{j\ast}\right) -
\lambda^{2} L \ .
\end{aligned}
\end{equation}
\end{small}
The maximum of the right-hand side is obtained for
\begin{small}
\begin{equation}\label{eq:swap-case-1}
\begin{aligned}
\lambda^{\star}_{\mbox{\tiny swap}} & = \frac{\nabla
g(\balpha_k)_{i\ast} - \nabla g(\balpha_k)_{j\ast}}{2L} \ .
\end{aligned}
\end{equation}
\end{small}
If $\lambda^{\star}_{\mbox{\tiny swap}} \leq 1$, the improvement in
the objective function for an unconstrained SWAP step, that is, an
iteration marked as SWAP-add in Algorithm \ref{alg:SWAP-generic}, is bounded as
\begin{equation}\label{eq:swap-case-2-a}
\begin{aligned}
\delta_{\mbox{\tiny swap}} & \geq \frac{\left(\nabla
g(\balpha_k)_{i\ast} - \nabla g(\balpha_k)_{j\ast}\right)^{2}}{4L} \ .
\end{aligned}
\end{equation}
Note now that $\balpha_k^{T} \nabla g(\balpha_k) \leq \nabla
g(\balpha_k)_{i\ast}$ because $\nabla g(\balpha_k)_{i\ast} = \max_i
\nabla g(\balpha_k)_{i}$ and $\balpha_k^{T}\uno = 1$. This
observation leads to
\begin{equation}\label{eq:swap-bound_improvement-a}
\begin{aligned}
\delta_{\mbox{\tiny swap}} &\geq \frac{\left(\balpha_k^{T} \nabla
g(\balpha_k) - \nabla g(\balpha_k)_{j\ast}\right)^{2}}{4L} \ .
\end{aligned}
\end{equation}
By reordering, we obtain the following inequality,
\begin{equation}\label{eq:swap-case-3-a}
\balpha_k^{T} \nabla g(\balpha_k) - \nabla
g(\balpha_k)^{T}_{j\ast} \leq 2 \sqrt{L \delta_{\mbox{\tiny
swap}}} \leq 2 \sqrt{L \delta_k} \ ,
\end{equation}
where we have used the definition of $\delta_k$. Now, if
$\lambda^{\star}_{\mbox{\tiny swap}} > 1$, we cannot use this
step-size. In that case we use the step-size $\lambda = 1$. Recall that
we are supposing to be performing a SWAP-add step. In a way analogous
to the Frank-Wolfe case, $\lambda^{\star}_{\mbox{\tiny
swap}} > 1$ implies
\begin{small}
\begin{equation}\label{eq:swap-case_infeasisble}
\begin{aligned}
\nabla g(\balpha_k)_{i\ast} - \nabla
g(\balpha_k)_{j\ast} & \geq 2L \ .
\end{aligned}
\end{equation}
\end{small}
Thus, the improvement in the objective function is bounded in this case by
\begin{equation}\label{eq:swap-bound_improvement-b}
\begin{aligned}
\delta_{\mbox{\tiny swap}} & \geq \frac{\balpha_k^{T} \nabla
g(\balpha_k) - \nabla g(\balpha_k)_{j\ast}}{2} \ ,
\end{aligned}
\end{equation}
which leads to
\begin{equation}\label{eq:swap-case-3-b}
\balpha_k^{T} \nabla g(\balpha_k) - \nabla
g(\balpha_k)_{j\ast} \leq 2 \delta_{\mbox{\tiny swap}} \leq 2
\delta_k \ .
\end{equation}
In any case, we have the following bounds for the SWAP case
\begin{small}
\begin{equation}\label{eq:swap-bound_improvement}
\begin{aligned}
\delta_{\mbox{\tiny swap}} & \geq
\min\left(\frac{\left(\balpha_k^{T} \nabla g(\balpha_k) - \nabla
g(\balpha_k)_{j\ast}\right)^{2}}{4L} \ , \ \frac{\left(\balpha_k^{T}
\nabla g(\balpha_k) - \nabla g(\balpha_k)_{j\ast}\right)}{2} \right)
\ ,
\end{aligned}
\end{equation}
\end{small}
which leads to
\begin{small}
\begin{equation}\label{eq:swap-case-3}
\balpha_k^{T} \nabla g(\balpha_k) - \nabla
g(\balpha_k)_{j\ast} \leq \max\left(2 \sqrt{L
\delta_{\mbox{\tiny swap}}} \ , \ 2 \delta_{\mbox{\tiny
swap}}\right) \leq \max\left(2 \sqrt{L \delta_k} \ , \ 2
\delta_k\right) \ .
\end{equation}
\end{small}
Note now that the definition of $j\ast$ can be rearranged as
\begin{equation}\label{eq:swap-case-4}
j\ast \in \argmin_{j \in \cI_k} \nabla g(\balpha_k)_j - \balpha_k^{T}\nabla g(\balpha_k) = \argmax_{j \in \cI_k} \balpha_k^{T}\nabla
g(\balpha_k) - \nabla g(\balpha_k)_j \ .
\end{equation}
Thus, we obtain that the following inequality is guaranteed at each
iteration of Algorithm \ref{alg:SWAP-generic} $\forall i$ such that
$\alpha_{k,i} > 0$:
\begin{equation}\label{eq:bound_strong}
\balpha_k^{T} \nabla g(\balpha_k) - \nabla
g(\balpha_k)_{i} \leq \max\left(2 \sqrt{L \delta_k} \ , \ 2
\delta_k\right) \ .
\end{equation}

\begin{remark} After a swap-drop step in Algorithm \ref{alg:SWAP-generic} we cannot bound the improvement in the objective function, because the clipped value of the step-size $\lambda_{\mbox{\tiny swap}\star}$ may be arbitrarily small. However, it is not hard to show that the objective function value does not decrease. 
\end{remark}


\bibliographystyle{plain}	
\bibliography{SWAP_bibliography}

\begin{thebibliography}{10}

\bibitem{Damla06linearconvergence}
S.~Damla Ahipasaoglu, Sun Peng, and Michael Todd.
\newblock Linear convergence of a modified {F}rank-{W}olfe algorithm for
  computing minimum volume enclosing ellipsoids.
\newblock {\em Optimization Methods and Software}, 23(1):5--19, 2008.

\bibitem{SWAP_arxiv}
H\'ector Allende, Emanuele Frandi, Ricardo {\~N}anculef, and Claudio Sartori.
\newblock Novel {F}rank-{W}olfe methods for {S}{V}{M} learning.
\newblock {\em Technical report, http://arxiv.org/abs/1304.1014}, 2013.

\bibitem{Bakir2007StructuredData}
G\"{u}khan Bakir, Thomas Hofmann, Bernhard Sch\"{o}lkopf, Alexander Smola, Ben
  Taskar, and S.~V.~N. Vishwanathan, editors.
\newblock {\em Predicting Structured Data (Neural Information Processing)}.
\newblock The MIT Press, 2007.

\bibitem{beck2004}
Amir Beck and Marc Teboulle.
\newblock A conditional gradient method with linear rate of convergence for
  solving convex linear systems.
\newblock {\em Mathematical Methods of Operations Research}, 59(2):235--247,
  2004.

\bibitem{Bennett97geometryin}
Kristin~P. Bennett and Erin~J. Bredensteiner.
\newblock Geometry in learning.
\newblock In {\em Geometry at Work}, 1997.

\bibitem{Bennet00DualityGeometry}
Kristin~P. Bennett and Erin~J. Bredensteiner.
\newblock Duality and geometry in {S}{V}{M} classifiers.
\newblock In {\em In Proc. 17th International Conf. on Machine Learning}, pages
  57--64. Morgan Kaufmann, 2000.

\bibitem{bottou-mlss-2004}
L\'{e}on Bottou.
\newblock Stochastic learning.
\newblock In Olivier Bousquet and Ulrike von Luxburg, editors, {\em Advanced
  Lectures on Machine Learning}, Lecture Notes in Artificial Intelligence,
  LNAI~3176, pages 146--168. Springer Verlag, 2004.

\bibitem{Bottou07SGD}
L{\'e}on Bottou and Olivier Bousquet.
\newblock The tradeoffs of large scale learning.
\newblock In {\em NIPS}, 2007.

\bibitem{BadoiuClarkson03smaller-coresets}
Mihai B\u{a}doiu and Kenneth Clarkson.
\newblock Smaller core-sets for balls.
\newblock In {\em Proceedings of the SODA'03}, pages 801--802. SIAM, 2003.

\bibitem{BadoiuClarkson08optimal-coresets}
Mihai B\u{a}doiu and Kenneth Clarkson.
\newblock Optimal core-sets for balls.
\newblock {\em Computational Geometry}, 40(1):14--22, 2008.

\bibitem{burges2000geometric}
Cristopher~J.C. Burges and David~J. Crisp.
\newblock A geometric interpretation of nu-{S}{V}{M} classifiers.
\newblock {\em Advances in Neural Information Processing Systems, MIT Press,
  Cambridge, MA}, pages 244--250, 2000.

\bibitem{SVMLIB}
Chih-Chung Chang and Chih-Jen Lin.
\newblock {\em {LIBSVM}: a library for support vector machines}, 2011.

\bibitem{clarkson08coresets}
Kenneth Clarkson.
\newblock Coresets, sparse greedy approximation, and the {F}rank-{W}olfe
  algorithm.
\newblock In {\em Proceedings of {SODA}'08}, pages 922--931. SIAM, 2008.

\bibitem{ClarksonHW12}
Kenneth Clarkson, Elad Hazan, and David Woodruff.
\newblock Sublinear optimization for machine learning.
\newblock {\em Journal of the ACM}, 59(5), 2012.

\bibitem{Demsar06Statistical}
Janez Demsar.
\newblock Statistical comparison of classifiers over multiple data sets.
\newblock {\em Journal of Machine Learning Research}, 7:1--30, 2006.

\bibitem{smo-second-order05fan}
Rong-En Fan, Pai-Hsuen Chen, and Chih-Jen Lin.
\newblock Working set selection using second order information for training
  support vector machines.
\newblock {\em Journal of Machine Learning Research}, 6:1889--1918, 2005.

\bibitem{fiacco}
Anthony~V. Fiacco and Garth~P. McCormick.
\newblock {\em Nonlinear Programming: Sequential Unconstrained Minimization
  Techniques}.
\newblock SIAM, 1968 (reprinted 1990).

\bibitem{low-rank-representations02fine}
Shai Fine and Katya Scheinberg.
\newblock Efficient {S}{V}{M} training using low-rank kernel representations.
\newblock {\em Journal of Machine Learning Research}, 2:243--264, 2002.

\bibitem{CIARP}
Emanuele Frandi, Maria~Grazia Gasparo, Stefano Lodi, Ricardo {\~N}anculef, and
  Claudio Sartori.
\newblock A new algorithm for training {S}{V}{M}s using approximate minimal
  enclosing balls.
\newblock In {\em Proceedings of the 15th Iberoamerican Congress on Pattern
  Recognition, Lecture Notes in Computer Science}, pages 87--95. Springer,
  2010.

\bibitem{IJPRAI11}
Emanuele Frandi, Maria~Grazia Gasparo, Stefano Lodi, Ricardo {\~N}anculef, and
  Claudio Sartori.
\newblock Training support vector machines using {F}rank-{W}olfe methods.
\newblock {\em International Journal of Pattern Recognition and Artificial
  Intelligence}, 27(3), 2011.

\bibitem{Quaderni}
Emanuele Frandi, Maria~Grazia Gasparo, Ricardo {\~N}anculef, and Alessandra
  Papini.
\newblock Solution of classification problems via computational geometry
  methods.
\newblock {\em Recent Advances in Nonlinear Optimization and Equilibrium
  Problems: a Tribute to Marco D'Apuzzo, Quaderni di Matematica}, 27:201--226,
  2012.

\bibitem{UCI2010}
Andrew Frank and Arthur Asuncion.
\newblock {\em The {UCI KDD} Archive. http://kdd.ics.uci.edu}, 2010.

\bibitem{wolfe1954}
Marguerite Frank and Philip Wolfe.
\newblock An algorithm for quadratic programming.
\newblock {\em Naval Research Logistics Quarterly}, 1:95--110, 1956.

\bibitem{Friess1998KAF}
Thilo-Thomas Friess, Nello Cristianini, and Colin Campbell.
\newblock The kernel-adatron algorithm: A fast and simple learning procedure
  for support vector machines.
\newblock In {\em Proceedings of the Fifteenth International Conference on
  Machine Learning}, ICML '98, pages 188--196, San Francisco, CA, USA, 1998.
  Morgan Kaufmann Publishers Inc.

\bibitem{Garber2013}
Dan Garber and Elad Hazan.
\newblock A polynomial time conditional gradient algorithm with applications to
  online and stochastic optimization.
\newblock {\em CoRR}, abs/1301.4666, 2013.

\bibitem{gilbert1966}
Elmer Gilbert.
\newblock An iterative procedure for computing the minimum of a quadratic form
  on a convex set.
\newblock {\em SIAM Journal on Control}, 4(1):61--80, 1966.

\bibitem{GuelatMarcotte}
Jacques Gu\'elat and Patrice Marcotte.
\newblock Some comments on {W}olfe's ``away step''.
\newblock {\em Mathematical Programming}, 35:110--119, 1986.

\bibitem{GartnerJ09}
Bernd Gärtner and Martin Jaggi.
\newblock Coresets for polytope distance.
\newblock In John Hershberger and Efi Fogel, editors, {\em Symposium on
  Computational Geometry}, pages 33--42. ACM, 2009.

\bibitem{Hazan2012ProjectionFree}
Elad Hazan and Satyen Kale.
\newblock Projection-free online learning.
\newblock In {\em International Conference on Machine Learning}, 2012.

\bibitem{hofmann2008tutorial}
Thomas Hofmann, Bernhard Sch\"{o}lkopf, and Alexander Smola.
\newblock Kernel methods in machine learning.
\newblock {\em Annals of Statistics}, 36(3):1171--1220, 2008.

\bibitem{hsieh2008dual}
Cho-Jui Hsieh, Kai-Wei Chang, Chih-Jen Lin, S~Sathiya Keerthi, and
  Sellamanickam Sundararajan.
\newblock A dual coordinate descent method for large-scale linear {SVM}.
\newblock In {\em Proceedings of the 25th international conference on Machine
  learning}, pages 408--415. ACM, 2008.

\bibitem{Hsu02ComparisonMultiClassSVMs}
Chih-Wei Hsu and Chih-Jen Lin.
\newblock A comparison of methods for multiclass support vector machines.
\newblock {\em IEEE Transactions on Neural Networks}, 13(2):415--425, 2002.

\bibitem{Jaggi2013ICMLa}
Martin Jaggi.
\newblock Revisiting {F}rank-{W}olfe: Projection-free sparse convex
  optimization.
\newblock In {\em International Conference on Machine Learning (to appear)},
  2013.

\bibitem{joachims99making-large-scale-svms-practical}
Thorsten Joachims.
\newblock Making large-scale support vector machine learning practical.
\newblock In {\em Advances in kernel methods: support vector learning}, pages
  169--184. MIT Press, 1999.

\bibitem{SVMLIGHT}
Thorsten Joachims.
\newblock {\em {SVM-light} Support Vector Machine}, 2011.

\bibitem{Keerthi2000}
S.~S. Keerthi, S.~K. Shevade, C.~Bhattacharyya, and K.~R.K. Murthy.
\newblock A fast iterative nearest point algorithm for support vector machine
  classifier design.
\newblock {\em IEEE Transactions Neural Networks}, 11(1):124--136, January
  2000.

\bibitem{keerthi02generalizedSMO}
Sathiya Keerthi and Elmer Gilbert.
\newblock Convergence of a generalized {S}{M}{O} algorithm for {S}{V}{M}
  classifier design.
\newblock {\em Machine Learning}, 46(1-3):351--360, 2002.

\bibitem{Kumar2011}
Piyush Kumar and Alper Yildirim.
\newblock A linearly convergent linear-time first-order algorithm for support
  vector classification with a core set result.
\newblock {\em INFORMS Journal on Computing}, 23(3):377--391, 2011.

\bibitem{Jaggi2013ICMLb}
Simon Lacoste-Julien, Martin Jaggi, Mark Schmidt, and Patrick Pletscher.
\newblock Block-coordinate {F}rank-{W}olfe optimization for structural
  {S}{V}{M}s.
\newblock In {\em International Conference on Machine Learning (to appear)},
  2013.

\bibitem{Lopez2008}
Jorge L\'{o}pez, \'{A}lvaro Barbero, and Jos{\'e}~R. Dorronsoro.
\newblock On the equivalence of the {S}{M}{O} and {M}{D}{M} algorithms for
  {S}{V}{M} training.
\newblock In {\em Proceedings of the 2008 European Conference on Machine
  Learning and Knowledge Discovery in Databases - Part I}, ECML PKDD '08, pages
  288--300, Berlin, Heidelberg, 2008. Springer-Verlag.

\bibitem{lopez2012convergence}
Jorge Lopez and Jos{\'e}~R Dorronsoro.
\newblock The convergence rate of the {M}{D}{M} algorithm.
\newblock In {\em Neural Networks (IJCNN), The 2012 International Joint
  Conference on}, pages 1--7. IEEE, 2012.

\bibitem{mitchell1974finding}
B.F. Mitchell, V.F. Dem'yanov, and V.N. Malozemov.
\newblock Finding the point of a polyhedron closest to the origin.
\newblock {\em SIAM Journal on Control}, 12(1):19--26, 1974.

\bibitem{nocedal}
Jorge Nocedal and Stephen Wright.
\newblock {\em {Numerical optimization (2nd edition)}}.
\newblock Springer, 2006.

\bibitem{StochasticFW10}
Hua Ouyang and Alexander Gray.
\newblock Fast stochastic {F}rank-{W}olfe algorithms for nonlinear {S}{V}{M}s.
\newblock In {\em SDM}, pages 245--256, 2010.

\bibitem{platt99smo-seminal}
John Platt.
\newblock Fast training of support vector machines using sequential minimal
  optimization.
\newblock In {\em Advances in kernel methods: support vector learning}, pages
  185--208. MIT Press, 1999.

\bibitem{R}
{R Core Team}.
\newblock {\em R: A Language and Environment for Statistical Computing}.
\newblock R Foundation for Statistical Computing, Vienna, Austria, 2012.
\newblock {ISBN} 3-900051-07-0.

\bibitem{Rai09Streaming-SVM-usingMEB}
Piyush Rai, Hal Daum\'{e}, and Suresh Venkatasubramanian.
\newblock Streamed learning: one-pass {S}{V}{M}s.
\newblock In {\em IJCAI'09: Proceedings of the 21st international jont
  conference on Artifical intelligence}, pages 1211--1216. Morgan Kaufmann
  Publishers, 2009.

\bibitem{Robinson1982Perturbed}
Stephen Robinson.
\newblock Generalized equations and their solutions, part {I}{I}: Applications
  to nonlinear programming.
\newblock In {\em Optimality and Stability in Mathematical Programming},
  volume~19 of {\em Mathematical Programming Studies}, pages 200--221. Springer
  Berlin Heidelberg, 1982.

\bibitem{active-set-implementation06scheinberg}
Katya Scheinberg.
\newblock An efficient implementation of an active set method for {S}{V}{M}s.
\newblock {\em Journal of Machine Learning Research}, 7:2237--2257, 2006.

\bibitem{advances99}
Bernard Sch\"{o}lkopf, Christopher Burges, and Alexander Smola, editors.
\newblock {\em Advances in kernel methods: support vector learning}.
\newblock MIT Press, 1999.

\bibitem{Smola01Learning}
Bernard Sch\"{o}lkopf and Alexander Smola.
\newblock {\em Learning with Kernels: Support Vector Machines, Regularization,
  Optimization, and Beyond}.
\newblock MIT Press, Cambridge, MA, USA, 2001.

\bibitem{SS2}
Bernhard Sch\"{o}lkopf and Alexander~J. Smola.
\newblock Sparse greedy matrix approximation for machine learning.
\newblock In {\em Proceedings of the 17th International Conference on Machine
  Learning}, pages 911--918, 2001.

\bibitem{Shalev-Shwartz12OnlineConvexOpt}
Shai Shalev-Shwartz.
\newblock Online learning and online convex optimization.
\newblock {\em Foundations and Trends in Machine Learning}, 4(2):107--194,
  2012.

\bibitem{Shalev-Shwartz11Pegasos}
Shai Shalev-Shwartz, Yoram Singer, Nathan Srebro, and Andrew Cotter.
\newblock Pegasos: primal estimated sub-gradient solver for {S}{V}{M}.
\newblock {\em Mathematical Programming}, 127(1):3--30, 2011.

\bibitem{shalev2012stochastic}
Shai Shalev-Shwartz and Tong Zhang.
\newblock Stochastic dual coordinate ascent methods for regularized loss
  minimization.
\newblock {\em Journal of Machine Learning Research}, 14:567?599, 2013.

\bibitem{steinwart2003sparseness}
Ingo Steinwart.
\newblock Sparseness of support vector machines.
\newblock {\em The Journal of Machine Learning Research}, 4:1071--1105, 2003.

\bibitem{LibCVM09}
Ivor Tsang, Andras Kocsor, and James Kwok.
\newblock {\em {LibCVM} Toolkit. www.c2i.ntu.edu.sg/ivor/cvm.html}, 2011.

\bibitem{coreSVMs05tsang}
Ivor Tsang, James Kwok, and Pak-Ming Cheung.
\newblock Core vector machines: Fast {S}{V}{M} training on very large data
  sets.
\newblock {\em Journal of Machine Learning Research}, 6:363--392, 2005.

\bibitem{coreSVMs-generalized06tsang}
Ivor Tsang, James Kwok, and Jacek Zurada.
\newblock Generalized core vector machines.
\newblock {\em IEEE Transactions on Neural Networks}, 17(5):1126--1140, 2006.

\bibitem{Tsochantaridis2005StructuredSVMs}
Ioannis Tsochantaridis, Thorsten Joachims, Thomas Hofmann, and Yasemin Altun.
\newblock Large margin methods for structured and interdependent output
  variables.
\newblock {\em Journal of Machine Learning Research}, 6:1453--1484, 2005.

\bibitem{Wang2010OnlineCore}
Di~Wang, Bo~Zhang, Peng Zhang, and Hong Qiao.
\newblock An online core vector machine with adaptive {M}{E}{B} adjustment.
\newblock {\em Pattern Recognition}, 43:3468--3482, 2010.

\bibitem{wolfe1970}
Philip Wolfe.
\newblock Convergence theory in nonlinear programming.
\newblock In J.~Abadie, editor, {\em Integer and Nonlinear Programming}, pages
  1--36. North-Holland, Amsterdam, 1970.

\bibitem{Gondzio}
Kristian Woodsend and Jacek Gondzio.
\newblock Exploiting separability in large scale linear support vector machine
  training.
\newblock {\em Computational Optimization and Applications}, 29:241--269, 2011.

\bibitem{yildirim08}
Emre~Alper Yildirim.
\newblock Two algorithms for the minimum enclosing ball problem.
\newblock {\em SIAM Journal on Optimization}, 19(3):1368--1391, 2008.

\bibitem{yuan2012recent}
Guo-Xun Yuan, Chia-Hua Ho, and Chih-Jen Lin.
\newblock Recent advances of large-scale linear classification.
\newblock {\em Proceedings of the IEEE}, 100(9):2584--2603, 2012.

\bibitem{Zhang03Greedy}
Tong Zhang.
\newblock Sequential greedy approximation for certain convex optimization
  problems.
\newblock {\em IEEE Transactions on Information Theory}, 49(3):682 -- 691, mar
  2003.

\end{thebibliography}

\end{document}